\documentclass[letterpaper,12pt]{extarticle}
\usepackage[margin=1in,letterpaper]{geometry}
\usepackage[utf8]{inputenc}
\usepackage{amsmath,amssymb,amsthm}
\usepackage{nicefrac,enumerate}
\usepackage{algorithm}
\usepackage{algpseudocode}
\usepackage{hyperref}
\usepackage{lipsum}
\usepackage{amsfonts}
\usepackage{graphicx}
\usepackage{epstopdf}
\usepackage{subcaption}

\newtheorem{theorem}{Theorem}[section]
\newtheorem*{maintheorem}{Theorem}
\newtheorem{corollary}[theorem]{Corollary}
\newtheorem{definition}[theorem]{Definition}
\newtheorem{proposition}[theorem]{Proposition}
\newtheorem{lemma}[theorem]{Lemma}
\newtheorem{ex}[theorem]{Example}
\newtheorem*{remark}{Remark}

\newcommand{\relu}{\text{ReLU}}
\newcommand{\I}{I_x^{\alpha}}
\newcommand{\Iy}{I_y^{\alpha}}
\newcommand{\Io}{I_{x_0}^{\alpha}}
\newcommand{\spn}{\text{span}}
\newcommand{\fphi}{\langle x,\phi_i\rangle}
\newcommand{\yphi}{\langle y,\phi_i\rangle}

\newcommand{\Ta}{C_\alpha}
\newcommand{\RR}{\mathbb{R}}
\newcommand{\inv}{^{-1}}

\usepackage{tcolorbox}
\tcbset{width = \textwidth-2cm, center,before skip=20pt plus 2pt,after skip=20pt plus 2pt}

\title{Injectivity of ReLU layers:\\ Tools from Frame Theory}

\author{
Daniel Haider\footnote{Acoustics Research Institute, Vienna, Austria (daniel.haider@oeaw.ac.at).},
Martin Ehler\footnote{University of Vienna, Faculty of Mathematics, Vienna, Austria (martin.ehler@univie.ac.at).}, and
Peter Balazs\footnote{Acoustics Research Institute, Vienna, Austria (peter.balazs@oeaw.ac.at).}
}
\date{November 2024}

\begin{document}

\maketitle

\begin{abstract}
Injectivity is the defining property of a mapping that ensures no information is lost and any input can be perfectly reconstructed from its output. By performing hard thresholding, the ReLU function naturally interferes with this property, making the injectivity analysis of ReLU layers in neural networks a challenging yet intriguing task that has not yet been fully solved.
This article establishes a frame theoretic perspective to approach this problem.
The main objective is to develop a comprehensive characterization of the injectivity behavior of ReLU layers in terms of all three involved ingredients: $(i)$ the weights, $(ii)$ the bias, and $(iii)$ the domain where the data is drawn from. Maintaining a focus on practical applications, we limit our attention to bounded domains and present two methods for numerically approximating a maximal bias for given weights and data domains. These methods provide sufficient conditions for the injectivity of a ReLU layer on those domains and yield a novel practical methodology for studying the information loss in ReLU layers. Finally, we derive explicit reconstruction formulas based on the duality concept from frame theory.


\end{abstract}

\section{Introduction}
The \textbf{Re}ctified \textbf{L}inear \textbf{U}nit defined as $\relu(s)=\max(0,t)$ for $t\in \RR$ has become indispensable as a non-linear activation function in artificial neural networks \cite{deepsparse11,hin12,dlb,relu10}. Since originally introduced as a way to regularize the gradients in deep network architectures,
there have been hardly any networks that do not use ReLU activation or some derivation of it \cite{clevert2016elu, he2015prelu, maas2013lrelu}.

A \emph{ReLU layer} $C_\alpha(x) = \relu(Cx-\alpha)$ is the composition of an affine linear map comprising the multiplication by a weight matrix $C\in \RR^{m\times n}$ and the shift by a bias vector $\alpha\in \RR^m$, with an entry-wise application of ReLU on its output. The injectivity of a ReLU layer, and with that, the possibility of inverting it and inferring $x$ from $C_\alpha(x)$, is a desired property in various applications. There has been interest in building injective models and inverting them on their range to regularize ill-posed inverse problems or designing injective generative models, such as normalizing flows, for manifold learning or compressed sensing \cite{kothari2021injflows,puthawala2022injflows,bora2017compressed}. Generally, knowing if a layer involves a loss of information increases the interpretability of the network immensely. As such, one can use an inverse mapping to trace back each layer output to its source input, which can help to decipher the decision-making process, diagnose model behavior, identify biases, and
study accountability. Although the injectivity of ReLU layers has received increasing attention in recent years, it is still not fully understood, especially when it comes to the numerical verification in practice.\\

The goal of this paper is to demystify the injectivity of a single ReLU layer as a deterministic non-linear map on a comprehensive level. This involves a thorough analysis of fundamental properties of ReLU layers that are relevant for applications, and different characterizations of injectivity with respect to all properties involved, namely weights, bias, and input domain. By translating selected theoretical results into algorithmic solutions we present novel ways of verifying injectivity on bounded input domains in practice. Explicit reconstruction formulas and their implementation, together with a brief local stability analysis complete the claim of the paper.\\

The methodology to achieve these goals is based on \textit{frame theory}, a mathematical paradigm that deals with stable, potentially redundant, and invertible representations of functions by means of inner products \cite{frames}. To make use of this machinery and all tools that come with it, we shall consider a weight matrix $C\in \RR^{m\times n}$ in terms of its row vectors
\begin{equation}\label{eq:init}
    C = \begin{pmatrix}
    -  \phi_1  -\\
      \vdots  \\
    -  \phi_m  -
\end{pmatrix}.
\end{equation}
If $C$ has more rows than columns ($m\geq n$) and full rank, the collection of row vectors $(\phi_i)_{i=1}^m$ is a spanning set for the domain space $\RR^n$. In other words, the associated linear transform $C: \RR^n \rightarrow \RR^m$ is injective. In the context of frame theory, we say that $(\phi_i)_{i=1}^m$ is a \emph{frame} for $\RR^n$ \cite{frames} and $C$ is the associated \textit{analysis operator}. The application of $C$ to $x$ is interpreted as measuring the correlation of
$x$ to all frame vectors $\phi_i$ via $x \mapsto (\fphi)_{i=1}^m$.
The resulting so-called \textit{frame coefficients} $(\fphi)_{i=1}^m$ give a (potentially redundant) representation of $x$, from which we can always infer $x$ explicitly. Roughly speaking, frame theory is the study of ``quantifying'' injectivity of a redundant representation in the sense of its numerical stability and constructing recovery maps with desired properties via the concept of dual frames. In this sense, it provides exactly the right tools for the goal of the paper.

Although frame theory deals with linear representations, it has been shown to be suitable for non-linear problems as well. One example is phase-retrieval \cite{balan06} where, in the real setting, one asks for the injectivity and stability of the map
\begin{align}\label{eq:phase}
\begin{split}
   C_{\vert\;.\;\vert}: \mathbb{R}^n/\{\pm 1\}&\rightarrow \mathbb{R}^m\\
   x&\mapsto \left(\big\vert \langle x,\phi_i\rangle\big\vert \right)_{i=1}^m.
\end{split}
\end{align}
It is well known that $C_{\vert\;.\;\vert}$ is injective if and only if for any partition of the collection $(\fphi)_{i=1}^m$ into two sub-collections, at least one of them is a frame \cite{balan06}.
Inspired by this approach, we may write a ReLU layer analogously as the map
\begin{align}\label{eq:reluintro}
\begin{split}
    C_\alpha:\RR^n &\rightarrow \mathbb{R}^m\\
    x&\mapsto\left(\relu(\langle x,\phi_i\rangle-\alpha_i)\right)_{i=1}^m.
\end{split}
\end{align}
In \cite{puth22} (using another terminology) it has been shown that $C_\alpha$ is injective if and only if for any $x\in \RR^n$ the frame vectors that are not affected by ReLU are a frame. We shall call a frame with this property $\alpha$\emph{-rectifying} on $\RR^n$. While this characterization forms the basis of our work, we will focus on the practical assumption that, in applications, it may not always be most informative to consider the entire $\mathbb{R}^n$ as the domain where a ReLU layer should be injective. In fact, when considering standard normalization schemes of data sets for training and testing neural networks, it seems more reasonable to study injectivity only on bounded subsets $K\subseteq \mathbb{R}^n$ where the data is assumed or processed to live in. Indeed, a ReLU layer might be injective on $K$ but not on $\RR^n$. In this paper, we show that the choice of the data domain can have a profound impact on the injectivity behavior of $C_\alpha$, and discuss how to leverage this fact in practice. Prototypical examples of such domains include the closed ball in $\RR^n$ of radius $r>0$, given by $\mathbb{B}_r=\{x\in\RR^n:\|x\|\leq r\},$ the closed donut arising by excluding small data points $\mathbb{D}_{r,s} = \overline{\mathbb{B}_r\setminus\mathbb{B}_s}$ with $s<r$, and the sphere $\mathbb{S}=\{x\in\RR^n:\|x\|= 1\}$ \cite{backprop12, normalization20}. Furthermore, when considering two consecutive ReLU layers, we can restrict the injectivity property of the second one to domains that lie in $\RR^n_{+}$, such as the non-negative closed ball $\mathbb{B}_r^+ = \mathbb{B}_r\cap\RR^n_{+}$.
The restriction of the domain of $\Ta$ from $\RR^n$ to a bounded $K\subseteq \mathbb{R}^n$ increases the feasibility and applicability of the problem in practice, while also making the mathematical setting more versatile. Furthermore, it establishes a natural connection to the bias vector and provides a framework where we can control the injectivity behavior through these two ingredients. This in turn allows us to approach the injectivity analysis also algorithmically.\\

The main theoretical component of this paper is a comprehensive characterization of the injectivity of a ReLU layer as a deterministic map, summarized in the following theorem.

\begin{maintheorem}\label{thm:bigthm}
    Let $\Phi=(\phi_i)_{i=1}^m$ be a frame for $\RR^n$, $\alpha \in \mathbb{R}^m$, and $\emptyset\neq K\subseteq \RR^n$. Under the assumptions that $\Phi$ includes a unique most correlated basis everywhere (Def \ref{def:mostcorr}), $K$ is open or strictly convex, and bias-exact for $\Phi$ (Def. \ref{def:uniquemaxa}), the following are equivalent.
    \begin{enumerate}
        \item[(i)] The ReLU layer $\Ta$, associated with $\Phi$ and $\alpha$, is injective on $K$.
        \item[(ii)] The frame $\Phi$ is $\alpha$-rectifying on $K$
        (Def \ref{alpharect}, Thm \ref{reluinj1}, \ref{reluinj0}).
        \item[(iii)] The domain $K$ lies in the maximal domain $\mathcal{K}_{\alpha}^*$ (Thm \ref{thm:bas}).
        \item[(iv)] The values of the bias $\alpha$ do not exceed the values of the maximal bias $\alpha_K^{\sharp}$ (Thm \ref{thm:maxa}).
    \end{enumerate}
    For any bias $\alpha$ the maximal domain $\mathcal{K}_{\alpha}^*$ can be constructed explicitly as the union of intersections of closed affine half-spaces. For any domain $K$ the maximal bias $\alpha_K^{\sharp}$ can be approximated numerically via sampling or via the inscribing polytope associated with $\Phi$.
\end{maintheorem}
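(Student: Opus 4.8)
The plan is to prove the four-way equivalence by first establishing the pointwise characterization $(i)\Leftrightarrow(ii)$ and then reading off $(iii)$ and $(iv)$ as, respectively, the domain-side and bias-side repackaging of it. For each $x$ write $S_x=\{\,i:\fphi>\alpha_i\,\}$ for the surviving indices, i.e. those coordinates on which ReLU acts linearly. The easy direction $(ii)\Rightarrow(i)$ goes as follows: if $x,y\in K$ and $\Ta(x)=\Ta(y)$, then on each $i\in S_x$ both outputs are strictly positive and equal, forcing $\langle x-y,\phi_i\rangle=0$; since $(\phi_i)_{i\in S_x}$ spans $\RR^n$ by the $\alpha$-rectifying hypothesis, we get $x=y$. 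Notably this direction only needs spanning at the single point $x$, which is what later makes the maximal domain a pointwise object.

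The converse $(i)\Rightarrow(ii)$ is where all three structural assumptions are consumed, and I expect it to be the main obstacle. If rectification fails at some $x_0\in K$, I would pick $0\neq u\perp\spn\{\phi_i:i\in S_{x_0}\}$ and attempt to move to $x_0+tu$, a perturbation that leaves every surviving coordinate unchanged and hence preserves those outputs. The real difficulty is the set of \emph{tie} indices $j$ with $\langle x_0,\phi_j\rangle=\alpha_j$ sitting exactly at the ReLU kink: an arbitrary move may switch such a coordinate on and change the output. One must therefore choose a direction and a sign of $t$ that keeps all ties inactive while remaining inside $K$. Here the ``unique most correlated basis everywhere'' hypothesis controls precisely these tie configurations, bias-exactness rules out the degenerate alignments of $\alpha$ with the frame geometry, and openness (or strict convexity) of $K$ guarantees the perturbed point stays in $K$. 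Combining these yields a genuine collision $\Ta(x_0)=\Ta(x_0+tu)$ with $x_0+tu\neq x_0$, contradicting injectivity.

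Given the pointwise equivalence, $(iii)$ is essentially a definition. Setting $\mathcal{K}_\alpha^*=\{x\in\RR^n:(\phi_i)_{i\in S_x}\text{ spans }\RR^n\}$, injectivity on any set is equivalent to pointwise rectification on that set, so $\mathcal{K}_\alpha^*$ is the largest injectivity domain and $(i)\Leftrightarrow(ii)\Leftrightarrow(iii)$ is immediate: $\Ta$ is injective on $K$ iff every $x\in K$ rectifies iff $K\subseteq\mathcal{K}_\alpha^*$. For the explicit description, I would enumerate the (minimal) spanning index sets $S$ and note that $x\in\mathcal{K}_\alpha^*$ exactly when $\langle x,\phi_i\rangle\ge\alpha_i$ for all $i$ in some spanning $S$, giving
\[
\mathcal{K}_\alpha^* \;=\; \bigcup_{S\text{ spanning}}\ \bigcap_{i\in S}\bigl\{x\in\RR^n:\langle x,\phi_i\rangle\ge\alpha_i\bigr\},
\]
a union of intersections of closed affine half-spaces; the passage from the open inequalities defining $S_x$ to the closed ones above is licensed once more by the most-correlated-basis and bias-exactness hypotheses, which ensure nothing is lost on the boundary.

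For $(iv)$ I would exploit monotonicity in the bias: raising any coordinate $\alpha_i$ can only remove indices from every $S_x$, so $\alpha\le\alpha'$ componentwise implies $\mathcal{K}_{\alpha'}^*\subseteq\mathcal{K}_\alpha^*$, whence $\{\alpha:K\subseteq\mathcal{K}_\alpha^*\}$ is downward closed in the componentwise order. Using bias-exactness I would then show this down-set is precisely the box below its coordinatewise supremum $\alpha_K^{\sharp}$, so that $K\subseteq\mathcal{K}_\alpha^*$ iff $\alpha\le\alpha_K^{\sharp}$, closing $(iii)\Leftrightarrow(iv)$; the attainment and box-shape of the feasible region is the one nontrivial point here and is exactly what bias-exactness is designed to supply. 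The constructive claims then follow: the displayed half-space formula makes $\mathcal{K}_\alpha^*$ computable as soon as the spanning subsets of $\Phi$ are enumerated, while $\alpha_K^{\sharp}$ is approximated either by sampling points $x\in K$ and intersecting the per-point bias constraints, or by replacing $K$ with an inscribed polytope and optimizing the bias over its finitely many vertices. I would defer the consistency and convergence of these two numerical schemes to the algorithmic sections.
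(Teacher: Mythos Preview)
Your outline has two genuine gaps, both stemming from a misallocation of the hypotheses.

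First, the equivalence $(i)\Leftrightarrow(ii)$ in the paper uses \emph{only} the topology of $K$ (open or convex for $(ii)\Rightarrow(i)$, open or strictly convex for $(i)\Rightarrow(ii)$); the most-correlated-basis and bias-exactness assumptions play no role there. Your $(ii)\Rightarrow(i)$ argument claims that $(\phi_i)_{i\in S_x}$ spans by the $\alpha$-rectifying hypothesis, but $\alpha$-rectifying is defined via the \emph{non-strict} active set $I_x^\alpha=\{i:\fphi\ge\alpha_i\}$, which in general properly contains your strict set $S_x$. At a tie index $i$ with $\fphi=\alpha_i$ you only learn $\yphi\le\alpha_i$, not equality, so you cannot conclude $x=y$ from spanning of $\Phi_{I_x^\alpha}$ alone. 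The paper resolves this by taking $x_\lambda=(1-\lambda)x+\lambda y$ and observing $I_{x_\lambda}^\alpha\subseteq I_x^\alpha\cap I_y^\alpha$; openness or convexity of $K$ forces $x_\lambda\in K$, so $\Phi_{I_{x_\lambda}^\alpha}$ is a frame, hence so is $\Phi_{I_x^\alpha\cap I_y^\alpha}$, and on that set inner products agree. Conversely, your worry about ties in $(i)\Rightarrow(ii)$ is self-inflicted: if you work with $I_x^\alpha$ rather than $S_x$, then $i\notin I_x^\alpha$ means $\fphi<\alpha_i$ strictly, so a small perturbation along $r\in\spn(\phi_i)_{i\in I_x^\alpha}^\perp$ keeps all inactive indices inactive with no delicate sign choice needed; openness (or strict convexity) of $K$ is what keeps the perturbed points inside.

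Second, your route to $(iv)$ via ``the down-set is a box'' is not an argument. Downward closure in the componentwise order does not imply the feasible set $\{\alpha:\Phi\text{ is }\alpha\text{-rectifying on }K\}$ has the form $\{\alpha\le\alpha_K^\sharp\}$; a priori it could be, say, cut by a diagonal constraint. The paper does not deduce the box shape abstractly: it \emph{constructs} $\alpha_K^\sharp$ coordinate by coordinate via the partition $K_i=\{x\in K:i\in J^*(x)\}$ (this is where uniqueness of the most correlated basis enters) and the refined sets $K_i^\sharp$, and then proves directly that increasing any single $(\alpha_K^\sharp)_i$ produces a witness $x_0$ with $I_{x_0}^\alpha=J^*(x_0)\setminus\{i\}$, hence not a frame. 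Bias-exactness ($K_i^\sharp\neq\emptyset$) is precisely the hypothesis that guarantees such a witness exists for every coordinate; it is not used anywhere in $(i)\Leftrightarrow(ii)\Leftrightarrow(iii)$.
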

The main practical component of the paper comprises two algorithmic constructions of biases that approximate the maximal bias $\alpha_K^{\sharp}$ from the theorem above in different situations. These can be used to study and effectively control the injectivity behavior of a ReLU layer in practice.
Moreover, using the duality concept from frame theory we derive inversion formulas for injective ReLU layers that can be implemented easily as locally linear operators.

\subsubsection*{Related work}
The approach in classical phase-retrieval in $\RR^n$ by Balan et al. in \cite{balan06} was decisive for the idea of characterizing the injectivity of a ReLU layer in terms of a property of the associated frame. The same approach is taken by Alharbi et al. to study the recovery of vectors from saturated inner measurements \cite{alharbi2024sat}.
In a machine learning context, Puthawala et al. have introduced the notion of \emph{directed spanning sets} in \cite{puth22} as an equivalent concept to the \emph{admissibility} condition by Bruna et al. in \cite{bruna14} to characterize a ReLU layer to be injective on $\RR^n$. While the primary goal in \cite{puth22} is the study of globally injective ReLU-networks on $\RR^n$, and the one in \cite{bruna14} is a Lipschitz stability analysis of ReLU layers, our goal is to demystify the injectivity of a single ReLU layer in a more realistic setting, namely with any given weights and biases on bounded input \textit{data} domains $K\subseteq\RR^n$, and to provide methods of verifying injectivity in practice. This extends the ideas and methods by Haider et al. introduced in \cite{haider2023relu} significantly.
Further related preprints are by Behrmann et al., who study the pre-images of ReLU layers from a geometric point of view \cite{behr18} and by Maillard et al., which focus on injectivity of ReLU layers with random weights \cite{maillard2023injectivity}.



\subsubsection*{Outline}
The paper is divided into four sections.
In Section \ref{sec:basic}, we introduce \textit{$\alpha$-rectifying frames} as a characterizing family of frames that are associated with $\relu$ layers that are injective on a given input domain and discuss fundamental properties that are crucial for applications. In Section \ref{sec:interplay}, we study the interplay of input domain and bias vector and derive a maximal domain and a maximal bias such that the associated ReLU layers become critically injective. This leads to two further characterizations of injectivity.
Moreover, we present two methods to approximate the maximal bias and provide algorithmic solutions to apply them in practice. Section \ref{chap:duality} is dedicated to explicit reconstruction formulas for injective ReLU layers and a brief local stability analysis of the recovery map.

\section{Frames and the Injectivity of ReLU Layers}\label{sec:basic}
When applying a matrix $C\in \RR^{m\times n}$ to a vector $x\in \RR^n$ we can reconstruct $x$ if and only if the collection of row vectors of $C$ spans $\RR^n$. Frame theory offers a definition of a spanning set that allows us to quantify ``how good'' it spans $\RR^n$.
So, throughout this paper, we denote a collection of $m$ vectors by $\Phi=(\phi_i)_{i\in I}\subset \mathbb{R}^n$, using index sets $I$ with $\vert I\vert=m\geq n$. Then, $\Phi$ is a \textit{frame} for $\RR^n$ if there are constants $0< A\leq B$, called the \textit{frame bounds} of $\Phi$, such that
\begin{equation}\label{eq:framedef}
    A\cdot \|x\|^2\leq \sum_{i\in I}\vert \fphi \vert^2 \leq B\cdot\| x\|^2
\end{equation}
for all $x\in\mathbb{R}^n$ \cite{casfin12}. For $J\subseteq I$, we denote by $\Phi_J=(\phi_i)_{i\in J}$ the sub-collection of $\Phi$ with respect to the index set $J$. Any frame with $m=n$ is a basis and we will always assume that $\Phi$ does not contain the zero vector. It is easy to see that \eqref{eq:framedef} is equivalent to $\Phi$ being a spanning set and that the frame bounds $A, B$ reflect the numerical stability properties of the representation of $x$ under $\Phi$.
The \emph{analysis operator} associated with $\Phi$ is given as
\begin{align}\label{eq:frameana}
\begin{split}
    C:\mathbb{R}^n&\rightarrow \mathbb{R}^m\\
    x&\mapsto\left(\fphi\right)_{i\in I}
\end{split}
\end{align}
mapping a vector $x$ to its \textit{frame coefficients} as discussed in the introduction. So finally, we have that $\Phi$ is a frame, if and only if $C$ is injective.
Together with the assumption on the input domain mentioned in the introduction, this motivates to define a ReLU layer as follows.
\begin{definition}[ReLU layer]\label{relulayer}
A $\relu$ layer associated with a collection of weight vectors $\Phi= (\phi_i)_{i\in I}\subset \RR^n$, a bias vector $\alpha = (\alpha_1,...,\alpha_m)^{\top}\in \mathbb{R}^m$ and an input domain $K\subseteq \RR^n$ is defined as the non-linear map given by
\begin{align*}
    C_\alpha:K &\rightarrow \mathbb{R}^m\\
    x&\mapsto\left(\operatorname{ReLU}(\langle x,\phi_i\rangle-\alpha_i)\right)_{i=1}^m.
\end{align*}
\end{definition}
To encode the injectivity of $C_\alpha$ directly in terms of $\Phi$ we introduce a family of frames called \textit{$\alpha$-rectifying frames.}

\subsection{Alpha-rectifying frames}
For any given $x\in K$ the shift by the bias and the application of the ReLU function on the frame coefficients act as a thresholding mechanism that neglects all frame elements $\phi_i$ where $\fphi < \alpha_i$, rendering them \textit{inactive}. According to this observation, for $x\in \mathbb{R}^n$ and $\alpha\in\mathbb{R}^m$ we are interested in the index set associated with those frame elements that are \textit{active} for $x$ and $\alpha$. We shall denote it by
\begin{equation}\label{eq:index}
    \I=\{i\in I:\langle x,\phi_i\rangle\geq\alpha_i\}.
\end{equation}
Dual to this notion, for $i\in I$ and $\alpha\in \RR^m$ we denote the closed affine half-space of points where the frame element $\phi_i$ is active by
\begin{equation}\label{eq:omega0}
    \Omega_i^\alpha=\{x\in \mathbb{R}^n:\langle x,\phi_i\rangle\geq\alpha_i\}.
\end{equation}
The following definition gives the frame theoretic perspective to \cite[Definition 1]{puth22}.
\begin{definition}[$\alpha$-rectifying frames]\label{alpharect}
The collection $\Phi=(\phi_i)_{i\in I}\subset \mathbb{R}^n$ is called $\alpha$-rectifying on $K\subseteq \RR^n$ for $\alpha \in \mathbb{R}^m$ if for all $x \in K$ the sub-collection of active frame elements $\Phi_{\I} = (\phi_i)_{i\in \I}$ is a frame for $\mathbb{R}^n$. 
\end{definition}
Figure \ref{fig:fig1} illustrates the two notions in \eqref{eq:index} and \eqref{eq:omega0} on the closed unit ball $\mathbb{B}$ in $\RR^2$ (left, mid), and shows a simple example of an $\alpha$-rectifying frame (right).

While many papers only consider ReLU layers without bias vectors and with input on whole $\RR^n$~\cite{puth22, maillard2023injectivity}, for our approach, bias vectors and the input domain play important roles, and the following basic properties of ReLU layers become crucial. Throughout the paper, for $\alpha,\alpha'\in \RR^m$ we shall use the notation $\alpha\leq \alpha'$ for $\alpha_i\leq \alpha'_i$ for all $i\in I$, and $\alpha< \alpha'$ for $\alpha_i< \alpha'_i$ for all $i\in I$.
\begin{proposition}\label{prop:inclusive}
    Let $\Phi$ be $\alpha'$-rectifying on $K'$. The following holds.
    \begin{enumerate}[(i)]
        \item $\Phi$ is $\alpha'$-rectifying on $K$ for every $K\subseteq K'$.
        \item $\Phi$ is $\alpha$-rectifying on $K'$ for every $\alpha\leq \alpha'$.
    \end{enumerate}
\end{proposition}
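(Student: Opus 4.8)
The plan is to deduce both statements from two elementary monotonicity facts, one concerning the active index sets $\I$ and one concerning the frame property itself. Recall from Definition \ref{alpharect} that being $\alpha$-rectifying on a domain means that for \emph{every} point of that domain the associated active sub-collection is a frame for $\RR^n$; hence both claims reduce to verifying that this pointwise condition persists under the indicated shrinking of the domain or lowering of the bias.

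For part (i) I would argue directly from the definition. If $\Phi$ is $\alpha'$-rectifying on $K'$, then $\Phi_{I_x^{\alpha'}}$ is a frame for every $x\in K'$. Since $K\subseteq K'$, every $x\in K$ also lies in $K'$, so the same conclusion holds for all $x\in K$, which is precisely $\alpha'$-rectifying on $K$. No further work is needed here.

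For part (ii) the key observation is that lowering the bias can only activate more frame elements: if $\alpha\leq\alpha'$, then for every $x$ we have $I_x^{\alpha'}\subseteq\I$, because $i\in I_x^{\alpha'}$ forces $\fphi\geq\alpha'_i\geq\alpha_i$ and hence $i\in\I$. I would then invoke the fact that, in finite dimensions, enlarging a sub-collection that is already a frame by nonzero vectors keeps it a frame. Indeed, if $\Phi_{I_x^{\alpha'}}$ satisfies the lower bound in \eqref{eq:framedef} with constant $A$, then summing over the larger index set $\I\supseteq I_x^{\alpha'}$ only increases the middle term, so $A$ remains a valid lower frame bound; the upper bound stays finite because it is dominated by the upper frame bound of the full frame $\Phi$. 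Fixing an arbitrary $x\in K'$ and applying this to $I_x^{\alpha'}\subseteq\I$ shows that $\Phi_{\I}$ is a frame, which is exactly the statement that $\Phi$ is $\alpha$-rectifying on $K'$.

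The argument involves no genuine obstacle; the only point requiring care is the justification that passing to a superset preserves the frame property, namely that the lower frame bound is inherited and the upper bound remains finite. This is precisely where the standing assumptions are used: that $\Phi$ contains no zero vector, and that $\Phi$ is itself a frame and therefore supplies a uniform upper bound controlling every sub-collection.
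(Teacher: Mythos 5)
Your proof is correct and is exactly the argument the paper relies on: the paper states Proposition \ref{prop:inclusive} without proof, treating (i) as immediate from Definition \ref{alpharect} and using the monotonicity $I_x^{\alpha'}\subseteq I_x^{\alpha}$ together with the fact that a superset of a frame is again a frame tacitly (the same step appears inside the proof of Theorem \ref{reluinj1}, where $\Phi_{I_x^{\beta}}$ is declared a frame because $\Phi_{I_x^{\alpha}}$ is one and $\beta<\alpha$). One cosmetic remark: the no-zero-vector assumption is not actually needed for the superset step, since adding vectors can only increase the lower frame sum and any finite collection in $\RR^n$ automatically has a finite upper frame bound by Cauchy--Schwarz.
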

Hence, we are naturally interested in knowing the largest possible domains and biases that allow the $\alpha$-rectifying property. In Section \ref{sec:interplay} we prove that indeed one obtains full characterizations of the injectivity of a ReLU layer via a \textit{maximal domain} and a \textit{maximal bias}.

\begin{figure}[t]
    \centering
    \begin{subfigure}[t]{0.30\textwidth}
        \centering
        \includegraphics[width=\textwidth]{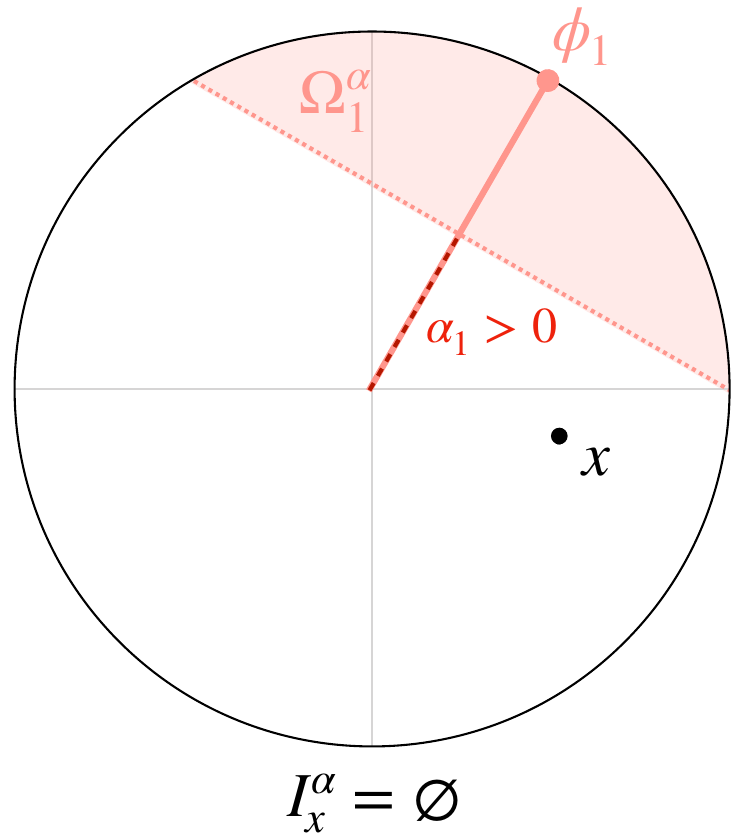}
        
    \end{subfigure}%
    \hfill
    \begin{subfigure}[t]{0.30\textwidth}
        \centering
        \includegraphics[width=\textwidth]{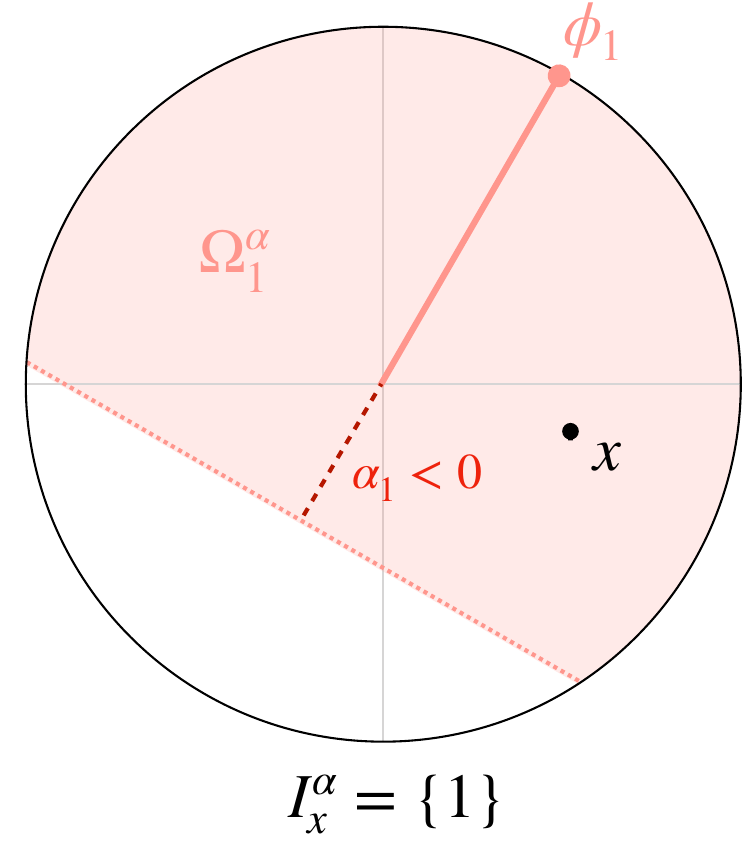}
    \end{subfigure}%
    \hfill
    \begin{subfigure}[t]{0.32\textwidth}
        \centering
        \includegraphics[width=\textwidth]{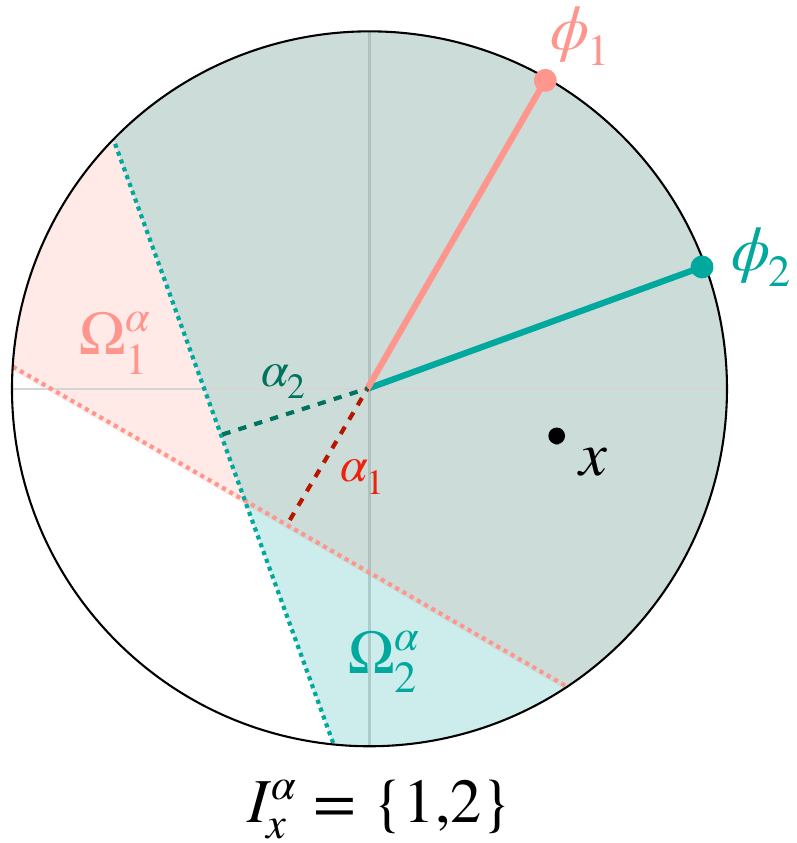}
    \end{subfigure}%

    \caption{Illustrations of the notions $\I$ and $\Omega_i^\alpha$ related to active frame elements on $\mathbb{B}$. The frame $(\phi_1,\phi_2)$ in the most right example is $\alpha$-rectifying on $K$ if $K\subseteq (\Omega_1^\alpha\cap \Omega_2^\alpha)$.}
    \label{fig:fig1}
\end{figure}

We now present the fundamental connection between the $\alpha$-rectifying property and the injectivity of $\Ta$ on $K$, forming the backbone of this paper. Theorems \ref{reluinj1} and \ref{reluinj0} address the two directions separately, each focusing on specific properties of $K$, thereby generalizing \cite[Theorem 2]{puth22}.

\begin{theorem}[Injectivity of ReLU layers I]\label{reluinj1}
Given $\Phi=(\phi_i)_{i\in I}\subset \mathbb{R}^n$, $\alpha \in \mathbb{R}^m$, and $\emptyset\neq K\subseteq \RR^n$. If $\Phi$ is $\alpha$-rectifying on $K$, then $C_\beta$ is injective on $K$ for all $\beta<\alpha$.
Moreover, if $K$ is open or convex, $C_\alpha$ is injective on $K$.
\end{theorem}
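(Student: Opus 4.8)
The plan is to show in both statements that two points $x,y\in K$ with equal image have difference $z:=x-y$ orthogonal to a \emph{spanning} sub-collection of $\Phi$; since $\Phi$ is $\alpha$-rectifying this sub-collection is a frame for $\RR^n$, and hence $z=0$. First I would settle the strict case $\beta<\alpha$. Given $C_\beta(x)=C_\beta(y)$, fix $i\in\I$, so that $\fphi\geq\alpha_i>\beta_i$ and the $i$-th output coordinate $\fphi-\beta_i$ is strictly positive; matching it with $\relu(\yphi-\beta_i)$ forces $\yphi>\beta_i$, so both coordinates lie in the linear regime of $\relu$, and equating them gives $\langle z,\phi_i\rangle=0$. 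This holds for every $i\in\I$, and since $\Phi_{\I}$ spans $\RR^n$ we conclude $z=0$. No topology on $K$ is needed here: the strict gap $\beta_i<\alpha_i$ is exactly what keeps the active coordinates off the kink of $\relu$.

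For the \emph{moreover} part the same reasoning applies to the strictly active indices $S=\{i:\fphi>\alpha_i\}$, which force $\yphi>\alpha_i$ and $\langle z,\phi_i\rangle=0$, and therefore coincide for $x$ and $y$. The difficulty is the boundary set $B_x=\{i\in\I:\fphi=\alpha_i\}$: there the $i$-th output is $\relu(0)=0$, so the equation only yields $\yphi\leq\alpha_i$, i.e.\ $\langle z,\phi_i\rangle\geq0$, with equality precisely when $i$ is also a boundary index for $y$. Hence orthogonality of $z$ to all of $\Phi_{\I}$ can fail, and spanning of $\I$ by itself no longer suffices --- this is presumably where counterexamples on pathological $K$ live, right on the hyperplanes $\fphi=\alpha_i$.

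To repair this I would produce an auxiliary point $w\in K$ whose active set $I_w^\alpha$ lands inside the common zero set $\{i:\langle z,\phi_i\rangle=0\}$ yet still spans $\RR^n$ by the rectifying hypothesis; then $z\perp\phi_i$ for all $i\in I_w^\alpha$ gives $z=0$. For convex $K$ the natural choice is the midpoint $w=\tfrac12(x+y)\in K$: any index on the boundary for only one of $x,y$ satisfies $\langle w,\phi_i\rangle<\alpha_i$ and drops out, so $I_w^\alpha=S\cup(B_x\cap B_y)$, on which $\langle z,\phi_i\rangle=0$ throughout. For open $K$ the midpoint may escape $K$, so instead I would slide slightly from $x$ toward $y$, setting $w=x-tz$, which lies in $K$ for all small $t>0$ by openness; once $t$ is small enough to preserve the finitely many strict inequalities, the same count gives $I_w^\alpha=S\cup(B_x\cap B_y)$. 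I expect the main obstacle to be exactly this boundary bookkeeping: confirming that one-sided boundary indices deactivate at $w$ while the two-sided ones persist and keep $\Phi_{I_w^\alpha}$ spanning, and that the open-or-convex hypothesis is precisely what is needed to place $w$ in $K$.
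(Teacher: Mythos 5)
Your proof is correct and follows essentially the same route as the paper's: the strict case $\beta<\alpha$ is handled identically (active coordinates are forced into the linear regime of $\relu$, giving equal inner products on the frame $\Phi_{\I}$), and your auxiliary point $w=(1-t)x+ty$ is exactly the paper's $x_\delta$, with your set $S\cup(B_x\cap B_y)$ coinciding with the paper's $\I\cap\Iy$ on which the rectifying hypothesis at $w\in K$ (placed there by openness or convexity) yields a spanning sub-collection orthogonal to $x-y$. Your explicit bookkeeping of one-sided versus two-sided boundary indices is just a more detailed rendering of the paper's observations \eqref{eq:del1}--\eqref{eq:del2}.
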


Recall that a set $U$ is called strictly convex if for all $x,y\in U$ and $\lambda \in (0,1)$, $x_\lambda:= (1-\lambda)x - \lambda y \in \mathring{U} $, where $\mathring{U}$ is the interior of $U$. In particular, $\mathring{U}\neq \emptyset$.

\begin{theorem}[Injectivity of ReLU layers II]\label{reluinj0}
Given $\Phi=(\phi_i)_{i\in I}\subset \mathbb{R}^n$, $\alpha \in \mathbb{R}^m$, and let $\emptyset\neq K\subseteq \RR^n$ be open or strictly convex. If $C_\alpha$ is injective on $K$, then $\Phi$ is $\alpha$-rectifying on $K$.
\end{theorem}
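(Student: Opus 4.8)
The plan is to argue by contraposition: I will assume that $\Phi$ is \emph{not} $\alpha$-rectifying on $K$ and exhibit two distinct points of $K$ sharing the same image under $\Ta$, contradicting injectivity. By Definition \ref{alpharect}, failure of the $\alpha$-rectifying property yields a point $x_0\in K$ whose active sub-collection $\Phi_{\Io}=(\phi_i)_{i\in \Io}$ fails to span $\RR^n$. Hence there is a nonzero vector $v\in\RR^n$ with $\langle v,\phi_i\rangle=0$ for all $i\in \Io$, and the strategy is to perturb $x_0$ in the direction $v$.

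The key observation is that motion along $v$ freezes the output locally. For $i\in\Io$ one has $\langle x_0+tv,\phi_i\rangle=\langle x_0,\phi_i\rangle\ge\alpha_i$ for every $t\in\RR$, so these active coordinates of $\Ta$ are constant along the whole line. For $i\notin\Io$ the defining inequality is strict, $\langle x_0,\phi_i\rangle<\alpha_i$, so by continuity (and finiteness of $I$) there is $\varepsilon>0$ with $\langle x_0+tv,\phi_i\rangle<\alpha_i$ for all $|t|<\varepsilon$ and all inactive $i$; these coordinates stay $0$. Thus $\Ta(x_0+tv)=\Ta(x_0)$ whenever $|t|<\varepsilon$. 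If $K$ is open, then $x_0\in\mathring{K}=K$, so $x_0+tv$ and $x_0-tv$ both lie in $K$ for small $t>0$; they are distinct with equal image, and injectivity fails. This disposes of the open case.

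The strictly convex case is the main obstacle, because the bad point $x_0$ may lie on $\partial K$ and $v$ may be a tangent direction there, so the line $x_0+tv$ need not re-enter $K$. The remedy is to first slide $x_0$ into the interior and then invoke the argument above at an interior point. Fix any $z\in\mathring{K}$ (nonempty by strict convexity) and set $x_\lambda=(1-\lambda)x_0+\lambda z$; strict convexity gives $x_\lambda\in\mathring{K}$ for $\lambda\in(0,1)$. I then claim the active set can only shrink: since $\langle x_0,\phi_i\rangle<\alpha_i$ strictly for every $i\notin\Io$, continuity provides $\lambda_0>0$ with $I_{x_\lambda}^{\alpha}\subseteq\Io$ for all $\lambda\in(0,\lambda_0)$, whence $\spn\Phi_{I_{x_\lambda}^{\alpha}}\subseteq\spn\Phi_{\Io}\neq\RR^n$. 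So the active sub-collection at the interior point $x_\lambda$ still fails to span, and choosing $v'\neq0$ orthogonal to $\Phi_{I_{x_\lambda}^{\alpha}}$ and applying the local-constancy argument of the previous paragraph (now legitimately, since $\mathring{K}$ is open) produces two distinct points $x_\lambda\pm tv'\in K$ with equal image. Beyond this boundary subtlety, every step is elementary continuity together with the fact that orthogonality to the active vectors leaves the active coordinates of $\Ta$ unchanged.
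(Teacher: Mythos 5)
Your proposal is correct and follows essentially the same route as the paper's proof: contraposition, a nonzero vector orthogonal to the non-spanning active sub-collection, a symmetric perturbation $x_0 \pm tv$ that freezes both the active coordinates (by orthogonality) and the inactive ones (by strictness and continuity), and, in the strictly convex case, first sliding $x_0$ along a segment into $\mathring{K}$ so that the inactive set is preserved and the active set can only shrink before re-running the open-case argument. The only cosmetic difference is that you make explicit the inclusion $I_{x_\lambda}^{\alpha}\subseteq I_{x_0}^{\alpha}$ and the resulting failure to span, which the paper leaves implicit when it says to ``apply the same argument as above with $x_\lambda$.''
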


\begin{proof}[Proof of Theorem \ref{reluinj1}]
Let $x,y\in K$ and assume $C_\beta x=C_\beta y$. Clearly,
\begin{equation}\label{eq:000}
\fphi > \beta_i\; \text{ if and only if }\; \yphi > \beta_i,
\end{equation}
from which we can deduce that $\langle x,\phi_i\rangle = \langle y,\phi_i\rangle$ for all $i\in I^\beta_x$.
By assumption that $\Phi$ is $\alpha$-rectifying, $\Phi_{I^\alpha_x}$ is a frame, and since $\beta < \alpha$ we have that $\Phi_{I^\beta_x}$ is one, too. It follows that $x=y$ and therefore that $C_\beta$ is injective on $K$.

To show the moreover part, let $C_\alpha x=C_\alpha y$, for $x,y\in K$. Clearly,
\begin{equation}
\langle x,\phi_i\rangle = \langle y,\phi_i\rangle 
\end{equation}
for all $i \in \I\cap\Iy$. We will show that if $K$ is open or convex, then $\Phi_{\I\cap \Iy}$ is a frame.\\
Let us first consider the case where $K$ is open. We may choose $\varepsilon > 0$ such that the open ball around $x$, denoted by $B_\varepsilon ^\circ (x)$, is contained in $K$. When assuming that $x\neq y$ then there is $\delta<1$ with $0 < \delta < \varepsilon \cdot \|x-y\|^{-1}$ such that
\begin{equation}\label{eq:00e}
    x_\delta := (1-\delta)x + \delta y \in B_\varepsilon ^\circ (x).
\end{equation}
Now let $i\in I_{x_\delta}^\alpha$. By the linearity of the inner product, we have the following.
\begin{align}
    \text{If }\;\langle x_\delta , \phi_i \rangle &> \alpha_i\; \text{ then }\; \fphi > \alpha_i \text{ and } \yphi > \alpha_i.\label{eq:del1}\\
    \text{If }\;\langle x_\delta , \phi_i \rangle &= \alpha_i\; \text{ then }\; \fphi = \alpha_i \text{ and } \yphi = \alpha_i.\label{eq:del2}
\end{align}
Therefore, $I_{x_\delta}\subseteq \I\cap \Iy$. By \eqref{eq:00e}, we have that $x_\delta\in K$ and since we assumed $\Phi_{I_{x_\delta}}$ to be a frame, so is $\Phi_{\I\cap \Iy}$.\\
Now let us assume $K$ to be convex. For $\lambda \in (0,1)$,
\begin{equation}\label{eq:lam}
    x_\lambda := (1-\lambda)x + \lambda y \in K.
\end{equation}
By the same arguments as above, \eqref{eq:del1} and \eqref{eq:del2} hold for $\langle x_\lambda , \phi_i \rangle$, hence $I_{x_\lambda}\subseteq \I\cap \Iy$. By assumption, $\Phi_{I_{x_\lambda}}$ is a frame and thereby, $\Phi_{\I\cap \Iy}$ is a frame. So for both cases, we can deduce that $x=y$, hence $\Ta$ is injective.
\end{proof}

\begin{proof}[Proof of Theorem \ref{reluinj0}]
We prove the claim by counterposition. Assume that $\Phi$ is not $\alpha$-rectifying. Then there is $x\in K$ such that $(\phi_i)_{i\in \I}$ is not a frame. Hence, there is 
\begin{equation}\label{eq:123}
0\neq r\in\spn(\phi_i)_{i\in \I}^{\bot}.
\end{equation}
If $K$ is open, for all sufficiently small $\varepsilon>0$ we have that
$$
y_\pm:=x\pm \varepsilon r\in K.
$$
For $i\in\I$, \eqref{eq:123}
implies that
$
\langle y_+,\phi_i\rangle = \langle y_-,\phi_i\rangle,
$
leading to 
\begin{equation}\label{eq:007}
\max(0,\langle y_+,\phi_i\rangle-\alpha_i)=\langle x,\phi_i\rangle -\alpha_i= \max(0,\langle y_-,\phi_i\rangle-\alpha_i).
\end{equation}
If $\I=I$, then \eqref{eq:007} already implies $C_\alpha y_+=C_\alpha y_-$, so that $C_\alpha$ is not injective on $K$.\\
To address the case $\I\subset I$, we recall that 
$
\langle x,\phi_i\rangle <\alpha_i
$
holds for all $i\in I\setminus \I$.
Therefore, we may choose $\varepsilon$ sufficiently small, such that $y_\pm \in K$ and
\begin{equation}\label{eq:008}
\langle y_\pm,\phi_i\rangle <\alpha_i,\qquad i\in I\setminus \I.
\end{equation}
Observation \eqref{eq:008} leads to
\begin{equation}\label{eq:009}
0=\max(0,\langle y_\pm,\phi_i\rangle-\alpha_i),\qquad i\in I\setminus\I.
\end{equation}
According to \eqref{eq:007} and \eqref{eq:009}, we derive $C_\alpha y_+=C_\alpha y_-$. Thus, $C_\alpha$ is not injective on $K$.

Note that if $K$ is strictly convex it contains more than one element and for every $z \in K$ where $z\neq x$ and $\lambda \in (0,1)$ with the same assumptions on $x$ as above,
$$
x_\lambda:= (1-\lambda)x - \lambda z \in \mathring{K}.
$$
Using the linearity of the inner product, there is $\lambda$ sufficiently small such that
\begin{equation}\label{eq:124}
    \langle x_\lambda,\phi_i \rangle < \alpha_i, \qquad i\in I\setminus \I.
\end{equation}
Since $\mathring{K}$ is open and not empty, we can apply the same argument as above with $x_\lambda$ to show that $C_\alpha$ is not injective on $K$.
\end{proof}
This shows that the injectivity of a ReLU layer is contingent upon topological properties of the domain from which the data is drawn.
For the cases $\mathbb{R}^n$ and $\mathbb{B}_r$, we have established equivalence, where the former case corresponds to \cite[Theorem 2]{puth22}. Since the non-negative ball $\mathbb{B}_r^+ = \mathbb{B}_r\cap \RR^n_+$ is convex but not open or strictly convex, only the direction in Theorem \ref{reluinj1} holds. In the case of the donut $\mathbb{D}_{r,s} = \overline{\mathbb{B}_r\setminus \mathbb{B}_s}$, which is not open and not convex, only the first part of Theorem \ref{reluinj1} holds, i.e., injectivity for strictly smaller biases. In Section \ref{sec:maxK}, we give an example of a similar scenario where injectivity fails.\\

While the $\alpha$-rectifying property makes the injectivity of a ReLU layer more accessible, it remains challenging to verify it in practice. In the following, we discuss how specific properties of the frame influence its $\alpha$-rectifying property, and how to leverage them.

\subsection{Normalized frames}\label{sec:norm}
It is a simple, yet, crucial observation that we may restrict the $\alpha$-rectifying property to frames with unit norm vectors by scaling the bias with the norms of the frame elements. In the context of neural networks, normalizing the underlying frame (or the weight matrix in a row-wise manner) is a standard normalization technique \cite{weightnorm16}. We write $\Phi \subset \mathbb{S}$.
\begin{lemma}
    A frame $\Phi=(\phi_i)_{i\in I}\subset \RR^n$ is $\alpha$-rectifying on $K$ if and only if the normalized frame $\Phi'=\left(\phi_i \cdot \|\phi_i\|^{-1}\right)_{i\in I}\subset \mathbb{S}$  is $\alpha'$-rectifying on $K$, where $\alpha'_i=\alpha_i \cdot \|\phi_i \|^{-1}$.
\end{lemma}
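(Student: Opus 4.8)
The plan is to unwind both the definition of the $\alpha$-rectifying property (Def.~\ref{alpharect}) and the definition of the active index set $\I$ from \eqref{eq:index}, and to observe that normalizing each frame vector merely rescales the corresponding inner product inequality. The key identity is the elementary fact that for any $x\in\RR^n$ and any $i\in I$, since $\|\phi_i\|>0$ (recall $\Phi$ contains no zero vector),
\begin{equation*}
\langle x,\phi_i\rangle \geq \alpha_i \quad\Longleftrightarrow\quad \Big\langle x,\tfrac{\phi_i}{\|\phi_i\|}\Big\rangle \geq \tfrac{\alpha_i}{\|\phi_i\|}.
\end{equation*}
With $\phi_i' = \phi_i\cdot\|\phi_i\|^{-1}$ and $\alpha_i' = \alpha_i\cdot\|\phi_i\|^{-1}$ as in the statement, the right-hand inequality is exactly the condition $\langle x,\phi_i'\rangle\geq\alpha_i'$. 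Hence the active index set is \emph{unchanged} by normalization: for every $x$ one has $I_x^{\alpha'}$ (computed with $\Phi'$ and $\alpha'$) equal to $\I$ (computed with $\Phi$ and $\alpha$).

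Next I would transfer this equality of index sets to the frame property of the active sub-collections. Fix $x\in K$ and write $J=\I=I_x^{\alpha'}$. I would argue that $\Phi_J=(\phi_i)_{i\in J}$ is a frame for $\RR^n$ if and only if the normalized sub-collection $\Phi'_J=(\phi_i\|\phi_i\|^{-1})_{i\in J}$ is a frame. This follows because multiplying a single frame vector by a nonzero scalar $c\neq 0$ preserves the spanning property, and by the characterization recalled after \eqref{eq:framedef} (that \eqref{eq:framedef} is equivalent to being a spanning set), preserving the span is exactly preserving the frame property for a finite collection in $\RR^n$. Applying this coordinatewise with $c=\|\phi_i\|^{-1}$ over all $i\in J$ gives the desired equivalence for each fixed $x$.

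Finally I would assemble the two observations. By Def.~\ref{alpharect}, $\Phi$ is $\alpha$-rectifying on $K$ means $\Phi_{\I}$ is a frame for \emph{all} $x\in K$, and $\Phi'$ is $\alpha'$-rectifying on $K$ means $\Phi'_{I_x^{\alpha'}}$ is a frame for all $x\in K$. Since the index sets coincide ($I_x^{\alpha'}=\I$) and the sub-collections are frames simultaneously (by the scalar-invariance of the frame property), the two universally quantified statements are equivalent, proving the lemma in both directions at once.

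I do not expect a genuine obstacle here; the content is essentially the bookkeeping that a positive diagonal rescaling of the rows of $C$, together with the matching rescaling of the bias, leaves the geometry of the half-spaces $\Omega_i^\alpha$ and hence the whole active/inactive partition invariant. The only point requiring a word of care is to state explicitly that $\|\phi_i\|\neq 0$ so that the rescaling is legitimate and the inequalities are genuinely equivalent rather than merely one-directional; this is guaranteed by the standing assumption that $\Phi$ contains no zero vector.
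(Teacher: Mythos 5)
Your proof is correct and follows essentially the same route as the paper, whose entire argument is the observation that $\langle x,\phi_i\rangle \geq \alpha_i$ is equivalent to $\langle x,\phi_i\cdot\|\phi_i\|^{-1}\rangle \geq \alpha_i\cdot\|\phi_i\|^{-1}$; you merely make explicit the two bookkeeping steps (invariance of the active index sets and scalar-invariance of the frame property of the active sub-collections) that the paper leaves implicit.
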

The statement follows from the fact that $\langle x,\phi_i \rangle \geq \alpha_i$ is equivalent to $\langle x,\phi_i \cdot \|\phi_i\|^{-1} \rangle \geq \alpha_i \cdot \|\phi_i\|^{-1}$ for all $x\in K$. 
Therefore, we may always assume $\|\phi_i\|=1$ for all $i\in I$, which simplifies the problem setting substantially. The norms can be reintroduced at any stage of processing or analysis.

\subsection{Full-spark frames}\label{sec:fullspark}
The effect of ReLU can be interpreted as introducing input-dependent erasures in the underlying frame, i.e., losing certain coefficients \cite{goyal2001erasures}. In this context, the \textit{spark} of a frame has been shown to be a useful concept. It is defined as the smallest number $s \geq n+1$ of linearly dependent frame elements that one can choose from $\Phi$. In other words, any sub-collection with $s-1$ frame elements from $\Phi$ is a frame.
Frames with $s=n+1$ are called \textit{full-spark frames}. This family of frames has shown to be maximally robust to erasures \cite{alexeev2012spark} which makes the full-spark property interesting for injective ReLU layers in particular.
For phase retrieval, it is known that if a full-spark frame has $m\geq 2n-1$ elements then the phase-retrieval operator \eqref{eq:phase} is injective \cite{balan06}.
For the ReLU case, knowing the spark of $\Phi$ relaxes the condition for a frame to be $\alpha$-rectifying to a counting argument.
\begin{corollary}
    Let $\Phi$ be a frame with spark $s$ then $\Phi$ is $\alpha$-rectifying on $K$ if and only if $\vert \I \vert \geq s-1$ for all $x\in K$. 
\end{corollary}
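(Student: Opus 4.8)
The plan is to reduce the corollary to a purely combinatorial description of which sub-collections of $\Phi$ form a frame, and then feed this into the definition of the $\alpha$-rectifying property. The statement I would isolate first is the following lemma: for a frame $\Phi$ with spark $s$, a sub-collection $\Phi_J$ is a frame for $\RR^n$ if and only if $|J|\geq s-1$. Granting this, the corollary is immediate: by Definition \ref{alpharect}, $\Phi$ is $\alpha$-rectifying on $K$ exactly when $\Phi_{\I}$ is a frame for every $x\in K$, and the lemma rephrases this as $|\I|\geq s-1$ for every $x\in K$.

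For the lemma I would prove the two implications from the defining property of the spark, namely that the smallest linearly dependent sub-collection of $\Phi$ has size $s$, so that every sub-collection of size at most $s-1$ is linearly independent. If $|J|\geq s-1$, I pick any $s-1$ indices from $J$; the corresponding vectors are linearly independent, and since there are $s-1=n$ of them they form a basis of $\RR^n$. Hence $\Phi_J$ contains a basis, spans $\RR^n$, and is therefore a frame by \eqref{eq:framedef}. Conversely, if $\Phi_J$ is a frame it spans $\RR^n$, and any spanning set of $\RR^n$ has at least $n=s-1$ elements, so $|J|\geq s-1$; note that this direction uses nothing about the spark beyond the value $s-1=n$.

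The step that deserves the most care is the identity $s-1=n$ that makes the counting threshold sharp in both directions: the backward implication needs the $s-1$ independent vectors to actually span, forcing $s-1\geq n$, while their independence forces $s-1\leq n$, so together $s=n+1$, which is precisely the full-spark regime this subsection treats. I would state this explicitly, since for a frame of strictly smaller spark a sub-collection of size $s-1<n$ can never span and the counting criterion would break down. Once this coincidence is recorded, no further estimates are required, and combining the lemma with Definition \ref{alpharect} closes the argument.
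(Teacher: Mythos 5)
Your proof is correct and follows essentially the same route as the paper, which states this corollary without proof as an immediate consequence of its formulation of the spark (``any sub-collection with $s-1$ frame elements from $\Phi$ is a frame'') together with Definition \ref{alpharect}; your key lemma is exactly that clause, and the converse direction is the trivial counting bound $\vert \I \vert \geq n \geq s-1$. Your explicit remark that the two-sided criterion forces $s-1=n$ is likewise consistent with the paper's convention $s\geq n+1$ (hence $s=n+1$), i.e., the full-spark regime this subsection treats, and it rightly flags that the counting characterization would fail in the forward direction for frames of strictly smaller spark under the standard definition.
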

Although it is an NP-hard problem to verify if a given frame is full-spark, in a numerical setting it is a mild condition that is almost surely satisfied in the presence of randomness. Indeed, if the entries of the frame elements are i.i.d.~samples from an absolutely continuous probability distribution, then the associated random frame is full-spark with probability one \cite{alexeev2012spark}.
Since most initialization methods in neural networks are based on i.i.d.~sampling schemes, full-spark frames appear naturally in the context of deep learning.

\subsection{Perturbed frames}
For a bounded domain $K$ the $\alpha$-rectifying property is robust to perturbation. In particular, small perturbations of an $\alpha$-rectifying frame result in an $\alpha'$-rectifying frame where $\alpha'$ is close to $\alpha$.
\begin{lemma}\label{lem:perturbation}
    Let $\Phi$ be $\alpha$-rectifying on a bounded domain $K$ with $M = \sup_{x\in K}\Vert x\Vert$. For $\varepsilon>0$, a perturbed frame $\Phi'=(\phi'_i)_{i\in I}$ satisfying $\|\phi_i-\phi'_i\|<\varepsilon$ for all $i\in I$ is $\alpha'$-rectifying on $K$ with $\alpha'_i =\alpha_i - \varepsilon M,\; i\in I$.
\end{lemma}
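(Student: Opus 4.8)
The plan is to verify the defining condition of Definition \ref{alpharect} directly for the pair $(\Phi',\alpha')$: for every $x\in K$, the sub-collection $(\phi'_i)_{i\in I_x^{\alpha'}}$ must be a frame for $\RR^n$. The natural strategy is to compare the active index sets of the two configurations and then transfer the frame property from $\Phi$ to $\Phi'$ along the larger index set. So first I would fix $x\in K$ and aim to establish the inclusion $\I \subseteq I_x^{\alpha'}$.

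This inclusion is the routine part and follows from a single Cauchy--Schwarz estimate. For $i\in \I$ we have $\fphi \geq \alpha_i$, and writing $\phi'_i = \phi_i + (\phi'_i-\phi_i)$ gives
\[
\langle x,\phi'_i\rangle = \fphi + \langle x,\phi'_i-\phi_i\rangle \;\geq\; \alpha_i - \|x\|\,\|\phi_i-\phi'_i\| \;>\; \alpha_i - M\varepsilon = \alpha'_i ,
\]
using $\|x\|\leq M$ and $\|\phi_i-\phi'_i\|<\varepsilon$. Hence $i\in I_x^{\alpha'}$, so indeed $\I\subseteq I_x^{\alpha'}$. This is exactly why the bias is lowered by $\varepsilon M$: it guarantees that no frame element active for $\Phi$ and $\alpha$ drops out for $\Phi'$ and $\alpha'$.

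Since $\Phi$ is $\alpha$-rectifying, $\Phi_{\I}$ is a frame, i.e.\ it spans $\RR^n$. It would then suffice to show that the perturbed sub-collection $(\phi'_i)_{i\in \I}$ still spans $\RR^n$, because the larger family $(\phi'_i)_{i\in I_x^{\alpha'}}\supseteq(\phi'_i)_{i\in\I}$ would span as well and thus be a frame. Here I would invoke stability of the spanning (frame) property under sufficiently small perturbations of the vectors, noting that $\{\I : x\in K\}$ is a finite collection of subsets of $I$, so a uniform perturbation threshold can be taken as the minimum of the individual ones.

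The hard part will be precisely this last step. The index inclusion holds for \emph{every} $\varepsilon>0$, but preserving the frame property of $(\phi'_i)_{i\in\I}$ genuinely requires $\varepsilon$ to be controlled relative to the lower frame bounds of the finitely many sub-collections $\Phi_{\I}$, $x\in K$; under a crude perturbation a spanning sub-family can degenerate to a non-spanning one (and then the reduced bias does not recover enough new active indices to restore spanning). The substantive ingredient is therefore a quantitative frame-perturbation estimate bounding the lower frame bound of $\Phi'_{\I}$ from below in terms of that of $\Phi_{\I}$ and of $\varepsilon$; combined with the finiteness of $2^{I}$ this yields the uniform smallness condition on $\varepsilon$ under which $\Phi'$ is $\alpha'$-rectifying.
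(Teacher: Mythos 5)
Your opening Cauchy--Schwarz estimate is, essentially verbatim, the paper's \emph{entire} proof: the paper fixes $x\in K$, shows $\langle x,\phi'_i\rangle>\alpha_i-\varepsilon\Vert x\Vert\geq\alpha_i-\varepsilon M=\alpha'_i$ for every $i\in\I$, and stops there, implicitly treating the frame property of the perturbed active sub-collection as automatic. The step you isolate as the hard part is therefore not addressed in the paper at all, and your caution is justified in the strongest sense: without a smallness condition on $\varepsilon$ the lemma as stated is false. Take $n=2$, $K=\mathbb{B}$ (so $M=1$), $\Phi=(e_1,e_2,-e_1,-e_2)$ and $\alpha=\mathbf{0}$, which is $\mathbf{0}$-rectifying on $\mathbb{B}$ (Figure \ref{fig:fig2} left, with Proposition \ref{prop:inclusive}). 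Let $\varepsilon=\tfrac32$, so $\alpha'_i=-\tfrac32$ for all $i$. Fix $x=(\cos\theta,\sin\theta)$ with small $\theta>0$ and put $\phi'_1=\phi'_2=e_1$ (admissible since $\|e_2-e_1\|=\sqrt2<\tfrac32$), $\phi'_3=-e_1-t_3x$ with $t_3\in(\tfrac32-\cos\theta,\tfrac32)$, and $\phi'_4=-e_2-t_4x$ with $t_4\in(\tfrac32-\sin\theta,\tfrac32)$. All perturbations have norm below $\tfrac32$, and $\Phi'$ is still a frame ($\phi'_1$ and $\phi'_3$ already span). But $\langle x,\phi'_3\rangle=-\cos\theta-t_3<-\tfrac32$ and $\langle x,\phi'_4\rangle=-\sin\theta-t_4<-\tfrac32$, so $I_x^{\alpha'}=\{1,2\}$, and the active sub-collection $(\phi'_1,\phi'_2)=(e_1,e_1)$ does not span $\RR^2$. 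Hence $\Phi'$ is not $\alpha'$-rectifying on $\mathbb{B}$.

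Consequently the lemma has to be read the way you propose to prove it — with $\varepsilon$ below a threshold determined by $\Phi$, $\alpha$, and $K$ — which is consistent with the surrounding prose (``small perturbations'') and with how the lemma is later invoked (arbitrarily small perturbations to reach full-spark frames or frames with unique most correlated bases). Under that reading your plan closes the gap correctly and quantitatively: the family $\{\I:x\in K\}$ is finite; each member $J$ of it indexes a frame $\Phi_J$ with lower bound $A_J>0$; Minkowski's inequality gives $\bigl(\sum_{i\in J}|\langle x,\phi'_i\rangle|^2\bigr)^{1/2}\geq\bigl(\sqrt{A_J}-\sqrt{|J|}\,\varepsilon\bigr)\Vert x\Vert$, so any $\varepsilon<\min_J\sqrt{A_J/|J|}$ keeps every $\Phi'_J$ a frame; your inclusion $\I\subseteq I_x^{\alpha'}$ then yields the claim. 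In short, your proposal is not merely a match for the paper's argument — it supplies the half of it that is missing.
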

\begin{proof}
    Let $x\in K$, then for any $i\in \I$ it holds that
    \begin{align*}
        \langle x, \phi'_i \rangle = \langle x, \phi_i \rangle - \langle x, \phi_i - \phi'_i \rangle > \alpha_i-\varepsilon \Vert x\Vert \geq \alpha_i- \varepsilon M.
    \end{align*}
\end{proof}
It is known that for any frame $\Phi$ there is an arbitrarily small perturbation such that the resulting perturbed frame is full-spark \cite{casfin12}.
Therefore, we may interpret Lemma \ref{lem:perturbation} in the sense that for any $\alpha$-rectifying frame, there is an arbitrarily close full-spark frame that is $\alpha'$-rectifying with $\alpha'$ being arbitrarily close to $\alpha$.

\subsection{Redundancy}\label{sec:red}
One of the central properties of a frame is its redundancy, i.e., the ratio $q = \frac{m}{n}\geq 1$. For random ReLU layers with i.i.d.~Gaussian entries and no bias, it has been studied at which redundancy they become injective on $\RR^n$ asymptotically \cite{puth22,maillard2023injectivity}.
Let $p_{m,n}$ denote the probability that $C_\alpha$ with $m$ frame elements in $\RR^n$ and $\alpha = \mathbf{0}$ is injective then it has been proven that
$q\leq 3.3$ implies that $\lim_{n\rightarrow \infty}p_{m,n}=0$ and $q\geq 9.091$ implies that $\lim_{n\rightarrow \infty}p_{m,n}=1.$
Furthermore, the authors in \cite{maillard2023injectivity} state the conjecture that there exists a redundancy $q\in (6.6979, 6.6981)$ where the transition from non-injectivity to injectivity happens. We will revisit this conjecture in a non-asymptotic setting in the experimental part later in the paper (Section \ref{sec:experiments}).

In a non-random setting, a trivial leverage of redundancy is considering the collection $\Psi = \left(\Phi , -\Phi\right)$ for any given frame $\Phi$. Doubling the redundancy in this symmetric way makes $\Psi$ become $\mathbf{0}$-rectifying on $\RR^n$ by construction (see Figure \ref{fig:fig2} left) \cite{bruna14, puth22, framelets18}. In a general deterministic setting, however, it is difficult to establish sufficient conditions for the $\alpha$-rectifying property only in terms of redundancy as it depends heavily on the geometric characteristics of the frame. As already mentioned in Section \ref{sec:fullspark}, this is in contrast to the phase-retrieval setting, where a redundancy of $q\geq \frac{2n-1}{n}$, together with a full spark assumption is already sufficient for injectivity.
In the ReLU case, it is known that a redundancy of two is necessary when considering $\alpha = \mathbf{0}$ and $K=\RR^n$ \cite{bruna14}. We extend this known result from $\alpha = \mathbf{0}$ to arbitrary $\alpha$ in the following proposition.
\begin{proposition}
    Any $\alpha$-rectifying frame on $\RR^n$ has at least redundancy two.
\end{proposition}\label{prop:red}
\begin{proof}
By assuming that the zero vector is not a frame element, we can choose $x\in \RR^n$ with $\langle x,\phi_i \rangle\neq 0$ for all $i\in I$. We denote
\begin{align}
    I_{x}^+ &= \{i\in I:\fphi >0\},\\
    I_{x}^- &= \{i\in I:\fphi <0\}.
\end{align}
By the choice of $x$, the sets $I_{x}^+$ and $I_{x}^-$ form a disjoint partition of $I$.
Let $\alpha\in \RR^m$ and define
\begin{equation}
    t^* := \max_{\substack{i\in I}} \frac{\alpha_i}{\fphi}
\end{equation}
then for all $t>t^*>0$ we have
\begin{align}
    &\langle tx,\phi_i \rangle > \alpha_i\; \text{ and }\; \langle -tx,\phi_i \rangle < \alpha_i \quad \text{for } i\in I_x^+\\
    &\langle tx,\phi_i \rangle < \alpha_i\; \text{ and }\; \langle -tx,\phi_i \rangle > \alpha_i \quad \text{for } i\in I_x^-.
\end{align}
Hence, $I_{tx}^{\alpha}=I_x^+$ and $I_{-tx}^{\alpha}=I_x^-$.
We found two elements $u=tx,v= -tx\in \RR^n$ with $I_{u}^\alpha\cap I_{v}^\alpha = \emptyset$. Assuming $\Phi$ to be $\alpha$-rectifying on $\RR^n$ implies that $\Phi_{I_{u}^\alpha}$ and $\Phi_{I_{v}^\alpha}$ are frames, i.e., contain at least $n$ elements in particular. Since $I_{u}^\alpha\cap I_{v}^\alpha = \emptyset$, it must hold that $m\geq 2n$.
\end{proof}
Assuming that the input for a ReLU layer is contained in $\mathbb{B}_r$, we find that the necessary redundancy-two condition from Proposition \ref{prop:red} breaks. We use boldface notation for bias vectors with constant entries, i.e., $\mathbf{r}\in \RR^m$ denotes the vector with entries $r\in \RR$.
\begin{lemma}\label{lem:Br}
    Any normalized frame is $(-\mathbf{r})$-rectifying on $\mathbb{B}_r$.
    If it is additionally a basis, this is also necessary, i.e., $-r$ is the maximal value.     
\end{lemma}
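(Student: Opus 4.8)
The plan is to treat the two directions separately: the sufficiency claim rests on a single application of the Cauchy--Schwarz inequality, while the necessity claim exploits the rigidity of bases.

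For the sufficiency part, I would fix an arbitrary $x \in \mathbb{B}_r$ and an arbitrary index $i \in I$. Since the frame is normalized we have $\|\phi_i\| = 1$, and since $x \in \mathbb{B}_r$ we have $\|x\| \leq r$. Cauchy--Schwarz then gives $\langle x,\phi_i\rangle \geq -\|x\|\,\|\phi_i\| \geq -r$, which is exactly the activation condition $\langle x,\phi_i\rangle \geq \alpha_i$ with $\alpha_i = -r$. Hence every frame element is active for every input, i.e. $I_x^{\alpha} = I$ for all $x \in \mathbb{B}_r$, so the active sub-collection is the full frame $\Phi$, which is a frame by hypothesis. By Definition \ref{alpharect} this shows that $\Phi$ is $(-\mathbf{r})$-rectifying on $\mathbb{B}_r$; note this argument uses nothing about $\Phi$ beyond normalization.

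For the necessity part, under the extra assumption that $\Phi$ is a basis (so $m = n$), I would first observe that a sub-collection $\Phi_J$ of a basis is a frame for $\RR^n$ only if $J = I$, since any proper subset has at most $n-1$ vectors and cannot span $\RR^n$. Consequently, the $\alpha$-rectifying property on $\mathbb{B}_r$ collapses to the requirement $I_x^{\alpha} = I$ for every $x \in \mathbb{B}_r$, that is, $\langle x,\phi_i\rangle \geq \alpha_i$ for all $x \in \mathbb{B}_r$ and all $i \in I$. To extract the sharp threshold I would evaluate at the worst-case point $x = -r\phi_i$, which lies in $\mathbb{B}_r$ because $\| -r\phi_i \| = r$, and compute $\langle -r\phi_i,\phi_i\rangle = -r$. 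This forces $\alpha_i \leq -r$ for every $i$, so no bias with any component exceeding $-r$ can be $\alpha$-rectifying, proving that $-\mathbf{r}$ is maximal.

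There is no substantial obstacle here; the proof is essentially a direct computation. The only point deserving care is the necessity direction, where one must recognize that the basis assumption reduces $\alpha$-rectification to keeping all elements active, and then identify $x = -r\phi_i$ as the minimizer of $\langle x,\phi_i\rangle$ over $\mathbb{B}_r$ to pin down the exact value. This sharpness genuinely relies on the basis assumption, since a redundant frame could afford to lose some elements and still remain a frame, permitting strictly larger biases.
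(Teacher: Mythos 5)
Your proof is correct and follows essentially the same route as the paper: the sufficiency direction is the same Cauchy--Schwarz estimate $\langle x,\phi_i\rangle \geq -\|x\| \geq -r$, and the necessity direction uses the same key observations, namely that a basis forces $I_x^{\alpha}=I$ everywhere and that testing at $x=-r\phi_i\in\mathbb{B}_r$ yields $\alpha_i\leq -r$. Your write-up is in fact slightly more explicit than the paper's (spelling out why proper sub-collections of a basis cannot be frames), but the argument is identical in substance.
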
\label{lem:red}
\begin{proof}
    Let $x\in \mathbb{B}_r$. Since $\fphi = \Vert x \Vert \langle \frac{x}{\Vert x \Vert} , \phi_i\rangle \geq - \Vert x \Vert \geq -r$ the first statement follows. For the second, let $\Phi$ be a basis, then $\Phi$ is $\alpha$-rectifying on $\mathbb{B}_r$ if and only if for every $x\in \mathbb{B}_r$ it holds that $I=\I$. In particular, since $-r\cdot \Phi\subset \mathbb{B}_r$ and for every $i\in I$, the maximal choice of the $\alpha_i$ is determined by the fact that $\langle -r\cdot \phi_i,\phi_i \rangle = -r$.
\end{proof}
This emphasizes that the choice of the input domain can have a significant impact on the $\alpha$-rectifying property. The following section will examine this interaction between domain and bias in greater detail and explain how it can be leveraged.

\section{Interplay of Domain and Bias}\label{sec:interplay}
In the context of applications, we may find ourselves in a situation where we have provided a trained ReLU layer and wish to ascertain whether it is injective for a specific data set. One way to address this is to verify that the data set in question is contained within a set where we have already established that the ReLU layer is injective. This leads to the following natural question.
\begin{center}
    \textit{\textbf{Q1:} Given $\alpha$, what is the largest domain $K$ such that $\Phi$ is $\alpha$-rectifying on $K$?}
\end{center}
An alternative approach, building upon the inclusiveness property of ReLU layers (Prop. \ref{prop:inclusive}), is to ascertain that the values of the given bias do not exceed the values of a bias for which we already know that the corresponding ReLU layer is injective. This leads to the dual question to the one above.
\begin{center}
    \textit{\textbf{Q2:} Given $K$, what is the largest bias $\alpha$ such that $\Phi$ is $\alpha$-rectifying on $K$?}
\end{center}
Answering these questions will provide us with two further characterizations of the $\alpha$-rectifying property in terms of domain and bias, respectively. With this, we obtain alternative ways of verifying the injectivity of the associated ReLU layer.
To better understand how bias and domain interact, we point out some basic scaling relations.

\begin{lemma}\label{prop:K}
    Let $\Phi$ be $\alpha$-rectifying on $K$. The following holds.
    \begin{enumerate}[(i)]
        \item $\Phi$ is $(r\cdot \alpha)$-rectifying on $r\cdot K$ for any $r>0$.
        \item If $\alpha\geq 0$, then $\Phi$ is $\alpha$-rectifying on $r\cdot K$ with $r\geq 1$.
        \item If $0\in K$, then at least $n$ bias values are non-positive.
    \end{enumerate}
\end{lemma}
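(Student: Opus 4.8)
The plan is to prove all three parts by directly inspecting how the active index set behaves under positive rescaling of the domain and the bias, and then invoking the definition of the $\alpha$-rectifying property together with the monotonicity statements in Proposition \ref{prop:inclusive}. Each part reduces to a short computation with inner products, so the work is almost entirely bookkeeping; the value lies in identifying the correct invariance of the active set.

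For part (i), I would show that scaling the argument and the bias by the same factor $r>0$ leaves the active set unchanged. For $y = r\cdot x$ with $x \in K$, linearity gives $\langle y,\phi_i\rangle = r\langle x,\phi_i\rangle$, and since dividing by $r>0$ preserves the inequality, $\langle y,\phi_i\rangle \geq r\alpha_i$ holds exactly when $\langle x,\phi_i\rangle \geq \alpha_i$. Hence $I_y^{r\alpha} = \I$ for every $y \in r\cdot K$. Because $\Phi_{\I}$ is a frame by the hypothesis that $\Phi$ is $\alpha$-rectifying on $K$, the sub-collection $\Phi_{I_y^{r\alpha}}$ is a frame as well, and as $y$ ranges over all of $r\cdot K$ this is precisely the statement that $\Phi$ is $(r\cdot\alpha)$-rectifying on $r\cdot K$.

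For part (ii), I would combine part (i) with inclusiveness in the bias. By part (i), $\Phi$ is $(r\cdot\alpha)$-rectifying on $r\cdot K$. The assumptions $\alpha \geq 0$ and $r\geq 1$ give the entrywise comparison $\alpha \leq r\cdot\alpha$, so Proposition \ref{prop:inclusive}(ii), applied with $\alpha' = r\cdot\alpha$ and $K' = r\cdot K$, upgrades this to $\Phi$ being $\alpha$-rectifying on $r\cdot K$. The one point worth flagging is that the sign condition $\alpha \geq 0$ is exactly what guarantees $\alpha \leq r\cdot\alpha$; without it the comparison, and hence the conclusion, can fail.

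For part (iii), I would simply evaluate the active set at the origin. Since $0\in K$ and $\langle 0,\phi_i\rangle = 0$, the active index set is $I_0^\alpha = \{i\in I : \alpha_i \leq 0\}$. The $\alpha$-rectifying property forces $\Phi_{I_0^\alpha}$ to be a frame for $\RR^n$, and any frame must contain at least $n$ vectors in order to span, so $|I_0^\alpha| \geq n$; that is, at least $n$ of the bias entries are non-positive. None of the three parts presents a genuine obstacle — each is a one-line inspection of the active set — and the only care required is tracking the direction of the inequalities in part (i) and selecting the correct monotonicity direction from Proposition \ref{prop:inclusive} in part (ii).
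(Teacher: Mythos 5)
Your proposal is correct and follows essentially the same route as the paper's proof: part (i) is the scaling equivalence $\langle x,\phi_i\rangle\geq\alpha_i \Leftrightarrow \langle r\cdot x,\phi_i\rangle\geq r\cdot\alpha_i$, part (iii) is the evaluation of the active set at the origin, and part (ii) is the same monotonicity argument — the paper just chains the inequalities $\langle r\cdot x,\phi_i\rangle\geq r\cdot\alpha_i\geq\alpha_i$ inline, whereas you package the second inequality as an appeal to Proposition \ref{prop:inclusive}(ii), which is only a cosmetic difference.
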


\begin{proof}
    All properties are easy to see.
    \begin{enumerate}[(i)]
        \item For $r>0$, $\langle x,\phi_i \rangle\geq \alpha_i $ if and only if $ \langle r\cdot x,\phi_i \rangle\geq r\cdot\alpha_i$.
        \item For $r\geq 1$, $\langle x,\phi_i \rangle\geq \alpha_i $ implies $ \langle r\cdot x,\phi_i \rangle\geq r\cdot\alpha_i\geq \alpha_i$.
        \item For $x=0$, we have that $\Phi_{\I}$ is a frame and $0 = \langle 0,\phi_i \rangle \geq \alpha_i$ holds for all $i\in I_x^\alpha$. Since $|\I|\geq n$, the claim follows.
        \end{enumerate}
\end{proof}
Consequently, by either scaling the data or the bias, it may be possible to compensate for situations where a frame is not $\alpha$-rectifying on $K$ but is on $K'\subsetneq K$.
The following examples show such compensation through restrictions other than scaling.

\begin{figure}[t]
    \centering
    \begin{subfigure}[t]{0.31\textwidth}
        \centering
        \includegraphics[width=\textwidth]{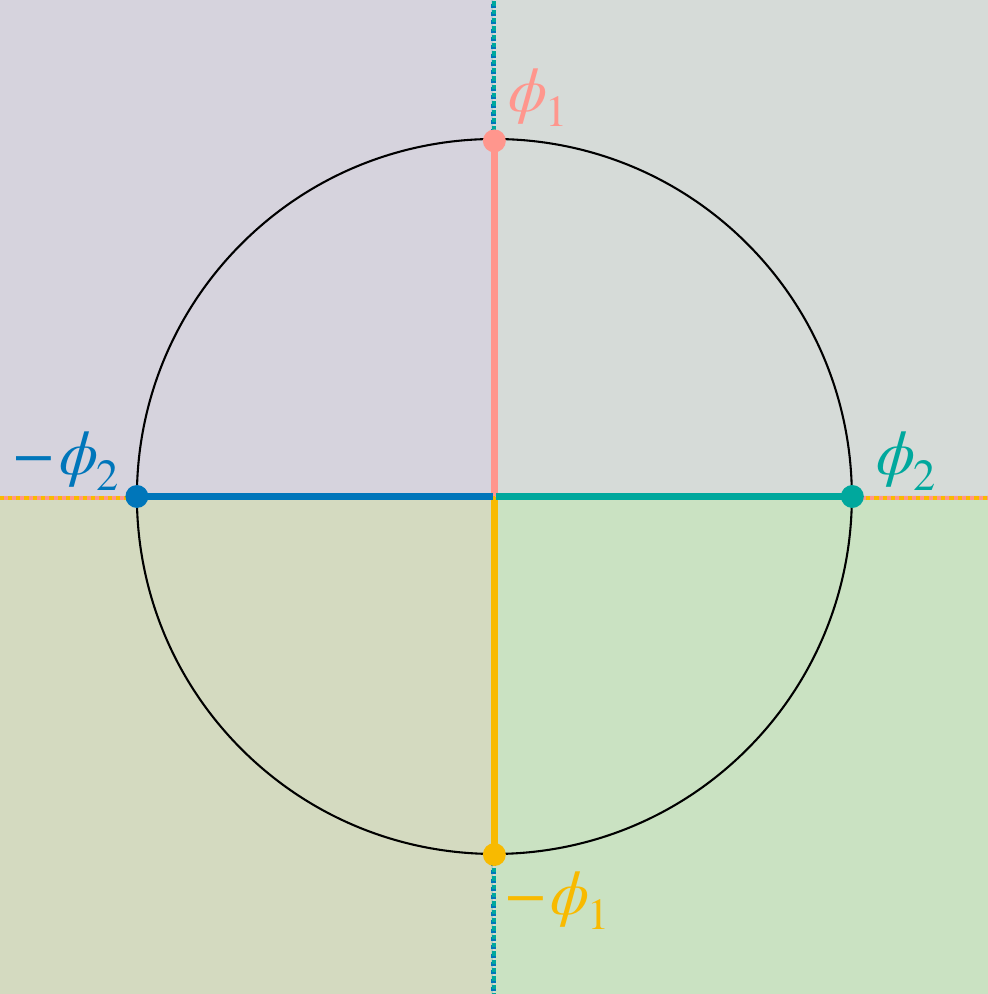}
        
    \end{subfigure}%
    \hfill
    \begin{subfigure}[t]{0.31\textwidth}
        \centering
        \includegraphics[width=\textwidth]{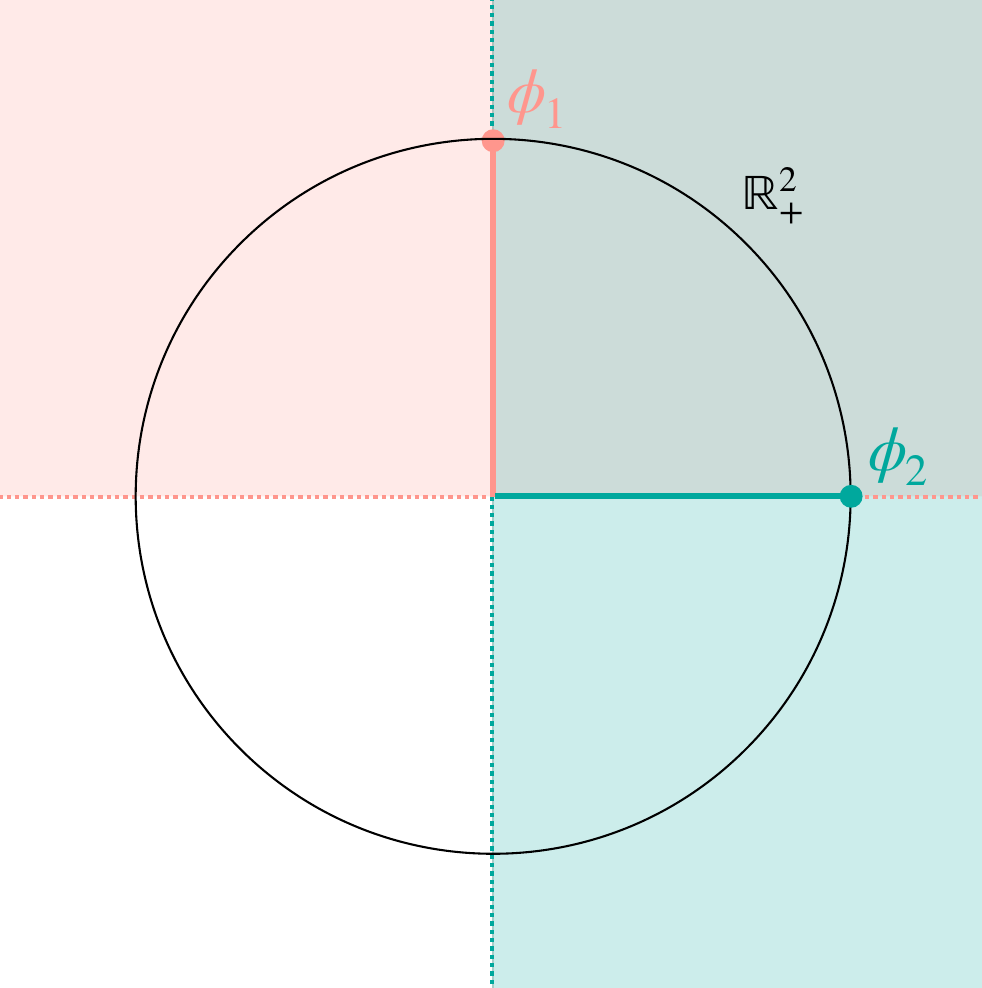}
    \end{subfigure}%
    \hfill
    \begin{subfigure}[t]{0.31\textwidth}
        \centering
        \includegraphics[width=\textwidth]{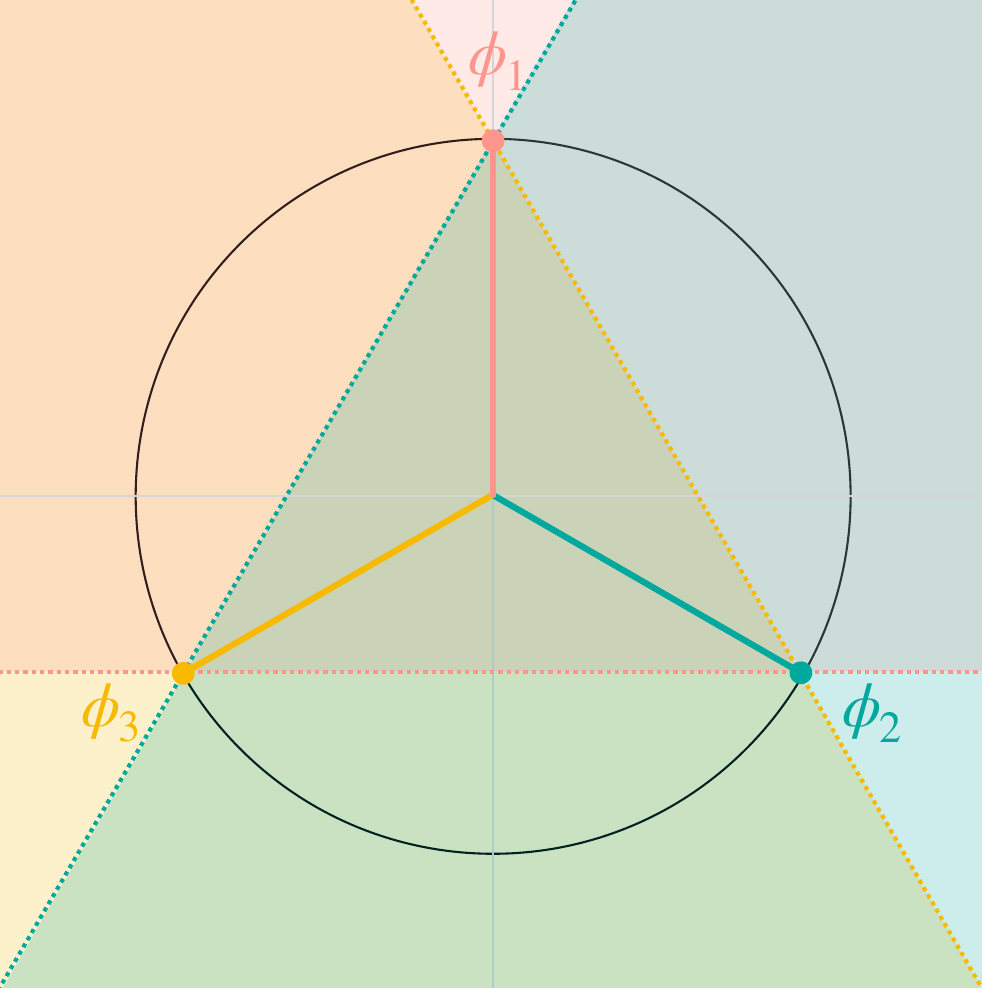}
    \end{subfigure}%

    \caption{Left: The frame composed of the standard basis and its negative elements is $\mathbf{0}$-rectifying on $\RR^2$. Mid: The standard basis is $\mathbf{0}$-rectifying on $\RR^2_+$ and $(-\mathbf{1})$-rectifying on $\mathbb{B}$. Right: The triangle frame is $(-\mathbf{\frac{1}{2}})$-rectifying on $\mathbb{B}$, but never on $\RR^2$ since there will always be cones where only one element is active (lighter areas).}
    \label{fig:fig2}
\end{figure}

\begin{ex}
    A basis can never be $\alpha$-rectifying on $\RR^n$ for any $\alpha$. However, the standard basis for $\RR^n$ is $\mathbf{0}$-rectifying on $\RR^n_+$ (Figure \ref{fig:fig2} mid).
\end{ex}

\begin{ex}\label{ex:mb}
    The frame
        $$
        \Phi_3=\left(
        \begin{pmatrix}
        0 \\
        1 
        \end{pmatrix},
        \begin{pmatrix}
        -\nicefrac{\sqrt{3}}{2} \\
        -\nicefrac{1}{2}
        \end{pmatrix},
        \begin{pmatrix}
        \nicefrac{\sqrt{3}}{2} \\
        -\nicefrac{1}{2}
        \end{pmatrix}
        \right)
        $$
    is not $\alpha$-rectifying on $\RR^n$ for any $\alpha$ since $m=3<4=2n$. However, by a geometric argument (see Figure \ref{fig:fig2} right), it is easy to see that $\Phi$ is $(-\mathbf{\frac{1}{2}})$-rectifying on $\mathbb{B}$. Note that $-\tfrac{1}{2}$ is the largest possible value here. 
\end{ex}

This makes clear that it is essential to select the domain carefully if we want to effectively study the injectivity behavior of the associated ReLU layer. This leads us to Question \textbf{\textit{Q1}}.

\subsection{Maximal domain}\label{sec:maxK}
We aim to identify the maximal domain $K$ for a frame $\Phi$ and a bias $\alpha$. This provides a characterization of the $\alpha$-rectifying property from a geometric point of view.
Recall that for $i\in I$ and $\alpha\in \RR^m$ we denote the closed affine half-space where the frame element $\phi_i$ is active for $\alpha$ by
\begin{equation}\label{eq:omega}
    \Omega_i^\alpha=\{x\in \mathbb{R}^n:\langle x,\phi_i\rangle\geq\alpha_i\}.
\end{equation}
Extending the intuition from the example in Figure \ref{fig:fig1} (right), we find that any frame is $\alpha$-rectifying on the intersection of sufficiently many $\Omega_i^\alpha$'s. Restricting to minimal frames, i.e., basis, reveals the characterization.

\begin{theorem}[Maximal domain]\label{thm:bas}
Let $\Phi\subset \mathbb{R}^n$ be a frame and $\alpha \in \mathbb{R}^m$. The maximal domain where $\Phi$ is $\alpha$-rectifying is given by
\begin{equation}\label{eq:bas}
    \mathcal{K}_{\alpha}^*=\bigcup_{\substack{J\subseteq I\\ \Phi_J\ \text{basis}}} \bigcap_{i\in J}\Omega_i^\alpha.
\end{equation}
In other words, $\Phi$ is $\alpha$-rectifying on $K$ if and only if $K\subseteq \mathcal{K}_{\alpha}^*$.
\end{theorem}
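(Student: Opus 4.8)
The plan is to establish the stated equivalence ``$\Phi$ is $\alpha$-rectifying on $K$ if and only if $K\subseteq\mathcal{K}_\alpha^*$'' directly, and then read off the maximal-domain claim from it. The single observation that drives everything is that, for a point $x$ and an index set $J\subseteq I$, one has $x\in\bigcap_{i\in J}\Omega_i^\alpha$ precisely when $J\subseteq\I$, since both conditions say $\langle x,\phi_i\rangle\geq\alpha_i$ for every $i\in J$. Coupling this dictionary between points and index sets with the fact recalled earlier that a frame for $\RR^n$ is nothing but a spanning set, the whole theorem collapses onto two elementary facts of finite-dimensional linear algebra: every spanning set of $\RR^n$ contains a basis, and every superset of a spanning set still spans.

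First I would prove the ``if'' direction. Assume $K\subseteq\mathcal{K}_\alpha^*$ and fix $x\in K$. Then $x$ belongs to some term of the union, i.e.\ $x\in\bigcap_{i\in J}\Omega_i^\alpha$ for an index set $J$ with $\Phi_J$ a basis. By the dictionary above this means $J\subseteq\I$, so $\Phi_\I\supseteq\Phi_J$ contains a basis and therefore spans $\RR^n$; hence $\Phi_\I$ is a frame. As $x\in K$ was arbitrary, $\Phi$ is $\alpha$-rectifying on $K$.

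For the converse, suppose $\Phi$ is $\alpha$-rectifying on $K$ and fix $x\in K$. Then $\Phi_\I$ is a frame, hence a spanning set, so it contains a subcollection $\Phi_J$, indexed by some $J\subseteq\I$, that is a basis. Because $J\subseteq\I$, the dictionary gives $x\in\bigcap_{i\in J}\Omega_i^\alpha$, which is one of the terms in the union defining $\mathcal{K}_\alpha^*$; thus $x\in\mathcal{K}_\alpha^*$. Since $x$ was arbitrary, $K\subseteq\mathcal{K}_\alpha^*$, completing the equivalence. Maximality is then immediate: taking $K=\mathcal{K}_\alpha^*$ in the ``if'' direction shows $\Phi$ is $\alpha$-rectifying on $\mathcal{K}_\alpha^*$ itself, while the converse shows that any domain on which $\Phi$ is $\alpha$-rectifying sits inside $\mathcal{K}_\alpha^*$, so $\mathcal{K}_\alpha^*$ is genuinely the largest such domain.

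I do not expect a serious obstacle here; the content is a careful translation between the pointwise rectifying condition and set membership in the union. The only points deserving attention are the two invocations of finite-dimensional linear algebra — extracting a basis from the spanning set $\Phi_\I$ in the converse, and enlarging a basis to the frame $\Phi_\I$ in the forward direction — together with the small remark that the union in \eqref{eq:bas} ranges over a non-empty index family: since $\Phi$ is assumed to be a frame for $\RR^n$, at least one $J\subseteq I$ with $\Phi_J$ a basis exists, so $\mathcal{K}_\alpha^*$ is well defined.
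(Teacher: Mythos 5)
Your proof is correct and follows essentially the same route as the paper's: both directions rest on the observation that $x\in\bigcap_{i\in J}\Omega_i^\alpha$ exactly when $J\subseteq\I$, combined with extracting a basis from the spanning set $\Phi_{\I}$ (forward direction) and noting that a collection containing a basis is a frame (converse). Your added remarks on maximality and on the non-emptiness of the union are harmless elaborations of what the paper leaves implicit.
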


\begin{proof}
Let $x\in K$. Assuming $\Phi$ to be $\alpha$-rectifying on $K$, then $\Phi_{\I}$ is a frame. Since (in $\RR^n$) every frame contains a basis, there is $L\subseteq \I$ such that the sub-collection $\Phi_L$ is a basis. Clearly, $\fphi\geq\alpha_i$ still holds for all $i\in L$, hence, $x\in \mathcal{K}_{\alpha}^*$.

For the converse direction, by definition of $\mathcal{K}_{\alpha}^*$ for all $x\in \mathcal{K}_{\alpha}^*$ there is $M\subseteq I$ with $\fphi\geq \alpha_i$ for all $i\in M$ such that $\Phi_M$ is a basis. Since $M\subseteq \I$, it follows that $\Phi_{\I}$ is a frame.
\end{proof}

\begin{figure}[t]
    \centering
    \begin{subfigure}[t]{0.31\textwidth}
        \centering
        \includegraphics[width=\textwidth]{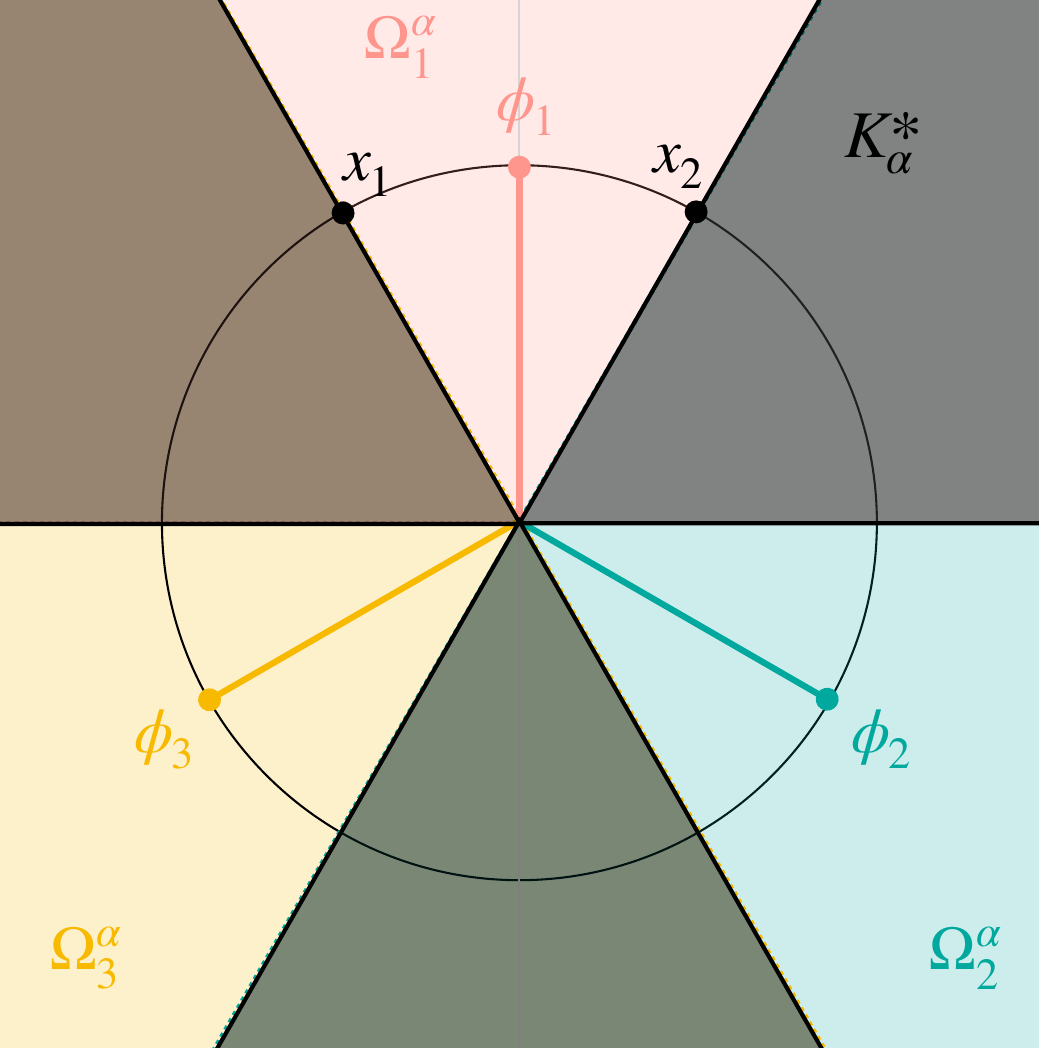}
        
    \end{subfigure}%
    \hfill
    \begin{subfigure}[t]{0.31\textwidth}
        \centering
        \includegraphics[width=\textwidth]{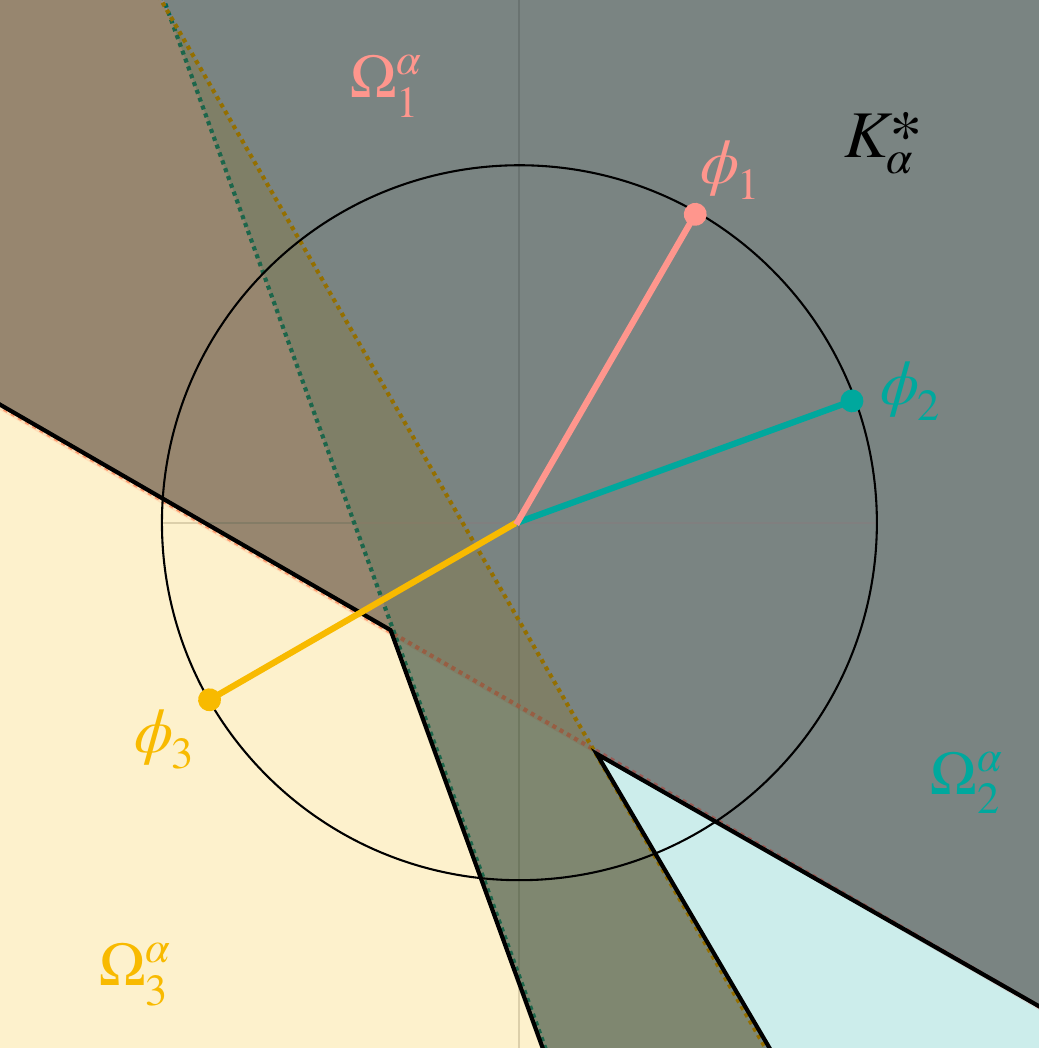}
    \end{subfigure}%
    \hfill
    \begin{subfigure}[t]{0.31\textwidth}
        \centering
        \includegraphics[width=\textwidth]{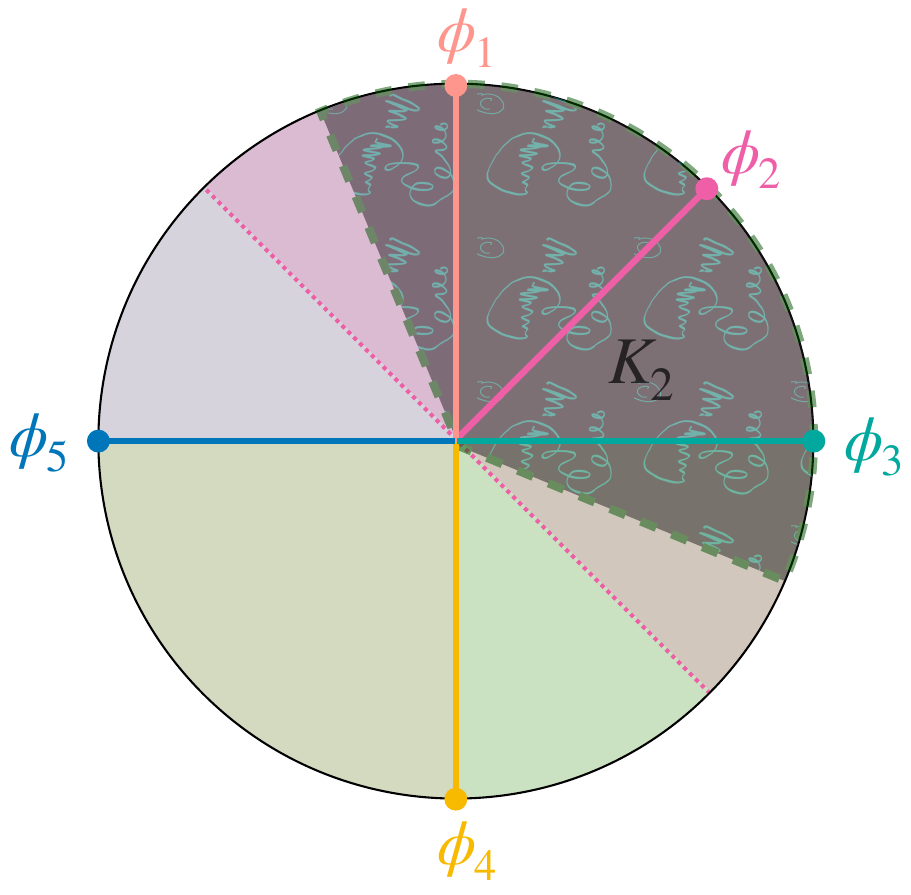}
    \end{subfigure}%

    \caption{The dark areas in the left and mid picture indicate the maximal domains $\mathcal{K}_{\alpha}^*$ for the triangle frame with zero bias (left), and a normalized random frame with random bias (mid). The right illustration corresponds to Example \ref{ex:a}. We point out how $K_2 = \{x\in K: 2\in J^*(x)\}$ looks like, where $J^*(x)$ is the most correlated basis for $x$, see Definition \ref{def:mostcorr}.}
    \label{fig:fig3}
\end{figure}

Using Theorem \ref{thm:bas} we find that any normalized frame $\Phi \subset \mathbb{S}$ is $\alpha$-rectifying on the closed ball $\mathbb{B}_r$ if and only if 
$$r\leq \inf_{x\in \RR^n\setminus\mathcal{K}_{\alpha}^*} \Vert x \Vert.$$
Another consequence of the theorem, together with $(ii)$ of Lemma \ref{prop:K} is that $\Phi$ is $\mathbf{0}$-rectifying on $\mathbb{B}_r$ if and only if $\Phi$ is $\mathbf{0}$-rectifying on $\mathbb{R}^n$. Hence, in this setting, checking a small neighborhood around the origin is already sufficient for the entire space. Note that the implication does not hold for $\alpha < \mathbf{0}$. We refer to Example \ref{ex:mb} for an example.\\

The characterization in Theorem \ref{thm:bas} further allows extending the implication from the $\alpha$-rectifying property to the injectivity of $\Ta$ (Theorem \ref{reluinj1}) to domains that are not open or convex, such as the sphere $\mathbb{S}$, the donut $\mathbb{D}_{r,s}$, and discrete data sets.

\begin{corollary}
    Let $\Phi=(\phi_i)_{i\in I}\subset \mathbb{R}^n$, $\alpha \in \mathbb{R}^m$ and $\emptyset\neq K\subseteq O \subseteq \mathcal{K}_{\alpha}^*$ for $O$ open or convex. If $\Phi$ is $\alpha$-rectifying on $K$, then $C_\alpha$ is injective on $K$.
\end{corollary}

\begin{proof}
    By Theorem \ref{thm:bas},
    $\Phi$ is $\alpha$-rectifying on $O$. Since $O$ is open or convex, by Theorem \ref{reluinj0}, $C_\alpha$ is injective on $O$, hence also on $K\subseteq O$.
\end{proof}
Note that, in general, the set $\mathcal{K}_{\alpha}^*$ is neither open nor convex. We can use this to demonstrate that a violation of the assumptions on $K$ in Theorem \ref{reluinj1} (i.e., not open and not convex) indeed leads to the conclusion that $\Ta$ is not injective. 

\begin{ex}\label{ex:mb2}
    Consider the frame $\Phi_3$ defined in Example \ref{ex:mb}, and look at
    $$ x_1 =
    \begin{pmatrix}
    \nicefrac{1}{2} \\
    \nicefrac{\sqrt{3}}{2}
    \end{pmatrix},\;
    x_2 =
    \begin{pmatrix}
    -\nicefrac{1}{2} \\
    \nicefrac{\sqrt{3}}{2}
    \end{pmatrix},
    $$
    (see Figure \ref{fig:fig3} left). For $\alpha = \mathbf{0}$ we have that $x_1, x_2\in \mathcal{K}_\alpha^*$ but also that $C_\alpha x_1 = C_\alpha x_2$. Hence, by Theorem \ref{thm:bas}, $\Phi$ is $\alpha$-rectifying on $\mathcal{K}_\alpha^*$ but $C_\alpha$ is not injective on $\mathcal{K}_\alpha^*$.
\end{ex}
A similar example can be constructed for the standard basis in $\RR^n$, $\alpha= \mathbf{0}$, and $K=\mathbb{B}_r^+$. See Figure \ref{fig:fig2} (mid) for an illustration in $\RR^2$.
The geometric intuition from the construction of the maximal domain reveals a natural trade-off to the bias vector, where 
\begin{equation*}
    \alpha'\geq \alpha\quad \Rightarrow\quad \mathcal{K}_{\alpha'}^* \subseteq 
\mathcal{K}_{\alpha}^*.
\end{equation*}
This should serve as the linking idea to the fact that finding a maximal bias for the $\alpha$-rectifying property can reveal another perspective to Theorem \ref{thm:bas}. With this, we proceed to answer Question \textbf{\textit{Q2}}.

\subsection{Maximal bias}\label{sec:maxa}
We aim to construct a maximal bias for a given frame $\Phi$ and domain $K$. This provides a characterization of the $\alpha$-rectifying property which is particularly suitable for verifying it in applications as it is straightforward to implement numerically.
Our approach to this is to decompose a frame $\Phi$ into sub-frames with highly correlated frame elements and identify the smallest analysis coefficients among all points $x\in K$ associated with these sub-frames. We present two approaches for such a decomposition. Approach A is based on finding the $n$ most correlated elements of $\Phi$ for each $x\in K$. It allows us to identify the maximal bias and with this the characterization of the $\alpha$-rectifying property of $\Phi$ under reasonable assumptions. Approach B, first introduced in \cite{haider2023relu}, is based on the vertex-facet configuration of the inscribing polytope associated with $\Phi$. It gives a geometrically intuitive sufficient condition for the $\alpha$-rectifying property of $\Phi$ but yields the maximal bias only in special situations. Algorithmic solutions are provided along with the theoretical results.

\subsection*{Approach A: Most correlated bases}
The construction of the maximal bias is based on the idea of finding the least correlated frame element in the \textit{most correlated basis} among all $x\in K$.
\begin{definition}\label{def:mostcorr}
    Let $\Phi$ be a frame and $x\in K$. We call $\Phi_{J^*(x)}$
    a most correlated basis for $x$
    if $J^*(x)\subseteq I$ satisfies that for all $J\subseteq I$ such that $\Phi_J$ is a basis it holds that
    \begin{equation}\label{eq:most}
        \min_{j\in J}\ \langle x,\phi_j\rangle \leq \min_{j\in J^*(x)}\ \langle x,\phi_j\rangle.
    \end{equation}
    We say that $\Phi$ includes a unique most correlated basis everywhere if $J^*(x)$ is unique for every $x\in K$.
\end{definition}
To give an illustration, in the setting of Example \ref{ex:mb2} we have that $J^*(x_1)=\{1,3\}$ and $J^*(x_2)=\{1,2\}$. Alternatively, we may interpret the condition in \eqref{eq:most} in the sense that $J^*(x)$ maximizes the functional
\begin{align}\label{eq:corrbas}
    \alpha(x) = \underset{\substack{J\subseteq I\\ \Phi_J\ \text{basis}}}{\max}\ \min_{j\in J}\ \langle x,\phi_j\rangle.
\end{align}
As a preliminary stage, we construct a maximal \textit{constant} bias.
This construction is similar to the one for the critical saturation level in \cite{alharbi2024sat}.
\begin{proposition}\label{prop:maxa}
    Let $\Phi$ be a frame and $K\subseteq \RR^n$. The maximal constant bias for $\Phi$ and $K$ is given by $\boldsymbol{\alpha_c}$ with
    \begin{equation}\label{eq:consta}
        \alpha_{c} = \inf_{x\in K}\underset{\substack{J\subseteq I\\ \Phi_J\ \text{basis}}}{\max}\ \min_{j\in J}\ \langle x,\phi_j\rangle = \inf_{x\in K}\ \min_{j\in J^*(x)}\ \langle x,\phi_j\rangle.
    \end{equation}
    In other words, $\Phi$ is $\boldsymbol{r}$-rectifying on $K$ if and only if $r\leq \alpha_c$.
\end{proposition}

\begin{proof}
    First, we show that $\Phi$ is $\boldsymbol{\alpha_c}$-rectifying on $K$. Let $x\in K$ then there is a basis $\Phi_{J(x)}$ such that for all $j\in J(x)$
    $$\langle x,\phi_j \rangle \geq \min_{j\in J(x)}\langle x,\phi_j \rangle \geq \alpha_c.$$
    Since $\Phi_{J(x)}$ is a frame, $\Phi$ is $\boldsymbol{\alpha_c}$-rectifying on $K$.
    
    Now let $r\in \RR$ and assume that $\Phi$ is $\boldsymbol{r}$-rectifying on $K$. For any $x\in K$ we deduce
    \begin{align}
        r\leq \inf_{x\in K} \min_{j\in I_x^{\boldsymbol{r}}}\langle x,\phi_j \rangle
        \leq
        \inf_{x\in K} \underset{\substack{j\in J\subseteq I_x^{\boldsymbol{r}}\\ \Phi_J\ \text{basis}}}{\min}\langle x,\phi_j\rangle
        \leq
        \inf_{x\in K}\underset{\substack{J\subseteq I\\ \Phi_J\ \text{basis}}}{\max}\ \min_{j\in J}\ \langle x,\phi_j\rangle = \alpha_c.
    \end{align}
\end{proof}
To construct a (non-constant) maximal bias vector we restrict ourselves to frames that include a unique most correlated basis everywhere.
Similar to the full-spark assumption, this is a mild condition in a numerical setting since for any frame there is an arbitrarily small perturbation such that the resulting perturbed frame includes a unique most correlated basis everywhere (c.f. Lemma \ref{lem:perturbation}).
If the frame is full-spark then the most correlated basis for $x$ is given by the collection of the $n$ frame elements which have the largest frame coefficients with $x$. A random frame fulfills this condition with probability one. 
Under this assumption, we can partition $K$ uniquely into subsets that are associated with a frame element that belongs to a most correlated basis. For every $i\in I$ we denote the corresponding set by
\begin{equation}\label{eq:partition1}
    K_i = \{x\in K: i\in J^*(x)\}.
\end{equation}
Since every $x\in K$ has a most correlated basis $\bigcup_{i\in I}K_i=K$ indeed holds. The right picture in Figure \ref{fig:fig3} illustrates the set $K_i$ in $\RR^2$, and the right plot in Figure \ref{fig:poly} illustrates the decomposition of $\mathbb{S}$ in $\RR^3$ into $K_i$'s. By minimizing the frame coefficients of $x\in K_i$ similar to \eqref{eq:consta} we indeed obtain a bias such that $\Phi$ possesses the $\alpha$-rectifying property but it is in general not maximal.

\begin{proposition}\label{prop:maxa2}
    Let $\Phi$ be a frame that includes a unique most correlated basis everywhere, and let $K\subseteq \RR^n$. If $\alpha^{\flat}_{K}$ is given as
    \begin{equation}\label{eq:maxa}
        \left(\alpha^{\flat}_{K}\right)_i = \inf_{x\in K_i} \langle x,\phi_i\rangle,
    \end{equation}
    then $\Phi$ is $\alpha^{\flat}_{K}$-rectifying on $K$.
\end{proposition}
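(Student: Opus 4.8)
The plan is to show that for every $x \in K$ the index set of active frame elements at $x$ contains the unique most correlated basis $J^*(x)$, which immediately forces $\Phi_{I_x^{\alpha_K^\flat}}$ to be a frame. Concretely, I would fix an arbitrary $x \in K$ and invoke the standing assumption that $\Phi$ includes a unique most correlated basis everywhere, so that $J^*(x)$ is well-defined and $\Phi_{J^*(x)}$ is a basis for $\mathbb{R}^n$.

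The key observation is that the definition \eqref{eq:maxa} of $\alpha_K^\flat$ is tailored exactly to this point. For each $i \in J^*(x)$ we have, by the definition \eqref{eq:partition1} of $K_i$, that $x \in K_i$; hence $x$ itself participates in the infimum defining $(\alpha_K^\flat)_i$, and therefore
\begin{equation*}
\langle x, \phi_i \rangle \geq \inf_{y \in K_i} \langle y, \phi_i \rangle = (\alpha_K^\flat)_i .
\end{equation*}
This means $i \in I_x^{\alpha_K^\flat}$, and since $i \in J^*(x)$ was arbitrary, $J^*(x) \subseteq I_x^{\alpha_K^\flat}$.

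To finish, I would note that since $\Phi_{J^*(x)}$ is a basis, hence in particular a frame, and $J^*(x) \subseteq I_x^{\alpha_K^\flat}$, the larger sub-collection $\Phi_{I_x^{\alpha_K^\flat}}$ is a spanning set and therefore a frame for $\mathbb{R}^n$. As $x$ was arbitrary, this establishes the $\alpha_K^\flat$-rectifying property on $K$ via Definition \ref{alpharect}.

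I do not anticipate a genuine obstacle here: the statement is essentially a direct consequence of the self-referential construction, in which each $x$ is one of the competitors in the infimum defining the bias components indexed by its own most correlated basis. The only point worth checking is that the infimum in \eqref{eq:maxa} need not be attained, but since I only use the elementary bound $\langle x, \phi_i \rangle \geq \inf_{y \in K_i} \langle y, \phi_i \rangle$, which holds for every element of $K_i$ by definition of the infimum, attainment plays no role. The uniqueness-of-most-correlated-basis hypothesis is used only to guarantee that the sets $K_i$ in \eqref{eq:partition1}, and hence the vector $\alpha_K^\flat$, are well-defined; it is not needed beyond that for this particular direction.
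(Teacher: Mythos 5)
Your proposal is correct and follows essentially the same argument as the paper's proof: for each $x\in K$ and each $i\in J^*(x)$ one has $x\in K_i$, so $\langle x,\phi_i\rangle \geq (\alpha^\flat_K)_i$, whence $J^*(x)\subseteq I_x^{\alpha^\flat_K}$ and the active sub-collection contains a basis. Your additional remarks on non-attainment of the infimum and the role of the uniqueness hypothesis are accurate but not needed beyond what the paper's (terser) proof already implies.
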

\begin{proof}
    Let $x\in K$ then $\langle x,\phi_j \rangle \geq (\alpha^{\flat}_{K})_j$ for all $j\in J^*(x)$. Since $\Phi_{J^*(x)}$ is a basis, $\Phi$ is $\alpha^{\flat}_{K}$-rectifying on $K$.
\end{proof}
Note that if $K_i$ is empty then $\phi_i$ is never contained in a most correlated basis. This means that $\phi_i$ is irrelevant for the $\alpha$-rectifying property of $\Phi$, i.e., the corresponding bias can be chosen arbitrarily large without affecting it. We shall exclude these cases from the estimation for a maximal bias.

Moreover, although $\alpha^{\flat}_{K}$ gives a simple and intuitive indication of the critical bias, there is still room for increasing this bias while maintaining the $\alpha$-rectifying property. Instead of minimizing over $K_i$ as in Proposition \ref{prop:maxa}, we shall minimize over the set of all points $x$ such that $i\in J^*(x)$ and no element outside the most correlated basis is active for $x$ and $\alpha^\flat_K$. This set is given as
\begin{equation}\label{eq:Ksharp}
    K_i^\sharp = K_i\setminus \Bigg( \bigcap_{y\in K_i} \bigcup_{j\notin
    J^*(y)} \Omega_j^{\alpha^{\flat}_{K}}\Bigg).
\end{equation}
Since $K_i^\sharp$ is a subset of $ K_i$ the bias values that we get from minimizing over the $K_i^\sharp$ will be larger than the ones of $ \alpha^{\flat}_{K}$.
However, note that if 
$K_i\subseteq \big(\bigcap_{y\in K_i} \bigcup_{j\notin
J^*(y)} \Omega_j^{\alpha^{\sharp}_{K}}\big)$ then whenever $\phi_i$ belongs to the most correlated basis for $x$, there is additionally another active frame element from outside the most correlated basis.
Similarly to when $K_i=\emptyset$, the frame element $\phi_i$ can then be interpreted as being redundant in the sense that it can be completely removed from $\Phi$ while preserving the $\alpha$-rectifying property.
We give an example of such a pathological situation.
\begin{ex}\label{ex:a}
    For the frame
    $$\Phi=\left(
        \begin{pmatrix}
        1 \\
        0
        \end{pmatrix},
        \begin{pmatrix}
        \nicefrac{1}{\sqrt{2}} \\
        \nicefrac{1}{\sqrt{2}}
        \end{pmatrix},
        \begin{pmatrix}
        0 \\
        1
        \end{pmatrix},
        \begin{pmatrix}
        -1 \\
        0
        \end{pmatrix},
        \begin{pmatrix}
        0 \\
        -1
        \end{pmatrix}
        \right)$$
    and $K=\mathbb{B}$ we have that
    $\alpha_\mathbb{B}^{\flat} = \mathbf{0}$
    and
    $$K_2 = \left\{x=s\cdot \begin{pmatrix}\cos{t} \\ \sin{t} \end{pmatrix}: t\in \left[-\frac{3 \pi}{8},\frac{3 \pi}{8}\right], s\in \left[0,1\right]\right\}.$$
    The right picture in Figure \ref{fig:fig3} shows this setting. For every $x\in K_2$ there is an element outside the most correlated basis that is additionally active.
    It follows that $K_2^{\sharp}=\emptyset$.
    Hence, $\phi_2$ is redundant for the $\alpha$-rectifying property of $\Phi$.
\end{ex}
Hence, to guarantee that a maximal bias exists we shall assume $K_i^{\sharp}\neq \emptyset$ for all $i\in I$.
\begin{definition}[]\label{def:uniquemaxa}
    We call $K\subseteq \RR^n$ to be bias-exact for $\Phi$ if $K_i^{\sharp}\neq \emptyset$ for all $i\in I$, where $K_i^{\sharp}$ is defined as in \eqref{eq:Ksharp}.
\end{definition}
The following theorem contains the main result on the maximal bias and represents the counterpart to Theorem \ref{thm:bas} on the maximal domain.

\begin{theorem}[Maximal bias]\label{thm:maxa}
    Let $\Phi$ be a frame that includes a unique most correlated basis everywhere and $K\subseteq \RR^n$ be bias-exact for $\Phi$. The maximal bias for $\Phi$ and $K$ is given by $\alpha^{\sharp}_{K}$ with
    \begin{equation}\label{eq:maxasharp}
        \left(\alpha^{\sharp}_{K}\right)_i =
        \inf_{x\in K_i^\sharp} \langle x,\phi_i\rangle.
    \end{equation}
    In other words, $\Phi$ is $\alpha$-rectifying on $K$ if and only if $\alpha \leq \alpha^{\sharp}_{K}$.
\end{theorem}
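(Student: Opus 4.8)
The plan is to establish the stated equivalence by proving the two implications separately, using Proposition~\ref{prop:inclusive}(ii) to reduce the direction ``$\alpha\le\alpha^{\sharp}_{K}\Rightarrow\alpha$-rectifying'' to the single claim that $\Phi$ is $\alpha^{\sharp}_{K}$-rectifying on $K$; monotonicity in the bias then covers all smaller $\alpha$ automatically. Throughout, the standing assumptions do the structural bookkeeping: the unique-most-correlated-basis hypothesis (Def.~\ref{def:mostcorr}) makes $J^*(x)$ a well-defined basis for every $x$ and makes the sets $K_i$ of \eqref{eq:partition1} a genuine partition of $K$, while bias-exactness (Def.~\ref{def:uniquemaxa}) guarantees each $K_i^{\sharp}\neq\emptyset$, so that the infimum in \eqref{eq:maxasharp} is a real number and not $+\infty$.

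For \emph{sufficiency}, fix $x\in K$ and look at its most correlated basis $J^*(x)$. If every $j\in J^*(x)$ stays active, i.e.\ $\langle x,\phi_j\rangle\ge(\alpha^{\sharp}_{K})_j$, then $J^*(x)\subseteq\I$ and $\Phi_{\I}$ contains the basis $\Phi_{J^*(x)}$, hence is a frame. Otherwise some $i\in J^*(x)$ satisfies $\langle x,\phi_i\rangle<(\alpha^{\sharp}_{K})_i=\inf_{z\in K_i^{\sharp}}\langle z,\phi_i\rangle$; since $x\in K_i$ this forces $x\notin K_i^{\sharp}$. Unwinding the definition \eqref{eq:Ksharp}, a point removed from $K_i$ is exactly one at which an element outside the most correlated basis is active, so there is $l\notin J^*(x)$ with $\phi_l$ active at $x$. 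The remaining step is to exchange such a compensating $\phi_l$ for the lost $\phi_i$ and certify that the result is again a basis sitting inside $\I$; this is where I would run a matroid-style exchange argument, using the full-spark/uniqueness geometry to guarantee linear independence of the swapped collection.

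For \emph{necessity} I would argue by contraposition. Suppose $\alpha_i>(\alpha^{\sharp}_{K})_i$ for some $i$. Since $(\alpha^{\sharp}_{K})_i$ is the infimum of $\langle\,\cdot\,,\phi_i\rangle$ over the nonempty set $K_i^{\sharp}$, pick a near-minimizer $x\in K_i^{\sharp}$ with $\langle x,\phi_i\rangle<\alpha_i$, so that $i\notin\I$. By the defining property of $K_i^{\sharp}$, no frame element outside $J^*(x)$ is active at $x$, so the active set $\I$ is contained in $J^*(x)\setminus\{i\}$, a collection of only $n-1$ vectors, which cannot span $\RR^n$. Hence $\Phi_{\I}$ is not a frame and $\Phi$ fails to be $\alpha$-rectifying on $K$, which is the contrapositive of ``$\alpha$-rectifying $\Rightarrow\alpha\le\alpha^{\sharp}_{K}$''.

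The main obstacle I anticipate is the compensation/exchange step in the sufficiency direction, together with a subtlety common to both directions: the set $K_i^{\sharp}$ is defined through activity at the \emph{flat} bias $\alpha^{\flat}_{K}$ (Proposition~\ref{prop:maxa2}), whereas the active sets under scrutiny are taken at the strictly larger bias $\alpha^{\sharp}_{K}$ (and, in necessity, at an arbitrary rectifying $\alpha$ whose other coordinates are not a priori controlled). I therefore expect the delicate work to be showing that an element active at the flat level remains usable at the sharp level and yields a linearly independent replacement, and in necessity that the other coordinates cannot spuriously reactivate an outside element; this reconciliation is precisely what the construction of $K_i^{\sharp}$ is engineered to enforce, and translating its set-theoretic definition into an honest active basis (handling non-attainment of the infimum via minimizing sequences and a closedness or limiting argument) is the crux. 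As a consistency check I would verify the result against the dual statement Theorem~\ref{thm:bas} through the trade-off $\alpha'\ge\alpha\Rightarrow\mathcal{K}^*_{\alpha'}\subseteq\mathcal{K}^*_{\alpha}$, which should confirm that $\alpha^{\sharp}_{K}$ is genuinely maximal.
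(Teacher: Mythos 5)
Your plan reproduces the paper's overall architecture (monotonicity reduction via Proposition \ref{prop:inclusive}, sufficiency through the most correlated basis $J^*(x)$, necessity through a near-minimizer in $K_i^{\sharp}$), but the two steps you defer are not both completable, and one of them fails as you have formulated it.

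The necessity direction is the genuine gap. You run the contraposition for an \emph{arbitrary} bias with $\alpha_i>(\alpha^{\sharp}_{K})_i$ and claim that at a near-minimizer $x\in K_i^{\sharp}$ the active set satisfies $I_x^{\alpha}\subseteq J^*(x)\setminus\{i\}$. The inactivity of elements outside $J^*(x)$ that the definition of $K_i^{\sharp}$ provides is relative to $\alpha^{\flat}_{K}$ (hence also to $\alpha^{\sharp}_{K}$, since $\alpha^{\flat}_{K}\leq\alpha^{\sharp}_{K}$), but it says nothing about the uncontrolled coordinates $\alpha_j$, $j\neq i$, of your arbitrary $\alpha$: if $\alpha_j<\langle x,\phi_j\rangle$ for some $j\notin J^*(x)$, that element reactivates and the containment fails. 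In fact no amount of ``delicate work'' can repair this, because the statement you are trying to prove is false for arbitrary $\alpha$: take $\Phi$ full-spark with $m\geq n+1$ on a bounded, bias-exact $K$ with unique most correlated bases (a generic random frame will do), set $\alpha_i$ above $(\alpha^{\sharp}_{K})_i$ and all other $\alpha_j$ so negative that $\langle x,\phi_j\rangle\geq\alpha_j$ for every $x\in K$; then $I_x^{\alpha}\supseteq I\setminus\{i\}$ is a frame for every $x$, so $\Phi$ \emph{is} $\alpha$-rectifying although $\alpha\not\leq\alpha^{\sharp}_{K}$. The paper avoids this by testing only the pinned perturbation $\alpha_i=(\alpha^{\sharp}_{K})_i+\varepsilon$ and $\alpha_j=(\alpha^{\sharp}_{K})_j$ for $j\neq i$: with the other coordinates held at the sharp values, flat-level inactivity outside $J^*(x_0)$ does transfer to $\alpha$, so $I_{x_0}^{\alpha}\subseteq J^*(x_0)\setminus\{i\}$ has at most $n-1$ elements and cannot be a frame. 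The maximality assertion is proved (and must be read) in this coordinatewise sense.

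On sufficiency, your case split starts correctly, but the exchange step you outline cannot be certified from the theorem's hypotheses, for exactly the reason you flag: the compensating element $\phi_l$ is only known to be active at the level $\alpha^{\flat}_{K}$, and since $(\alpha^{\sharp}_{K})_l\geq(\alpha^{\flat}_{K})_l$ this does not make it active at the sharp level; moreover, linear independence of the swapped collection would need full spark, which is not among the hypotheses (only uniqueness of the most correlated basis is). The paper performs no exchange at all: invoking bias-exactness and the construction of the sets $K_j^{\sharp}$, it asserts directly that every $j\in J^*(x)$ remains active for $\alpha^{\sharp}_{K}$, so the basis $\Phi_{J^*(x)}$ already sits inside $I_x^{\alpha^{\sharp}_{K}}$. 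Your exchange scenario is precisely the situation those definitions are engineered to exclude, not to repair, so the route through a matroid-style swap should be abandoned rather than completed.
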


\begin{proof}    
    For the converse direction, we have that $\langle x,\phi_j \rangle\geq (\alpha^{\sharp}_{K})_j$ for all $j\in J^*(x)$ and $x\in K_j^{\sharp}\neq \emptyset$. Since $\Phi_{J^*(x)}$ is a basis,
    $\Phi$ is $\alpha^{\sharp}_{K}$-rectifying on $K$.
    
    We show the implication direction by counterposition. Let $i\in I$ and assume that $\Phi$ is $\alpha$-rectifying on $K$ where $\alpha$ is given by $\alpha_i = (\alpha^{\sharp}_{K})_i + \varepsilon$ for some $\varepsilon > 0$
    and $\alpha_j=(\alpha^{\sharp}_{K})_j$ for $j\neq i$.
    By definition of the infimum in \eqref{eq:maxasharp}, the set $K_i^\sharp$ \eqref{eq:Ksharp}, and the construction and uniqueness of the most correlated basis \eqref{eq:corrbas} there is $x_0\in K_i$ such that
    \begin{equation}\label{eq:boom}
        \left(\alpha^{\sharp}_{K}\right)_i < \langle x_0,\phi_i\rangle 
        < \left(\alpha^{\sharp}_{K}\right)_i + \varepsilon\qquad \text{and}\qquad \langle x_0,\phi_j\rangle 
        < \left(\alpha^{\sharp}_{K}\right)_j
    \end{equation}
    for all $j\notin 
    J^*(x_0)$.
    This implies that the active coordinates for $x_0$ and $\alpha$ are exactly given by $$I_{x_0}^{\alpha} = J^*(x_0)\setminus \{i\}.$$
    As a consequence, $\Phi_{I_{x_0}^\alpha}$ is not a frame.
    Therefore, $\Phi$ is not $\alpha$-rectifying on $K_i$, finishing the proof.
\end{proof}
Note that while assuming uniqueness of the most correlated basis everywhere is natural in the numerical setting, it excludes frames that exhibit certain symmetries, such as the frame from Example \ref{ex:a}. In such a case, the computations of the biases in \eqref{eq:maxa} and \eqref{eq:maxasharp} will in general depend on the choice of the most correlated basis and therefore give ambiguous results. By including a condition that chooses one of the most correlated bases, Proposition \ref{prop:maxa2} and Theorem \ref{thm:maxa} can also be formulated without the uniqueness everywhere assumption. We will not pursue this idea here.\\

Summarizing, Proposition \ref{prop:maxa2} provides a simple and intuitive sufficient condition for the $\alpha$-rectifying property on $K$ via the bias vector $\alpha^{\flat}_{K}$. On the other hand, Theorem \ref{thm:maxa} provides a full characterization via the more complicated bias vector $\alpha^{\sharp}_{K}$ under some additional assumptions that are difficult to check in practice. In fact, except for special situations, it is unclear how to compute both of the presented bias vectors explicitly. To implement the construction of a bias for verifying injectivity in applications, we therefore present an algorithmic approach that computes $\alpha^{\flat}_{X_N}$ for a finite sampling set $X_N\subset K$ as an approximation for $\alpha^{\flat}_{K}$.\\

\textbf{Algorithmic solution for Approach A.} We show how $\alpha_K^{\flat}$ can be approximated numerically via sampling. Let $X_N=(x_k)_{k=1}^N\subset K$ be a sequence of $N$ samples in $K$. By iteratively updating the values of the bias estimation corresponding to the most correlated basis of the current sample as in Proposition \ref{prop:maxa2}, we obtain an approximation of $\alpha_K^{\flat}$ through the sampling set $X_N$.
To measure the sampling error quantitatively, we use the Euclidean covering radius (or ''mesh norm‘‘) of $X_N$ for $K$ \cite{damelin2005point}, defined by
\begin{equation}\label{eq:cover}
    \rho(X_N; K) = \sup_{x\in K} \min_{1\leq i \leq N}\Vert x - x_i\Vert.
\end{equation}
\begin{theorem}[Sampling-based Bias Estimation]\label{thm:samp}
    Let $X_N=\{x_i\}_{i=1}^N\subset K$ and $\Phi\subset \mathbb{S}$ be a normalized full-spark frame. Choose $\alpha^{(0)} \in \RR^m$ and iteratively define for all $1\leq k\leq N$ and $i\in J^*(x_{k})$
    \begin{equation}\label{eq:mcbe}
        (\alpha^{(k)})_{i} = \min \left\{ \langle x_k,\phi_i\rangle, \alpha^{(k-1)}_i \right\}.
    \end{equation}
    Then 
    $\Phi$ is $\alpha^{(N)}$-rectifying on $X_N$. Moreover, $\Phi$ is $(\alpha^{(N)}-\rho(X_N;K))$-rectifying on $K$.
\end{theorem}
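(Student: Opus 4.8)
The plan is to read off $\alpha^{(N)}$ as the discrete counterpart of the bias $\alpha^{\flat}_K$ from Proposition \ref{prop:maxa2}: the update \eqref{eq:mcbe} is nothing but a running coordinatewise minimum, so that after all samples are processed one has, for every $i$ occurring in some most correlated basis,
\[
(\alpha^{(N)})_i = \min\Big(\alpha^{(0)}_i,\ \min_{k:\,i\in J^*(x_k)}\langle x_k,\phi_i\rangle\Big).
\]
Both assertions then reduce to a single geometric fact: for any target point the \emph{entire} most correlated basis of a suitably chosen reference sample is active under the relevant bias, and a most correlated basis is by definition a basis, hence a frame. I would prove the two claims separately, the first on the finite set $X_N$ and the second on all of $K$ via the covering radius.

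For the first claim I would record that, by \eqref{eq:mcbe}, the sequence $k\mapsto(\alpha^{(k)})_i$ is non-increasing and that $(\alpha^{(k)})_i\le\langle x_k,\phi_i\rangle$ whenever $i\in J^*(x_k)$. Consequently $(\alpha^{(N)})_i\le\langle x_k,\phi_i\rangle$ for every $i\in J^*(x_k)$, i.e. $J^*(x_k)\subseteq I_{x_k}^{\alpha^{(N)}}$. Since $\Phi_{J^*(x_k)}$ is a basis and therefore a frame, $\Phi_{I_{x_k}^{\alpha^{(N)}}}$ is a frame as well; as this holds for each $x_k\in X_N$, the frame $\Phi$ is $\alpha^{(N)}$-rectifying on $X_N$. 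This is essentially Proposition \ref{prop:maxa2} applied with $K=X_N$, combined with the inclusiveness of Proposition \ref{prop:inclusive}(ii) to absorb the extra minimum with $\alpha^{(0)}$.

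The second claim is the genuinely new step and rests on the covering radius; write $\rho=\rho(X_N;K)$. Given $x\in K$, finiteness of $X_N$ supplies a sample $x_k$ with $\Vert x-x_k\Vert\le\rho$. For each $i\in J^*(x_k)$, Cauchy--Schwarz together with the normalization $\Vert\phi_i\Vert=1$ (this is where $\Phi\subset\mathbb{S}$ enters) yields
\[
\langle x,\phi_i\rangle = \langle x_k,\phi_i\rangle + \langle x-x_k,\phi_i\rangle \ge \langle x_k,\phi_i\rangle - \Vert x-x_k\Vert \ge (\alpha^{(N)})_i - \rho,
\]
the last step using the bound from the first claim. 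Hence $J^*(x_k)\subseteq I_{x}^{\alpha^{(N)}-\rho}$, so $\Phi_{I_{x}^{\alpha^{(N)}-\rho}}$ contains the basis $\Phi_{J^*(x_k)}$ and is a frame. As $x\in K$ was arbitrary, $\Phi$ is $(\alpha^{(N)}-\rho)$-rectifying on $K$.

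The main obstacle is precisely this discrete-to-continuous passage, and the key point is that normalization turns each coordinate functional $x\mapsto\langle x,\phi_i\rangle$ into a $1$-Lipschitz map, so that the covering radius controls the worst-case drop of every frame coefficient uniformly in $i$; the estimate mirrors the one in Lemma \ref{lem:perturbation}, with the role of the frame perturbation now played by the sample spacing. Two remarks sharpen the hypotheses: the argument never uses uniqueness of the most correlated basis, only that $J^*(x_k)$ is \emph{a} basis, so full-spark enters solely to guarantee that the $n$ most correlated elements do form a basis; and the choice of $\alpha^{(0)}$ is immaterial, because only coordinates lying in some most correlated basis are ever invoked, while enlarging the remaining bias entries cannot destroy the frame property of the active set.
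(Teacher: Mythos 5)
Your proposal is correct and follows essentially the same route as the paper's proof: the discrete claim comes from the running minimum forcing $J^*(x_k)\subseteq I_{x_k}^{\alpha^{(N)}}$ (a basis by full-spark, hence a frame), and the passage to all of $K$ uses the covering radius together with Cauchy--Schwarz and $\Vert\phi_i\Vert=1$ to show the relevant index set stays active under the bias $\alpha^{(N)}-\rho(X_N;K)$. The only cosmetic difference is that you track the basis $J^*(x_k)$ directly while the paper runs the same inequality over the full active set $I_{x_k}^{\alpha^{(N)}}$ and dismisses the discrete claim as holding ``by construction''; your added remarks on monotonicity and the irrelevance of $\alpha^{(0)}$ are accurate refinements, not deviations.
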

If the elements in $\Phi$ are not normalized, we can extend the statement above by including the norms $w = (\Vert \phi_i \Vert)_{i\in I}$, obtaining that $\Phi$ is $(\alpha^{(N)}-\rho(X_N;K)\cdot w)$-rectifying on $K$.
If the frame elements of $\Phi$ lie in $K$, a good initialization is starting the bias estimation with the frame elements themselves as samples. Otherwise, we may set $(\alpha^{(0)})_i = \infty$ for all $i\in I$.

\begin{proof}[Proof of Theorem \ref{thm:samp}]
    Let $x\in K$, then there is $x_i\in X_N$ with $\Vert x-x_i \Vert\leq \rho(X_N;K)$.
    Furthermore, for every $j\in I_{x_i}^{\alpha^{(N)}}$ it holds that
    \begin{align}\label{eq:rad}
    \begin{split}
        (\alpha^{(N)})_j &\leq \langle x_i,\phi_j \rangle\\ &= \langle x_i-x,\phi_j \rangle + \langle x,\phi_j \rangle \\&\leq \Vert x-x_i \Vert + \langle x,\phi_j \rangle \\&\leq \rho(X_N;K) + \langle x,\phi_j \rangle.
    \end{split}
    \end{align}
    By rearranging \eqref{eq:rad} it follows that $\langle x,\phi_j \rangle \geq (\alpha^{(N)})_j - \rho(X_N;K)$. We deduce that $I_{x_i}^{\alpha^{(N)}} \subseteq I_{x}^{\alpha^{(N)} - \rho(X_N;K)}$ for all $x\in K$.
    Clearly, $\Phi$ is $\alpha^{(N)}$-rectifying on $X_N$ by construction. Using \eqref{eq:rad}, the second claim follows immediately.
\end{proof}

Implementing the algorithm described in Theorem \ref{thm:samp} can be done via a Monte-Carlo approach, where $X_N$ is a collection of $N$ random samples on $K$ w.r.t.~some probability measure on $K$, see Appendix C1. We demonstrate numerical experiments in Section \ref{sec:experiments}. Intuitively, we want a measure that guarantees a small covering radius $\rho(X_N; K)$ for large $N$, such that $\alpha^{(N)}$ converges to $ \alpha_K^{\flat}$ in probability. The uniform distribution on $\mathbb{B}_r$ is a possible example.
If $X_N$ are i.i.d.~uniform samples on $\mathbb{S}$ the expectation of $\rho(X_N;\mathbb{S})$ is given by
\begin{equation}\label{eq:asym}
    \mathbb{E}[\rho(X_N;\mathbb{S})] \asymp \left(\frac{\log(N)}{N}\right)^{\frac{1}{n}}.
\end{equation}
The above estimate and more explicit tail-bound estimates for the probability distribution of $\rho(X_N;\mathbb{S})$ are derived in \cite{reznikov2016covering}.
Unfortunately, this indicates that in high dimensions it becomes infeasible to handle the covering radius only by increasing the number of test samples $N$. Hence, it appears necessary to construct the sampling sequence $X_N$ in a more structured way, e.g., by quasi Monte-Carlo methods \cite{breger2018covering}, or by sampling directly from the distribution of the dataset. With the latter approach, the number of sampling points for a good approximation of $\alpha_K^{\flat}$ can potentially be reduced, and the injectivity of the ReLU layer is ensured on a domain that is tailored to the dataset.

\subsection*{Approach B: Facets of the inscribing polytope}
We obtain another natural decomposition of $\Phi$ into sub-frames with high correlation via the convex polytope that arises from taking the convex hull of the set of all elements in $\Phi$ \cite{haider2023relu}. This so-called \textit{inscribing polytope} of $\Phi$ \cite{richardson2009inscribing} is given by
\begin{equation}\label{eq:poly}
    P_\Phi=\{x\in \RR^n: x = \sum_{i \in I} c_i \cdot \phi_i, c_i\geq 0, \sum_{i \in I} c_i = 1 \}.
\end{equation}
Any non-empty intersection of $P_\Phi$ with an affine half-space such that none of the interior points of $P_\Phi$ (w.r.t.~the induced topology on $P_\Phi$) lie on its boundary is called a face of $P_\Phi$ \cite{zieg12}. The $0$-dimensional faces of $P_\Phi$ are known as vertices and the $(n-1)$-dimensional faces are called \textit{facets}.
Assuming normalized frames here ($\Phi\subset\mathbb{S}$) the set of vertices of $P_\Phi$ always coincides with the set of frame elements of $\Phi$. Note that this is generally the case if the elements in $\Phi$ lie in a strictly convex set. Moreover, every facet is a convex polytope where the vertices coincide with a sub-collection of $\Phi$, and every element occurs as a vertex at least once. We refer to Figure \ref{fig:fig4} for an illustration in $\RR^3$.
For a facet $F$, we denote the index set corresponding to its vertices by
\begin{equation}\label{eq:vertex}
    I_{F}=\{i\in I:\phi_i\in F\}.
\end{equation}
The following property is key, stated and proven in \cite{haider2023relu}.
\begin{lemma}\label{lem:facet}
    Let $\Phi$ be a frame and $F$ be a facet of $P_\Phi$. If $0\notin F$, then $\Phi_{I_{F}}$ is a frame.
\end{lemma}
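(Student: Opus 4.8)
The plan is to show that the $n$ vertices (or more) defining a facet $F$ of the inscribing polytope $P_\Phi$, when $0 \notin F$, span $\RR^n$. The key observation is that a facet is an $(n-1)$-dimensional face, so it lives in an affine hyperplane $H$ of $\RR^n$. Since $F$ is a facet of the full-dimensional polytope $P_\Phi$, the hyperplane $H$ is a supporting hyperplane, meaning $P_\Phi$ lies entirely in one of the closed half-spaces bounded by $H$, and $F = P_\Phi \cap H$. First I would write this supporting hyperplane as $H = \{x \in \RR^n : \langle x, v\rangle = c\}$ for some unit normal $v$ and scalar $c$, chosen so that $\langle \phi_i, v\rangle = c$ exactly for $i \in I_F$ and $\langle \phi_i, v\rangle < c$ for $i \notin I_F$.

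Next I would use the hypothesis $0 \notin F$ to pin down the sign of $c$. Because $0 \notin F$ and $F \subseteq H$, we have $0 \notin H$, so $c \neq 0$. Moreover, since $0 \in P_\Phi$ is impossible in general — but here the relevant point is that the origin, if it lies in $P_\Phi$, is in the interior of the supported half-space — I would argue that $c$ is strictly positive by orienting $v$ appropriately (every $\phi_i$ for $i \in I_F$ satisfies $\langle \phi_i, v\rangle = c$, and these are unit vectors with $c \neq 0$). The crucial consequence is that no $\phi_i$ with $i \in I_F$ is orthogonal to $v$: indeed $\langle \phi_i, v\rangle = c \neq 0$ for every such $i$.

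The heart of the argument, and the step I expect to be the main obstacle, is establishing that $\Phi_{I_F} = (\phi_i)_{i \in I_F}$ spans $\RR^n$. Since $F$ is an $(n-1)$-dimensional polytope, its vertices affinely span the hyperplane $H$, which is an $(n-1)$-dimensional affine subspace not passing through the origin. I would argue that the $\phi_i$, $i \in I_F$, affinely span $H$, hence linearly span $\RR^n$: any $(n-1)$-dimensional affine subspace $H$ with $0 \notin H$ has the property that a set of points affinely spanning $H$ must linearly span all of $\RR^n$. Concretely, the affine span of $\{\phi_i : i \in I_F\}$ equals $H$, so its linear span contains $H$ and therefore has dimension at least $n-1$; if that linear span were exactly $H$'s direction space (dimension $n-1$), it would be a linear hyperplane through the origin, forcing all $\phi_i$ into a linear subspace and contradicting $c \neq 0$ (since $0$ would then be in the affine span). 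Hence the linear span is all of $\RR^n$.

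Thus $\Phi_{I_F}$ contains a spanning set for $\RR^n$, so it is a frame, completing the proof. The delicate point to get right is the passage from \emph{affine} span (of the facet vertices within the hyperplane $H$) to \emph{linear} span of $\RR^n$, which is exactly where the hypothesis $0 \notin F$ enters and must be invoked carefully; once that dimension-counting lemma is in place, the rest is routine.
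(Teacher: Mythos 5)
Your proof has a genuine gap at the step where you claim that $0 \notin F$ together with $F \subseteq H$ implies $0 \notin H$: this inference runs the wrong way (a point can lie in the supporting hyperplane $H$ without lying in the bounded facet $F = P_\Phi \cap H$), and it is exactly the crux of the lemma, not a formality. Everything downstream --- $c \neq 0$, the non-orthogonality of the $\phi_i$ to $v$, and the passage from affine span to linear span --- depends on $0 \notin H$, so the proof does not go through as written. In fact, no proof can close this gap from the literal hypothesis $0 \notin F$ alone, even for normalized frames: take in $\RR^3$ the unit vectors $\phi_1 = (1,0,0)$, $\phi_2 = (\tfrac{1}{2}, \tfrac{\sqrt{3}}{2}, 0)$, $\phi_3 = (\tfrac{1}{2}, -\tfrac{\sqrt{3}}{2}, 0)$, $\phi_4 = (0,0,-1)$. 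This is a frame, $P_\Phi$ is a tetrahedron, and $F = \operatorname{conv}\{\phi_1,\phi_2,\phi_3\}$ is a facet with supporting hyperplane $H = \{x \in \RR^3 : x_3 = 0\}$. Every point of $F$ has first coordinate at least $\tfrac{1}{2}$, so $0 \notin F$; yet $\Phi_{I_F} = (\phi_1,\phi_2,\phi_3)$ spans only the plane $x_3 = 0$ and is not a frame for $\RR^3$. Here $0 \in H \setminus F$, precisely the configuration your inference dismisses.

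What the argument actually needs is the stronger hypothesis that the origin is not in the affine hull of $F$ (equivalently $0 \notin H$). In the present paper the lemma's proof is deferred to \cite{haider2023relu}, and wherever the lemma is applied (Theorem \ref{thm:pbe} and Proposition \ref{prop:pbeall}) the frame is assumed omnidirectional, i.e., $0$ lies in the interior of $P_\Phi$; then for the outward supporting hyperplane of any facet one has $\langle 0, v\rangle = 0 < c$, which secures $0 \notin H$. Once that is in place, your third paragraph is correct and is the standard argument: the points $\phi_i$, $i \in I_F$, affinely span the $(n-1)$-dimensional affine hyperplane $H$; a linear subspace containing an affine hyperplane that misses the origin must be all of $\RR^n$; hence $\Phi_{I_F}$ spans $\RR^n$ and is a frame. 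So the skeleton of your proof is right, but the one step you yourself flagged as delicate --- how $0 \notin F$ enters --- is exactly where it breaks, and it cannot be patched without importing omnidirectionality or reading the hypothesis as $0 \notin \operatorname{aff}(F)$.
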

This ensures that the facets of $P_\Phi$ provide a natural decomposition into sub-frames $\Phi_{I_{F}}$ of $\Phi$. Moreover, for any facet $F$, there is $a\in \RR^n$, $a \neq 0$ and $b\in \RR$ such that $F=\{x\in P_\Phi:\langle a,x\rangle=b\}$, and therefore,
\begin{align*}
    \langle a,\phi_k \rangle& = b,\; \text{for } k \in I_{F},\\
    \langle a,\phi_\ell \rangle &< b,\; \text{for }\ell \notin I_{F}.
\end{align*}
Hence, the vertices of any facet are a frame that is highly correlated to all points ``close'' to the facet. In particular, $\Phi_{I_{F}}$ is the most correlated basis for the normal vector $a$.

\begin{figure}[t]
    \centering
    \includegraphics[width=\linewidth]{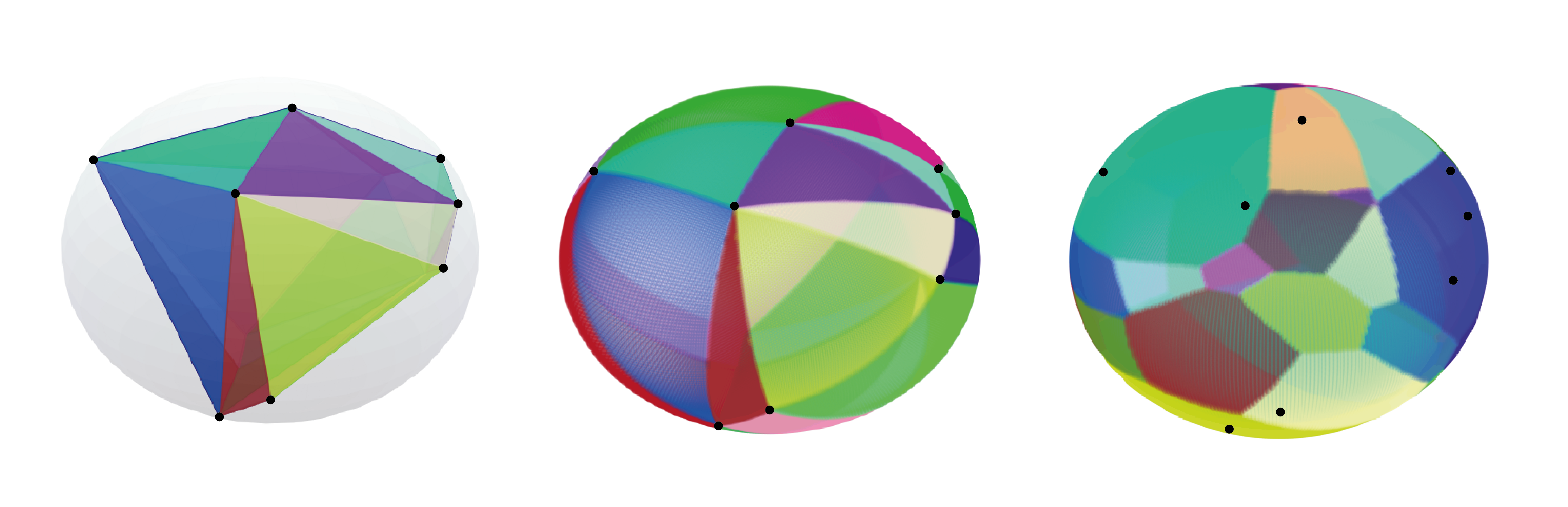}
  \caption{The three subplots show different decompositions of the sphere in $\RR^3$. The black dots indicate the $m=12$ frame elements of a random frame on $\mathbb{S}$. From left to right: The facets $F_j$ of the inscribing polytope $P_\Phi$, the associated spherical caps $F_j^\mathbb{S}=\operatorname{cone}(F_j)\cap \mathbb{S}$, and the spherical patches associated to different most correlated bases obtained by $J^*(x)$. While the facets provide a very intuitive and simple decomposition into sub-frames, the decomposition via the most correlated bases minimizes the correlation directly.}
  \label{fig:poly}
\end{figure}

Now, analog to the decomposition of $K$ using most correlated bases \eqref{eq:partition1} we can decompose $K$ into facet-specific subsets $F_j^K$, each associated with a facet $F_j$ of $P_\Phi$ (according to some enumeration of the facets).
A natural and practical approach in this setting is a decomposition into conical caps resulting in
\begin{align}
    F_j^K = \operatorname{cone}(F_j)\cap K =\{x\in K: x = c y\; , y\in F_j, c\geq 0\}. \label{eq:cone}
\end{align}
Figure \ref{fig:poly} shows the facets $F_j$ of the inscribing polytope (left), and the decomposition of $\mathbb{S}$ into spherical caps $F_j^{\mathbb{S}}$ (mid) for a i.i.d.~randomly generated frame $\Phi$ with elements on $\mathbb{S}$.
To guarantee that $K=\bigcup_j F_j^K$ and that $0$ does not lie on any of the facets (requirement for using Lemma \ref{lem:facet}) we have to assume that $0$ lies in the interior of $P_\Phi$ (w.r.t.~the topology in $\RR^n$). The property of $\Phi$ that ensures this was introduced in \cite[Definition 1]{behr18}, where $\Phi$ is called \textit{omnidirectional}. Equivalently, we can say that there is no half-space containing all elements of $\Phi$, which can be easily verified numerically via convex optimization, see the appendix in \cite{behr18}. Moreover, every non-omnidirectional frame can be made omnidirectional by including a vector constructed as the negative normalized mean of all frame elements, see Appendix B.

\begin{remark}
    The construction of the $F_j^K$ using cones and the assumption of omnidirectionality are natural if $K$ is centered around the origin. In other situations, one might want to come up with alternative constructions of $F_j^K$ and a different notion of omnidirectionality that are more suited to the geometry $K$. In this work, we restrict ourselves to the described setting. 
\end{remark}

Assuming omnidirectionality, we can use Lemma \ref{lem:facet} to identify the minimal analysis coefficient $\langle x,\phi_i \rangle$ that can occur for any $x$ contained in any $F_j^K$ that contains $\phi_i$. This guarantees that for any $x\in F_j^K$ the sub-frame $\Phi_{I_{F_j}}$ is active. Following \cite{haider2023relu}, the procedure is called \textit{polytope bias estimation} (PBE). The following theorem generalizes the results in \cite{haider2023relu} from $\mathbb{B}_r$ and $\mathbb{B}_r^+$ to general bounded $K$.

\begin{theorem}[Polytope Bias Estimation]\label{thm:pbe}
    Let $\Phi\subset \mathbb{S}$ be a normalized omnidirectional frame and $K\subseteq \RR^n$ bounded. Then $\Phi$ is $\alpha_K^\Delta$-rectifying on $K\subseteq \RR^n$ with $\alpha_K^\Delta$ given by
    \begin{equation}\label{eq:aK}
        \left(\alpha_K^\Delta\right)_i= \inf_{
        \substack{x \in F_j^K\\
        j:\phi_i\in F_j}
        } \langle x , \phi_i \rangle.
    \end{equation}
\end{theorem}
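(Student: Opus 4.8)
The plan is to verify the $\alpha$-rectifying property pointwise: for each $x\in K$ I will exhibit a sub-collection of active frame elements that is already known to be a frame. The natural candidate is the vertex set $I_{F_j}$ of a facet whose cone contains $x$, because Lemma \ref{lem:facet} guarantees these vertices span $\RR^n$, and the bias $\alpha_K^\Delta$ is built precisely so that they stay active.

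First I would invoke omnidirectionality. Since $0$ lies in the interior of $P_\Phi$, every ray emanating from the origin meets $\partial P_\Phi$ in exactly one point, which lies on some facet $F_j$; consequently $\bigcup_j\operatorname{cone}(F_j)=\RR^n$ and hence $K=\bigcup_j F_j^K$. The same condition gives $0\notin F_j$ for every facet $F_j$ (facets lie on the boundary), so Lemma \ref{lem:facet} applies and each $\Phi_{I_{F_j}}$ is a frame.

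Now fix $x\in K$ and choose a facet $F_j$ with $x\in F_j^K$, which is possible by the covering just established. The decisive observation is that every vertex of $F_j$ is active at $x$ for the bias $\alpha_K^\Delta$: for $i\in I_{F_j}$ we have $\phi_i\in F_j$ and $x\in F_j^K$, so $x$ belongs to the set over which the infimum defining $(\alpha_K^\Delta)_i$ in \eqref{eq:aK} is taken, giving
\[
    \langle x,\phi_i\rangle \;\geq\; \inf_{\substack{x'\in F_\ell^K\\ \ell:\phi_i\in F_\ell}}\langle x',\phi_i\rangle \;=\; \left(\alpha_K^\Delta\right)_i .
\]
This says precisely that $i\in I_x^{\alpha_K^\Delta}$, so $I_{F_j}\subseteq I_x^{\alpha_K^\Delta}$. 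By Lemma \ref{lem:facet}, $\Phi_{I_{F_j}}$ is a frame, and in finite dimensions a spanning set remains spanning after adjoining further vectors from $\Phi$ (the upper bound $B\|x\|^2$ is inherited from $\Phi$ and the positive lower bound persists). Hence $\Phi_{I_x^{\alpha_K^\Delta}}\supseteq\Phi_{I_{F_j}}$ is a frame, and since $x\in K$ was arbitrary, $\Phi$ is $\alpha_K^\Delta$-rectifying on $K$.

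I do not expect a serious obstacle, as the argument is essentially a bookkeeping of which elements the definition of $\alpha_K^\Delta$ forces to be active. The only points requiring care are that boundedness of $K$, together with $\Phi\subset\mathbb{S}$ and Cauchy--Schwarz ($\langle x,\phi_i\rangle\geq-\|x\|$), is what makes each $\left(\alpha_K^\Delta\right)_i$ finite, so that indeed $\alpha_K^\Delta\in\RR^m$; and the edge case $x=0$ when $0\in K$, where $0\in F_j^K$ for every facet forces $\left(\alpha_K^\Delta\right)_i\leq 0$ for all $i$, so all of $\Phi$ is active and the conclusion holds trivially.
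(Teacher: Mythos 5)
Your proof is correct and follows essentially the same route as the paper's: omnidirectionality gives the conical covering $K=\bigcup_j F_j^K$, the definition of $\alpha_K^\Delta$ in \eqref{eq:aK} forces $I_{F_j}\subseteq I_x^{\alpha_K^\Delta}$ for any facet cone containing $x$, and Lemma \ref{lem:facet} supplies the frame property. The extra remarks (finiteness of the bias values, the superset-of-a-frame observation, the $x=0$ case) are fine but not needed beyond what the paper's three-line argument already contains.
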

\begin{proof}
    Since $\Phi$ is omnidirectional, for any $x\in K$ there is a facet $F_j$ such that $x\in F_j^K$. It follows from \eqref{eq:aK} that $\langle x,\phi_i \rangle \geq \left(\alpha_K^\Delta\right)_i$ for all $i\in I_{F_j}$. By Lemma \ref{lem:facet}, $\Phi_{I_{F_j}}$ is a frame, hence, $\Phi$ is $\alpha_K^\Delta$-rectifying on $K$.
\end{proof}
This procedure naturally takes the geometry of the frame into account and provides an intuitive way of estimating a large bias vector such that the frame becomes $\alpha$-rectifying. Note, however, that in general this only yields a sufficient condition. A special case where it is also necessary is discussed in Lemma \ref{lem:err}. In the following, we demonstrate how the PBE simplifies for specific concrete situations.\\

\begin{figure}[t]
    \centering
    \begin{subfigure}[t]{0.29\textwidth}
        \centering
        \includegraphics[width=\textwidth]{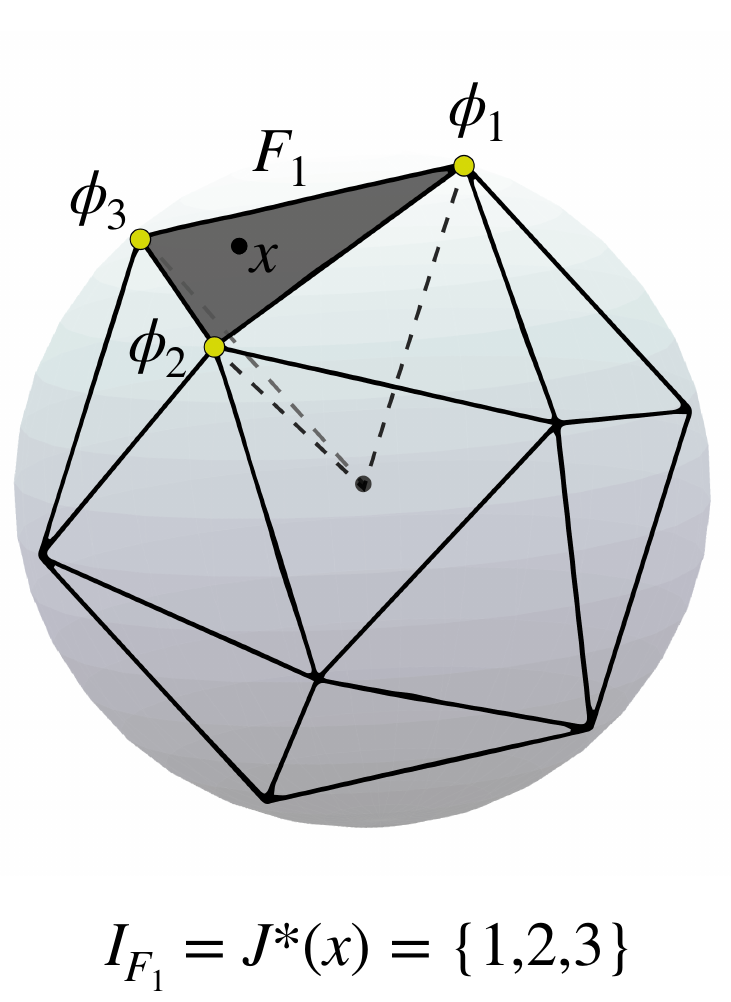}
        
    \end{subfigure}%
    \hfill
    \begin{subfigure}[t]{0.31\textwidth}
        \centering
        \includegraphics[width=\textwidth]{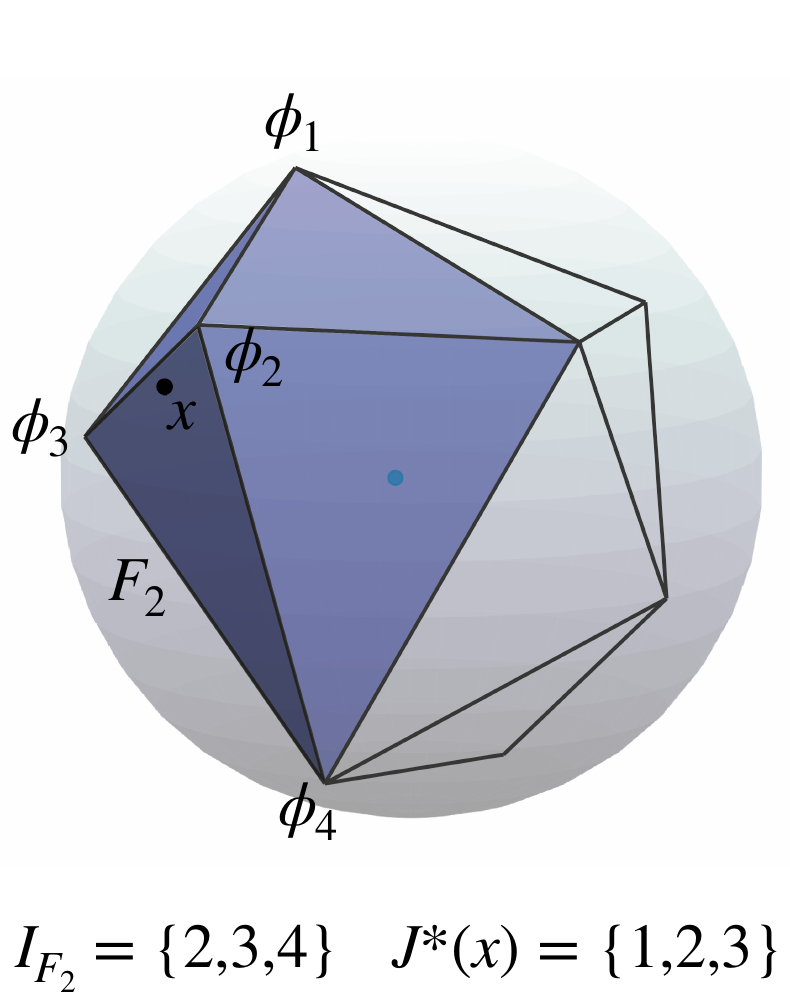}
    \end{subfigure}%
    \hfill
    \begin{subfigure}[t]{0.35\textwidth}
        \centering
        \includegraphics[width=0.85\textwidth]{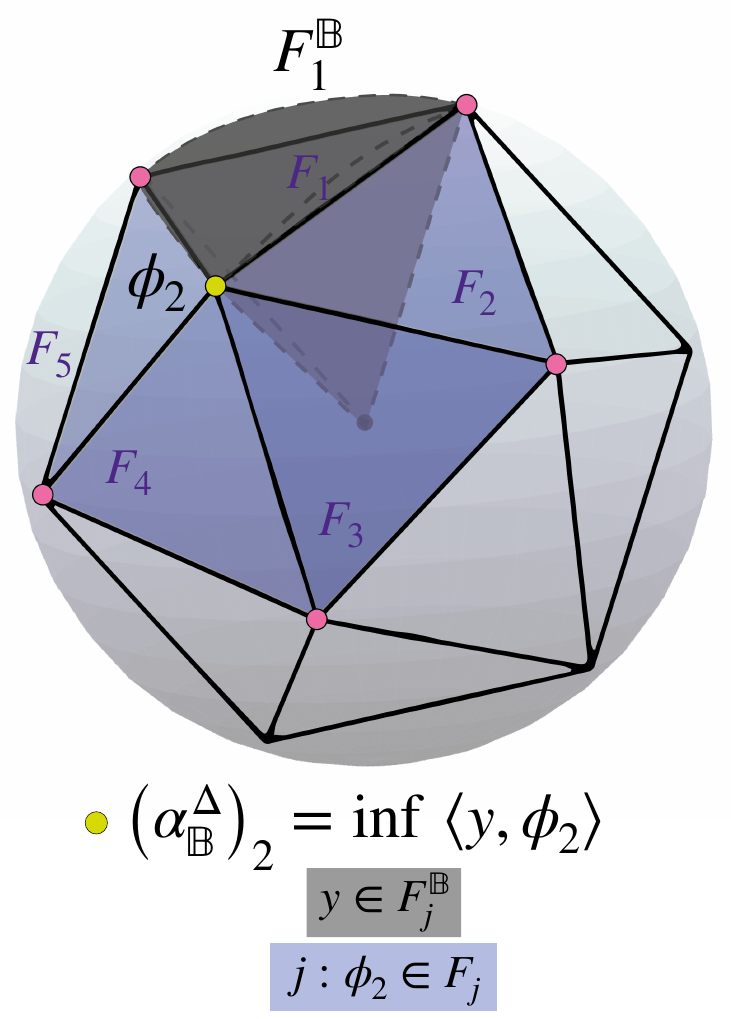}
    \end{subfigure}%

    \caption{For the Icosahedron frame we have that $x\in F_j \Leftrightarrow I_{F_j} = J^*(x)$ (left). For less regular frames, this does not hold anymore (mid). The right picture illustrates the PBE on $\mathbb{B}$ for the Icosahedron frame. To get $\left(\alpha_{\mathbb{B}}^\Delta\right)_2$ the infima are taken over the points in the conical parts (dark gray area) for all adjacent facets of $\phi_2$ (blue).}
    \label{fig:fig4}
\end{figure}

\textbf{Algorithmic solution for Approach B.} The advantage of the PBE is that for simple standard domains $K$ it is easy to compute $\alpha_K^\Delta$ via linear programs. In the following proposition, we formulate the PBE for five prototypical domains. We provide a detailed discussion and corresponding pseudo-code for implementing the corresponding optimizations in Appendix C.

\begin{proposition}\label{prop:pbeall}
    Let $\Phi\subset \mathbb{S}$ be a normalized omnidirectional frame. The following holds.
    \begin{enumerate}[(i)]
        \item $\Phi$ is $\alpha_\Phi^\Delta$-rectifying on the boundary of the polytope $\partial P_{\Phi}=\bigcup_j F_j$ with $\alpha_\Phi^\Delta$ given by
        \begin{equation*}\label{eq:pbeboundary}
            \left(\alpha_{\Phi}^\Delta\right)_i = \min_{
            \substack{\ell \in I_{F_j}\\j:\phi_i\in F_j}
            } \langle \phi_{\ell},\phi_i \rangle.
        \end{equation*}
        \item $\Phi$ is $\alpha_{\mathbb{S}}^\Delta$-rectifying on the sphere $\mathbb{S}$ with $\alpha_{\mathbb{S}}^\Delta$ given by
        \begin{equation*}\label{eq:pbeS}
            \left(\alpha_{\mathbb{S}}^\Delta\right)_i = \min\{\min_{
            \substack{x \in F_j^\mathbb{S}\\j:\phi_i\in F_j}
            } \langle x , \phi_i \rangle, \left(\alpha_{\Phi}^\Delta\right)_i\}.
        \end{equation*} 
        The inner minima are the solutions of convex linear programs. In particular, they are equal to $\left(\alpha_{\Phi}^\Delta\right)_i$ whenever they are non-negative.
        \item $\Phi$ is $(r^{-1}\cdot \alpha_\mathbb{B}^\Delta)$-rectifying on the donut $\mathbb{D}_{r,s}$ for
        $0\leq s<r$ with $\alpha_{\mathbb{B}}^\Delta$ given by
        \begin{equation*}\label{eq:pbeb}
            \left(\alpha_{\mathbb{B}}^\Delta\right)_i = \min\{s,\left(\alpha_{\mathbb{S}}^\Delta\right)_i\}.
        \end{equation*} 
        The case $s=0$ yields a bias estimation for the closed ball $\mathbb{B}_r$ (Figure \ref{fig:fig4}). 
        \item Let $J^+=\{j\in I : F_j\cap \RR^n_+\neq \emptyset\}$ and $ I^+ = \bigcup_{j\in J^+} I_{F_j}$ then $\Phi$ is $( r^{-1}\cdot \alpha_{\mathbb{B}^+}^\Delta)$-rectifying on the non-negative part of the ball ${\mathbb{B}^+_r}$ with $\alpha_{\mathbb{B}^+}^\Delta$ given by
    \begin{equation}\label{eq:a+}
        \left(\alpha_{\mathbb{B}^+}^\Delta\right)_i=
        \begin{cases}
            \left(\alpha_{\mathbb{B}}^\Delta\right)_i &\text{ for } i \in I^+\\
            s_i &\text{ else,} 
        \end{cases}
    \end{equation}
    where $s_i\in \RR$ is arbitrary.
        \item If $\alpha_{\Phi}^\Delta\geq 0$, then $\left(s \cdot \phi_i\right)_{i\in I}$ for $s \geq 0$ is $\left(s\cdot \alpha_{\Phi}^\Delta\right)$-rectifying on $(\mathring{\mathbb{B}}_s)^{\mathrm{c}}= \RR^n \setminus \mathring{\mathbb{B}}_{s}$.
    \end{enumerate}
\end{proposition}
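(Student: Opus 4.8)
The plan is to derive all five items as specializations of the master Polytope Bias Estimation in Theorem \ref{thm:pbe}: in each case I would substitute the concrete domain $K$ into the defining infimum $(\alpha_K^\Delta)_i = \inf_{x\in F_j^K,\, \phi_i\in F_j}\langle x,\phi_i\rangle$ and simplify it using the geometry of $K$. Two elementary reductions do most of the work. First, since each facet $F_j$ is the convex hull of its vertices $(\phi_\ell)_{\ell\in I_{F_j}}$, the linear functional $x\mapsto\langle x,\phi_i\rangle$ attains its minimum over $F_j$ at a vertex; this collapses any infimum taken over a facet to a finite minimum over the vertex correlations $\langle\phi_\ell,\phi_i\rangle$. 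Second, by homogeneity $\langle \rho u,\phi_i\rangle=\rho\langle u,\phi_i\rangle$, so along a radial segment the infimum is attained at one of the two extreme radii according to the sign of $\langle u,\phi_i\rangle$; this reduces any radial infimum to a boundary evaluation. Throughout I would invoke the scaling relation Lemma \ref{prop:K} to pass between unit-scale and radius-$r$ domains, and Lemma \ref{lem:facet} to guarantee that the active vertices form a frame.

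For item (i) the domain is $\partial P_\Phi=\bigcup_j F_j$, so every point already lies in a facet; the vertex reduction turns the master infimum directly into $\min_{\ell\in I_{F_j},\,j:\phi_i\in F_j}\langle\phi_\ell,\phi_i\rangle$, and Lemma \ref{lem:facet} supplies the frame property. Item (iv) is a restriction argument: if $x\in\mathbb{B}_r^+$ then the ray through $x$ meets $\partial P_\Phi$ at a point of $\RR^n_+$, so $x$ lies in $F_j^{\mathbb{B}_r}$ for some facet with $F_j\cap\RR^n_+\neq\emptyset$, i.e.\ $j\in J^+$. Hence only the facets indexed by $J^+$ are needed to cover $\mathbb{B}_r^+$, only the vertices indexed by $I^+$ ever constrain the bias, and the remaining coordinates $i\notin I^+$ may be set arbitrarily; the ball bias $(\alpha_\mathbb{B}^\Delta)_i$ is inherited verbatim on $I^+$.

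Items (iii) and (v) are handled by the homogeneity reduction together with the scaling lemma. Writing a point of the donut $\mathbb{D}_{r,s}$ as $\rho u$ with $u\in F_j^\mathbb{S}$ and $\rho\in[s,r]$, the radial infimum of $\langle\rho u,\phi_i\rangle$ sits at $\rho=r$ where $\langle u,\phi_i\rangle<0$ and at $\rho=s$ where it is non-negative; this is exactly the two-sided structure recorded by $\min\{s,(\alpha_\mathbb{S}^\Delta)_i\}$, with the radius-dependence of the stated bias produced by Lemma \ref{prop:K}(i). For (v) I would use that $\alpha_\Phi^\Delta\ge0$ forces all intra-facet correlations to be non-negative; writing $u=y/\|y\|$ with $y\in F_j$ a convex combination of unit vectors gives $\langle u,\phi_i\rangle=\|y\|\inv\sum_\ell c_\ell\langle\phi_\ell,\phi_i\rangle\ge(\alpha_\Phi^\Delta)_i$, since $\|y\|\le1$ and the numerator is a convex average bounded below by $(\alpha_\Phi^\Delta)_i$. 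Monotonicity in the radius (Lemma \ref{prop:K}(ii)) then propagates this lower bound to all $x$ with $\|x\|\ge s$, and rescaling the frame by $s$ yields the stated bias $s\cdot\alpha_\Phi^\Delta$.

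The main obstacle is item (ii), because the spherical caps $F_j^\mathbb{S}=\operatorname{cone}(F_j)\cap\mathbb{S}$ are curved and hence not convex, so the vertex reduction does not apply and the inner infimum $\min_{x\in F_j^\mathbb{S}}\langle x,\phi_i\rangle$ is a genuine constrained minimisation rather than a finite minimum. I would recast it through the substitution $x=y/\|y\|$, $y\in F_j$, turning it into a linear-fractional program over the facet that can be solved as a convex linear program. The role of the outer $\min$ with $(\alpha_\Phi^\Delta)_i$ is then to reconcile two regimes: the unconstrained minimiser of $\langle\cdot,\phi_i\rangle$ on $\mathbb{S}$ is $-\phi_i$, so when it falls outside the cone the cap minimum is pushed to the cap boundary and, by the same convex-average estimate as in (v), coincides with the vertex value $(\alpha_\Phi^\Delta)_i$ precisely when that value is non-negative; taking the minimum of the two quantities records both cases uniformly. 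Verifying this case split cleanly, and confirming that the linear-fractional reformulation is exact on each cap, is the step I expect to require the most care.
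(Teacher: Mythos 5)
Your route is genuinely different from the paper's: the paper disposes of $(i)$--$(iv)$ by citing \cite[Theorems 4.4 and 4.6]{haider2023relu} and only argues $(v)$ directly, whereas you derive all five items self-containedly from Theorem \ref{thm:pbe}. For $(i)$ and $(ii)$ your derivation is sound and essentially reconstructs what the paper outsources: on $\partial P_\Phi$ each conical piece $F_j^{\partial P_\Phi}$ is the facet $F_j$ itself, the minimum of the linear functional $\langle\cdot,\phi_i\rangle$ over a facet is attained at a vertex, and Lemma \ref{lem:facet} supplies the frame property; on $\mathbb{S}$ your substitution $x=y/\|y\|$ is exactly the linear-fractional reformulation that the paper uses in Appendix C2, and your convex-average estimate proves the direction of the ``in particular'' claim that is actually asserted. (Your ``precisely when'' is an overstatement: in $\RR^2$ the cap minimum equals the vertex minimum even when negative; only the implication from non-negativity is true or needed.) The covering argument in $(iv)$ --- every ray through a point of $\mathbb{B}_r^+$ exits $P_\Phi$ through a facet meeting $\RR^n_+$, so only facets in $J^+$ and vertices in $I^+$ matter --- is also correct as a structural reduction.

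The genuine gaps are in $(iii)$ and $(v)$. In $(iii)$ your radial reduction is right, but carried out honestly it yields the bias $s\cdot(\alpha_{\mathbb{S}}^\Delta)_i$ where $(\alpha_{\mathbb{S}}^\Delta)_i\geq 0$ and $r\cdot(\alpha_{\mathbb{S}}^\Delta)_i$ where $(\alpha_{\mathbb{S}}^\Delta)_i<0$, i.e.\ $\min\{s(\alpha_{\mathbb{S}}^\Delta)_i,\,r(\alpha_{\mathbb{S}}^\Delta)_i\}$, and \emph{not} the stated $r^{-1}\min\{s,(\alpha_{\mathbb{S}}^\Delta)_i\}$; the sentence ``this is exactly the two-sided structure recorded by $\min\{s,(\alpha_\mathbb{S}^\Delta)_i\}$'' is precisely the step that fails, and it cannot be repaired, because the two expressions genuinely differ: for the icosahedron frame with $r=1$, $s=\tfrac12$, the stated bias is the constant $1/\sqrt{5}$, yet at $x=\tfrac12\phi_1\in\mathbb{D}_{1,1/2}$ the active set for that bias is $\{\phi_1\}$ alone, which is not a frame. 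A correct write-up would have to flag this discrepancy with the stated formula rather than absorb it. Your appeal to Lemma \ref{prop:K}$(i)$ is also inverted: scaling the domain by $r$ multiplies the admissible bias by $r$, not $r^{-1}$; since $(iv)$ inherits $\alpha_{\mathbb{B}}^\Delta$ and the $r^{-1}$ prefactor, the same issue propagates there for $r\neq 1$. In $(v)$, Lemma \ref{prop:K}$(ii)$ only moves you \emph{outward} from $\mathbb{S}$ (it requires a scaling factor $\geq 1$), so it cannot ``propagate the lower bound to all $x$ with $\|x\|\geq s$'' when $s<1$; the correct order is the one in the paper's proof: first scale the sphere bound down by $s$ via Lemma \ref{prop:K}$(i)$, obtaining the bias $s\,\alpha_\Phi^\Delta$ on $s\,\mathbb{S}$, and only then apply $(ii)$ to fill out $(\mathring{\mathbb{B}}_s)^{\mathrm{c}}$. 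Moreover, ``rescaling the frame by $s$'' accomplishes nothing: replacing $\Phi$ by $(s\phi_i)_{i\in I}$ and $\alpha$ by $s\alpha$ rescales both sides of the defining inequality $\langle x,\phi_i\rangle\geq\alpha_i$, so it cannot turn the bound you have into the stated claim about $(s\cdot\phi_i)_{i\in I}$; note that the paper's own computation $\langle sty,\phi_i\rangle\geq s(\alpha_\Phi^\Delta)_i$ is a statement about the \emph{unscaled} frame $\Phi$ with bias $s\,\alpha_\Phi^\Delta$, which is the version your (corrected) argument would deliver as well.
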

\begin{proof}
    The points $(i)-(iv)$ are direct consequences of \cite[Theorem 4.4 and Theorem 4.6]{haider2023relu}, where detailed proofs can be found.
    To show $(v)$, note that $\left(s \cdot \phi_i\right)_{i\in I}$ has the same combinatorial facet structure as $\Phi$ and therefore it is still omnidirectional. In particular,
    $$
    (\mathring{\mathbb{B}}_s)^{\mathrm{c}} = \{x = s t y: y\in \mathbb{S}, t \geq 1 \}.
    $$
    The statement follows by $\langle s t y,\phi_i \rangle  \geq s\cdot \left(\alpha_{\Phi}^\Delta\right)_i$ for $t\geq 1$ and $s \geq 0$.
\end{proof}
Other than the Monte-Carlo sampling-based approach, this bias estimation procedure yields a deterministic sufficient condition for the $\alpha$-rectifying property, which is necessary only in special situations. 
%
In general, there are two properties of the inscribing polytope $P_\Phi$ that affect the PBE in a way that the estimated biases become smaller.
\begin{itemize}
    \item[(1)] The more vertices a facet has, the smaller the infima in \eqref{eq:aK} become. In the case where all facets have the minimal number of $n$ vertices, $P_\Phi$ is called \textit{simplicial}. It is known that a polytope with vertices that are i.i.d.~uniform samples on $\mathbb{S}$ is simplicial with probability one \cite{randpol}.
    \item[(2)] The decomposition of $K$ using cones in \eqref{eq:cone} is natural and convenient for implementation, but leads to a sub-optimal partition if $P_\Phi$ is geometrically very irregular, i.e., the sizes of the facets are significantly different. In such a scenario, for $x\in F_j^K$ where $F_j$ is a very large facet, there might be a smaller neighboring facet $F_k$ which provides larger analysis coefficients for $x$ and, hence, a better estimation. For an illustration of such a situation see the center plot in Figure \ref{fig:fig4}.
\end{itemize}
For frames with inscribing polytopes that are simplicial and regular, we can show that the PBE indeed yields a maximal bias. We set $K=\mathbb{S}$.


\begin{lemma}\label{lem:err}
    Let $\Phi\subset \mathbb{S}$ be a normalized omnidirectional frame such that $P_{\Phi}$ is simplicial and for all $i\in I$ it holds that $\langle \phi_i,\phi_k \rangle = \langle \phi_i,\phi_\ell \rangle$ for all $\phi_k,\phi_\ell$ sharing a facet with $\phi_i$.
    Then $\Phi$ is $\alpha$-rectifying on $\mathbb{S}$ if and only if  $\alpha \leq \alpha_{\mathbb{S}}^\Delta$.
\end{lemma}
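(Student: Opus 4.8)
The plan is to prove the two implications separately and, for the harder converse, to reduce the statement to the maximal-bias characterization of Theorem \ref{thm:maxa} by showing that under simpliciality together with the regularity hypothesis the polytope bias $\alpha_{\mathbb{S}}^\Delta$ coincides with the maximal bias $\alpha_{\mathbb{S}}^\sharp$. The easy direction is immediate and carries no extra hypotheses: Theorem \ref{thm:pbe} already gives that $\Phi$ is $\alpha_{\mathbb{S}}^\Delta$-rectifying on $\mathbb{S}$, and Proposition \ref{prop:inclusive}(ii) propagates this to every $\alpha\le\alpha_{\mathbb{S}}^\Delta$.

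For the converse I would first extract the structural consequences of the assumptions. Simpliciality means every facet $F_j$ has exactly $n$ vertices, so by Lemma \ref{lem:facet} each $\Phi_{I_{F_j}}$ is a basis. The regularity condition fixes, for every $i$, a single value $\langle\phi_i,\phi_k\rangle=c_i$ for all neighbors $\phi_k$ of $\phi_i$; since the inner product is symmetric and the polytope graph is connected, all the $c_i$ agree with one constant $c$, whence $(\alpha_\Phi^\Delta)_i=c$ for every $i$. Next I would locate the minimizer in \eqref{eq:pbeS}: on the spherical simplex $F_j^\mathbb{S}$ the linear functional $\langle x,\phi_i\rangle$ attains its minimum on the face opposite to $\phi_i$, spanned by $\{\phi_\ell:\ell\in I_{F_j}\setminus\{i\}\}$, and because all these neighbors satisfy $\langle\phi_\ell,\phi_i\rangle=c$, a short computation (parametrizing $x=y/\|y\|$ with $y$ in the simplex of the neighbors and using that the Gram matrix is positive definite) shows the resulting value $(\alpha_{\mathbb{S}}^\Delta)_i$ is the same constant $\beta$ for all $i$. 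Hence $\alpha_{\mathbb{S}}^\Delta=\beta\mathbf{1}$, and comparing with \eqref{eq:maxa} gives $\alpha_{\mathbb{S}}^\Delta=\alpha_{\mathbb{S}}^\flat$.

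With this in hand I would verify that the cone decomposition coincides with the most-correlated-basis decomposition, i.e. $J^*(x)=I_{F_j}$ for $x$ in the interior of $F_j^\mathbb{S}$ (the situation in Figure \ref{fig:fig4}, left). This supplies the two missing hypotheses of Theorem \ref{thm:maxa}, uniqueness of the most correlated basis everywhere and bias-exactness: at a minimizer $x^*\in F_{j_0}^\mathbb{S}$ the facet vertices are the $n$ most correlated elements, $\phi_i$ is the least correlated among them with $\langle x^*,\phi_i\rangle=\beta$, and every outside vector satisfies $\langle x^*,\phi_k\rangle<\beta=(\alpha_{\mathbb{S}}^\flat)_k$ and is therefore inactive; thus the active set at $x^*$ is exactly $I_{F_{j_0}}$, so $x^*\in K_i^\sharp\neq\emptyset$ and $\alpha_{\mathbb{S}}^\sharp=\alpha_{\mathbb{S}}^\flat=\alpha_{\mathbb{S}}^\Delta$. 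The equivalence then follows verbatim from Theorem \ref{thm:maxa}. Equivalently and more directly, raising the single component to $\alpha_i=\beta+\varepsilon$ leaves $\langle x^*,\phi_\ell\rangle\ge\beta=\alpha_\ell$ for the remaining facet vertices and $\langle x^*,\phi_k\rangle<\beta=\alpha_k$ outside, so the active set collapses to $I_{F_{j_0}}\setminus\{i\}$, which has only $n-1$ elements and cannot span $\RR^n$.

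The main obstacle is this middle step: proving that the cone/facet decomposition agrees with the most-correlated-basis decomposition and that at the critical minimizer no vector outside the facet is active. Both hypotheses are essential here, the simplicial assumption so that the relevant active set has exactly $n$ elements and drops to $n-1$ after deleting $\phi_i$, and the regularity assumption so that the minimizer sits on the opposite face and the facet vertices uniformly dominate the outside vectors. Without either one a neighboring facet can supply larger coefficients for points in $F_{j_0}^\mathbb{S}$, and $\alpha_{\mathbb{S}}^\Delta$ drops strictly below $\alpha_{\mathbb{S}}^\sharp$, which is precisely the sub-optimality flagged in points (1)–(2) preceding the lemma.
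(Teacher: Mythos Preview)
Your direct argument at the end---pick a minimizer $x^*\in F^{\mathbb{S}}$ of $\langle\cdot,\phi_i\rangle$, note that bumping $\alpha_i$ above $(\alpha_{\mathbb{S}}^\Delta)_i$ forces $I_{x^*}^\alpha=I_F\setminus\{i\}$, and conclude by simpliciality that this set has only $n-1$ elements---is exactly the paper's proof, and it is the whole proof. The paper does not go through Theorem~\ref{thm:maxa} at all: it argues directly by contraposition in five lines, using only that regularity makes $\alpha_{\mathbb{S}}^\Delta$ constant and that regularity forces the active set at $x^*$ to be precisely the facet vertices.

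Your \emph{primary} route, reducing to Theorem~\ref{thm:maxa}, has a genuine gap. That theorem requires $\Phi$ to include a \emph{unique} most correlated basis \emph{everywhere} on $K=\mathbb{S}$. But the regularity hypothesis of the lemma is exactly the kind of symmetry that destroys uniqueness: at any point on the common boundary of two adjacent spherical caps $F_j^{\mathbb{S}}\cap F_k^{\mathbb{S}}$, both $I_{F_j}$ and $I_{F_k}$ attain the same value of $\min_{j\in J}\langle x,\phi_j\rangle$ by symmetry, so $J^*(x)$ is not unique there. The paper says this explicitly right after Theorem~\ref{thm:maxa} (``it excludes frames that exhibit certain symmetries''). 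Verifying $J^*(x)=I_{F_j}$ only on the \emph{interior} of each cap, as you propose, does not rescue the hypothesis. So drop the detour through $\alpha_{\mathbb{S}}^\sharp$ entirely and promote your ``equivalently and more directly'' paragraph to be the proof; the only point that needs care is justifying $\langle x^*,\phi_k\rangle<(\alpha_{\mathbb{S}}^\Delta)_k$ for $k\notin I_F$, which is where regularity is used.
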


\begin{proof}
    The converse direction directly follows from Theorem \ref{thm:pbe}. We show the implication direction by counterposition. At first note that the regularity condition implies that $\alpha_{\mathbb{S}}^\Delta$ is constant. Let $\alpha\in\RR^m$ be such that $\alpha_{i} > \left(\alpha_{\mathbb{S}}^\Delta\right)_{i}$ for fixed $i\in I$. There is a facet $F$ with $i\in I_{F}$ and $x^*\in F^\mathbb{S}$ satisfying $\langle x^*,\phi_{i} \rangle = \left(\alpha_{\mathbb{S}}^\Delta\right)_{i} < \alpha_{i}$. By regularity, it follows that $I_{x^*}^{\alpha}=I_{F}\setminus \{{i}\}$. Since $P_{\Phi}$ is simplicial $\Phi_{I_{F}\setminus \{{i}\}}$ is not a frame, hence, $\Phi$ is not $\alpha$-rectifying on $\mathbb{S}$.
\end{proof}
Examples for this are frames whose inscribing polytopes are convex regular polytopes in $\RR^n$. In a more general sense, we expect the PBE to be very effective for frames with evenly distributed frame elements.


\begin{remark}
    The two bias estimation procedures described in Approach A and B are fundamentally linked. For each facet $F$ of $P_\Phi$ the hole radius of $F$ is defined as the Euclidean distance from the boundary to the center of its spherical cap $F^{\mathbb{S}}$ \eqref{eq:cone}. The Euclidean covering radius \eqref{eq:cover} of $\Phi$ for $\mathbb{S}$ is the largest hole radius among all facets \cite{seri2022covering}.
\end{remark}

\subsection{Numerical experiments}\label{sec:experiments}
For a given sampling sequence $X_N$ and a full-spark assumption on $\Phi$, the numerical implementation of the sampling-based bias estimation in Theorem \ref{thm:samp} is straightforward. Similarly, there are convex hull algorithms available, such that the implementation of the cases of the PBE from Proposition \ref{prop:pbeall} is straightforward, too. Besides the pseudo-code found in the appendix, we provide concrete implementations in Python, together with the code for reproducing all experiments in this section in the accompanying repository \href{https://github.com/danedane-haider/Alpha-rectifying-frames}{https://github.com/danedane-haider/Alpha-rectifying-frames}. 

\subsubsection*{Evolution towards injectivity (Approach A)} We demonstrate the basic functionality of the Monte-Carlo sampling-based algorithm. For this, we choose the frame $\Phi$ and the sampling sequences $X_N$ to consist of i.i.d.~uniform samples on $\mathbb{B}$. For every step in the approximation of $\alpha^\flat_{\mathbb{B}}$ we measure the proportion of samples from an unseen test sampling sequence $Y_M$ (also i.i.d.~uniform on $\mathbb{B}$), for which $ \Phi$ is $\alpha^{(k)}$-rectifying. Figure \ref{fig:propinj} shows the empirical mean and variance over $1000$ independent trials of this procedure for two different dimensions, $n=3$ (left) and $n=30$ (right), and three different redundancies ($2,3.3,$ and $9$), respectively. We observe that in some configurations the associated ReLU layer is injective already after a few hundred iterations. In others, it takes up to $1000$ iterations. It is especially fast for ReLU layers in low dimensions with low redundancy.
This can be explained by the fact that for draws of $\Phi$ which yield very unevenly distributed points on $\mathbb{B}$ (which happens more likely in high dimensions with high redundancy) the iterative scheme struggles to update $\alpha^{(k)}$ efficiently, which results in the procedure taking particularly long. In general, all tested examples became injective reliably.

\begin{figure}
  \centering
  \includegraphics[width=\linewidth]{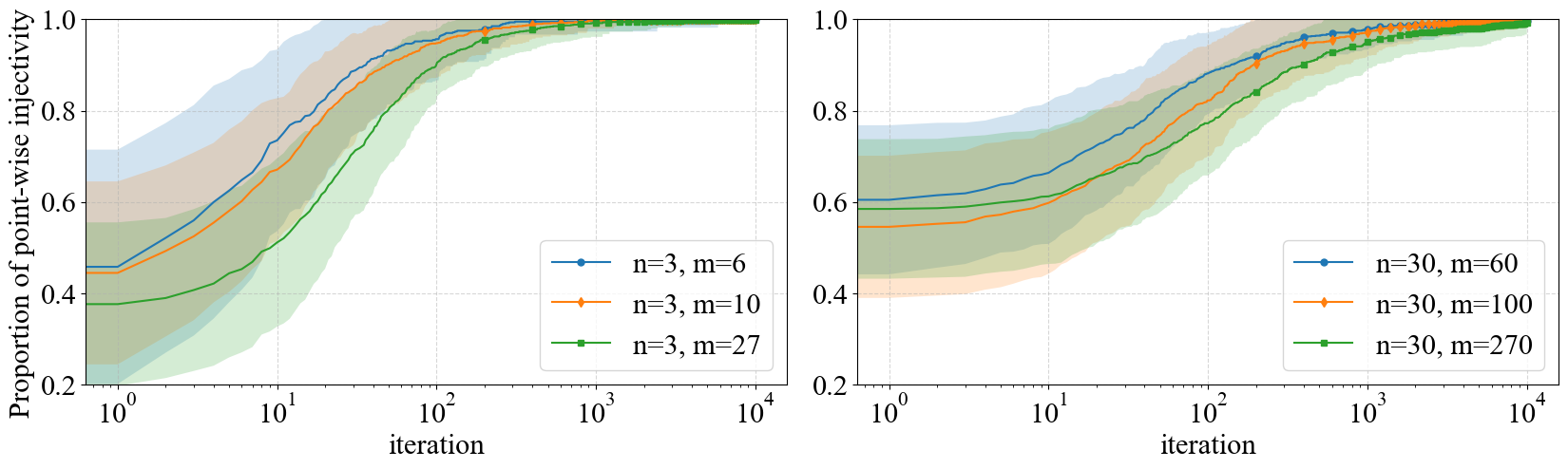}
  \caption{Per iteration $k$, the plots show the proportion of a test sample sequence $Y_M$ where the ReLU layer with bias $\alpha^{(k)}$ is injective. Left: $n=3$. Right: $n=30$; Both for redundancies $2, 3.3,$ and $9$. The ReLU layers are becoming injective reliably after about $10^4$ iterations. The procedure is fastest for low dimensions and low redundancy.}
  \label{fig:propinj}
\end{figure}

\subsubsection*{Effect of redundancy (Approach A)} We use approximations of $\alpha_{\mathbb{B}}^{\flat}$ to verify the injectivity of ReLU layers with random weights and biases systematically for different redundancies. Thereby, we numerically verify the conjecture stated in \cite{maillard2023injectivity} that the transition from non-injectivity to injectivity happens at a redundancy $q\in (6.6979, 6.6981)$ in a non-asymptotic setting. We let both, the frame $\Phi$ and the sampling sequence $X_N$ contain i.i.d.~standard normal points and compute $\alpha^{(N)}$ with $N=5\cdot 10^5$ for all redundancy settings where $2\leq n\leq 30$ and $n\leq m\leq 150$. By Theorem \ref{thm:samp}, we know that for every setting $\Phi$ is $\alpha^{(N)}$-rectifying on $X_N$. Inspired by the asymptotic expression for the covering radius in \eqref{eq:asym}, we subtract a correcting term of $\rho^*(n,N) = 0.05 \cdot \left(\frac{\log(N)}{N}\right)^{\frac{1}{n}}$ to compensate for insufficient amount of sampling in higher dimensions. This yields that for every setting we have that $\Phi$ is $\left(\alpha^{(N)}-\rho^*(n,N)\right)$-rectifying on $\mathbb{B}$ with high probability. The factor $0.05$ was chosen experimentally.

To test if the ReLU layer associated with one of the realizations of $\Phi$ is injective for a given bias $\alpha$ we have to verify that $\alpha \leq \alpha^{(N)}-\rho^*(n,N)$. We compare three settings for biases with i.i.d.~normal values with mean zero and variances,
$$(i)\ \sigma^2=0\qquad \quad (ii)\ \sigma^2=0.1\qquad \quad (iii)\ \sigma^2=1$$
Figure \ref{fig:isinj} shows the results for the three settings from left to right.
Setting $(i)$ is the one where the conjecture in \cite{maillard2023injectivity} was formulated. Looking at the solid magenta line in Figure \ref{fig:isinj} (left) we can observe that our method is capable of numerically reproducing the conjecture. On the injectivity of random ReLU layers with non-zero bias, there are no theoretical results in the literature so far. Hence, our approach yields some novel insights here. For small variance (Setting $(ii)$) we observe that the clear boundary from the previous setting blurs out. For the standard variance in setting $(iii)$ this behavior further intensifies. Note that an according change of the variances in the distribution of $\Phi$ or $X_N$ instead gives the same result. These observations show that the way the bias in a ReLU layer is initialized has a big influence on its injectivity.

\begin{figure}
    \centering
    \includegraphics[width=\linewidth]{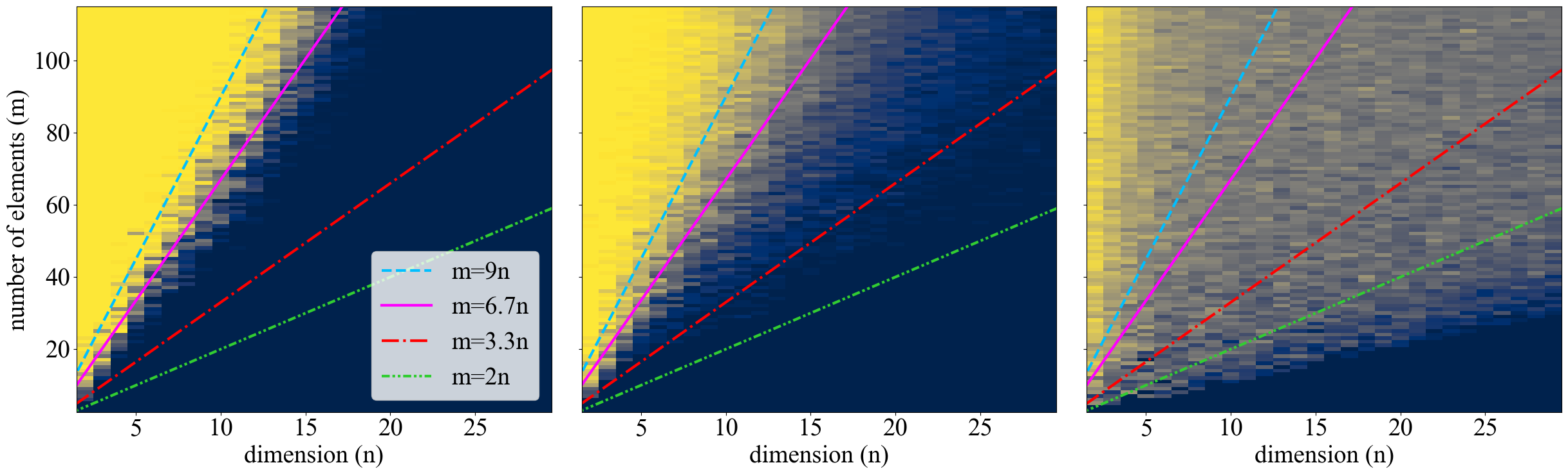}
    \caption{The plots show the injectivity behavior of random ReLU layers for different redundancies ($2\leq n\leq 30, n\leq m\leq 150$). The brightness encodes the proportion of the values of the given bias $\alpha$ that are smaller than the ones in $(\alpha^{(N)}-\rho^*(n,N))$ for $N=5\cdot 10^5$. For values of one (yellow), the ReLU layer is injective on $\mathbb{B}$. In all settings, the samples in $X_N$ are i.i.d.~standard normal and the bias i.i.d.~normal with different variances.
    From left to right: $\sigma^2=0$, $\sigma^2=0.1$, and $\sigma^2=1$. For $\sigma^2=0$ we observe the clear transition from non-injective to injective at a redundancy of $6.7$ (solid magenta line) that aligns well with the conjecture from the literature. For larger variance, the transition blurs out quickly and prevents us from predicting clear statements about injectivity.}
    \label{fig:isinj}
\end{figure}

\subsubsection*{Approximation of the maximal bias (Approach A \& B)}
In the setting of Lemma \ref{lem:err} we have shown that the PBE yields a maximal bias. This allows us to study the approximation of the sampling-based approach to the maximal bias $\alpha^{\mathbb{S}}$ by the PBE over the iterations. Figure \ref{fig:conv} shows this in the example of the Tetrahedron frame, where the PBE yields that $\alpha^{\mathbb{S}}=\boldsymbol{\frac{1}{\sqrt{3}}}$. We plot $\Vert \alpha^{(k)} - \alpha^{\mathbb{S}} \Vert$ as $k$ increases and find that the approximation is very slow, which emphasizes the superiority of the polytope approach in this setting.

\begin{figure}
\centering
\begin{subfigure}{.3\textwidth}
  \centering
  \includegraphics[width=1\linewidth]{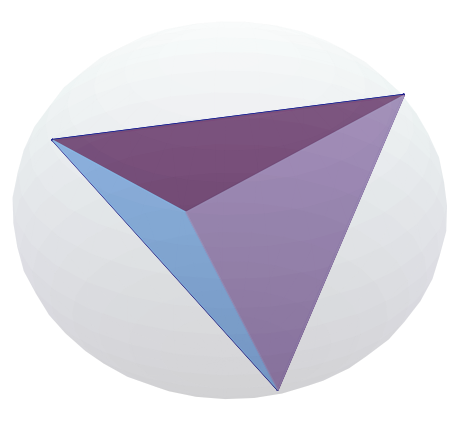}
  \label{fig:sub1}
\end{subfigure}%
\begin{subfigure}{.7\textwidth}
  \centering
  \includegraphics[width=0.9\linewidth]{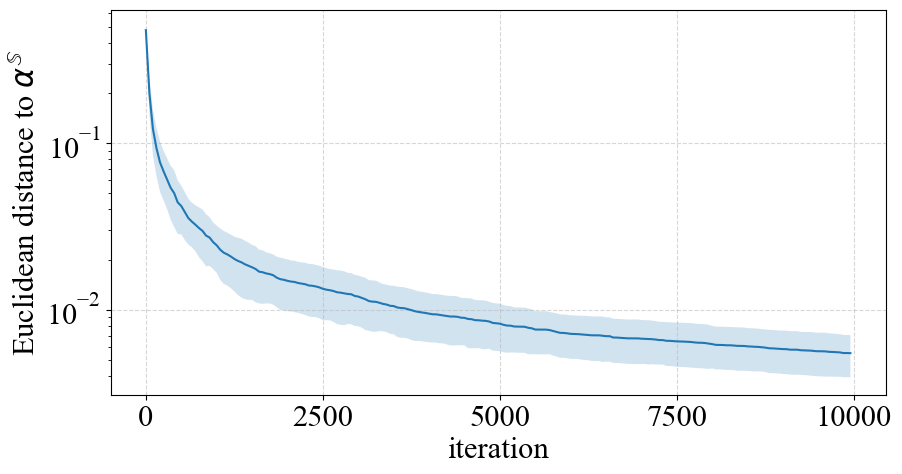}
  \label{fig:sub2}
\end{subfigure}
\caption{Left: The inscribing polytope for the Tetrahedron frame. Right: The Euclidean distance of $\alpha^{(k)}$ to the maximal bias of the Tetrahedron frame $\alpha^{\mathbb{S}}$ on a log y-scale over $10^5$ iterations. In cases where the inscribing polytope is a simplicial and regular polytope, the sampling-based bias estimation is sub-optimal, and the polytope approach already gives the maximal bias.}
\label{fig:conv}
\end{figure}


\subsubsection*{Remarks on the limitations}
Not surprisingly, both algorithms suffer from high dimensionality. For the sampling-based approach, the asymptotic behavior of the covering radius \eqref{eq:asym} indicates that it may become infeasible to reach a good approximation of $\alpha_K^\flat$ only by increasing the number of samples. Yet, with some experimenting on the factor, we can use expression \eqref{eq:asym} to effectively compensate for insufficient sampling in high dimensions. It has particularly high potential when injectivity is only required on specific data points of interest. In such a situation, the sampling set can be constructed in a custom data-driven way that respects the distribution of the data.

For the polytope bias estimation, the numerical computation of the convex hull to obtain the vertex-facet relations becomes infeasible in high dimensions. A possible remedy is to use dimensionality reduction and do the bias estimation in a lower dimensional space. The benefits of the method are that injectivity follows deterministically and that it comes with a lot of intuition an can be studied further using more advanced tools from convex geometry.\\

With this, we conclude the part of the paper that is concerned with the analysis of the injectivity of a ReLU layer. The last chapter is dedicated to the reconstruction of the input from the output of an injective ReLU layer.

\section{Duality and Reconstruction}\label{chap:duality}
If a ReLU layer $\Ta$ is injective, there is an inverse mapping that can infer any input from the output. This extends to the possibility of synthesizing new data that correspond to arbitrary coefficient vectors in the image of the ReLU layer, or studying the connection of single weights to the input by perturbing the corresponding output coefficient, thereby being able to interpret the output values.

In \cite{alharbi2024sat}, the authors propose to reconstruct the input from its saturated frame coefficients via a custom-modified version of the frame algorithm \cite{frames}. This idea can also be adapted to ReLU layers. Our main focus lies on another approach, where we propose to construct explicit perfect reconstruction formulas in the form of locally linear operators.

\subsection{ReLU-synthesis}
First, we recall the concept of a \textit{dual} frame, which is closely tied to two operators. The first one is the \emph{synthesis operator} which maps the frame coefficients back to the input space as
\begin{align*}
    D:\mathbb{R}^m &\rightarrow \mathbb{R}^n\\
    (c_i)_{i \in I}&\mapsto \sum_{i\in I} c_i \cdot \phi_i.
\end{align*}
The application of this operator is realized via the multiplication by the transpose of the analysis matrix from the left, $D=C^\top$. The second operator arises as the concatenation of analysis, followed by synthesis, also known as the \emph{frame operator},
\begin{align*}
    S:\mathbb{R}^n &\rightarrow \mathbb{R}^n\\
    x&\mapsto \sum_{i\in I} \fphi\cdot \phi_i.
\end{align*}
In matrix notation, $S=D C$.
The frame operator
is positive and self-adjoint, and if $\Phi$ is a frame, it is additionally invertible. Hence, one can write any $x\in \RR^n$ as
\begin{equation}\label{framedecomp}
    x = S^{-1}Sx = \sum_{i\in I} \fphi\cdot S^{-1} \phi_i.
\end{equation}
The collection $\tilde{\Phi}=\left(S^{-1} \phi_i\right)_{i\in I}$ is called the \emph{canonical dual frame} for $\Phi$.
Denoting the synthesis operator associated with $\tilde{\Phi}$ by $\tilde{D}$, then Equation \eqref{framedecomp} is equivalent to
\begin{equation}\label{eq:leftinv}
    \tilde{D}Cx = x.
\end{equation}
In other words, $\tilde{D}$ is a left-inverse of $C$ (given by the pseudo-inverse of $C$) \cite{casfin12}. This can be thought of as reconstructing $x$ from its frame coefficients using the canonical dual frame $\Tilde{\Phi}$. If $\Phi$ is redundant ($m>n$), there are infinitely many different possibilities of constructing a left-inverse of $C$. All of them can be interpreted as the synthesis operator of a (non-canonical) dual frame.
Using this machinery, we can define a reconstruction operator for $\Ta$ analogously to \eqref{eq:leftinv}. 
Note, however, that unless $\I \neq I$ for all $x\in K$, there is not \textit{one} reconstruction operator for all $x\in K$.


\begin{definition}
    The ReLU-synthesis operator associated with the collection $\Phi = (\phi_i)_{i\in I}\subset \RR^n$, the bias $\alpha \in \RR^m$, and the index set $J\subseteq I$ is defined by
    \begin{align}\label{eq:usso}
    \begin{split}
        D^{\alpha}_J:\mathbb{R}^m &\rightarrow \mathbb{R}^n\\
        (c_i)_{i \in I}&\mapsto \sum_{i\in J} \left(c_i + \alpha_i \right)\cdot \phi_i.
    \end{split}
    \end{align}
\end{definition}
Note that $\mathbb{R}^m$ is fixed as domain, the sum, however, runs over the index set $J$.
When using the ReLU-synthesis operator associated with a dual frame of $\Phi_J$ we obtain a reconstruction formula in the spirit of \eqref{eq:leftinv}.

\begin{theorem}[ReLU-dual I]\label{thm:leftinv}
    Let $\Phi$ be $\alpha$-rectifying on $K$ for $\alpha \in \RR^m$ and choose $x_0\in K$. Let $\widetilde{\Phi}_{\Io}=(\tilde{\phi}_i)_{i\in \Io}$ be any dual frame for $\Phi_{\Io}$, and $\tilde{D}^{\alpha}_{\Io}$ the associated ReLU-synthesis operator. Then for all $x\in K$ such that $\Io \subseteq \I$ it holds that
    \begin{equation}\label{eq:dual}
         \tilde{D}_{\Io}^{\alpha}\Ta x = x.
    \end{equation}
\end{theorem}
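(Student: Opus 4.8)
The plan is to unwind both operators explicitly and to observe that the hypothesis $\Io \subseteq \I$ is precisely what makes the ReLU act as the identity on the indices that matter. First I would record the structural fact underlying everything: since $\Phi$ is $\alpha$-rectifying on $K$ and $x_0\in K$, the subcollection $\Phi_{\Io}$ is a frame for $\RR^n$ by Definition \ref{alpharect}. Hence a dual frame $\widetilde{\Phi}_{\Io}$ exists, and the frame reconstruction identity
\begin{equation*}
    z = \sum_{i\in \Io} \langle z,\phi_i\rangle\, \tilde{\phi}_i
\end{equation*}
holds for \emph{every} $z\in\RR^n$, in particular for the target $x$. This is the only nontrivial input and it comes for free from the earlier theory.

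Next I would compute the coefficient vector $\Ta x$, whose $i$-th entry is $c_i=\relu(\fphi-\alpha_i)=\max(0,\fphi-\alpha_i)$, and then evaluate $c_i$ for indices $i\in\Io$. Here the containment enters: every $i\in\Io$ also lies in $\I$, so by the definition of $\I$ in \eqref{eq:index} we have $\fphi\geq\alpha_i$, i.e. $\fphi-\alpha_i\geq 0$. Thus the ReLU is inactive on these indices, giving $c_i=\fphi-\alpha_i$ and therefore $c_i+\alpha_i=\fphi$ for all $i\in\Io$.

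Finally I would substitute this into the definition \eqref{eq:usso} of the ReLU-synthesis operator, now instantiated with the dual frame $\widetilde{\Phi}_{\Io}$:
\begin{equation*}
    \tilde{D}^{\alpha}_{\Io}\Ta x = \sum_{i\in\Io}(c_i+\alpha_i)\,\tilde{\phi}_i = \sum_{i\in\Io}\fphi\,\tilde{\phi}_i = x,
\end{equation*}
where the last equality applies the reconstruction identity from the first step with $z=x$. This closes the argument.

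As for difficulty, there is essentially no analytic obstacle; the content is entirely in the bookkeeping of which coefficients survive the thresholding and in the cancellation of the bias. The one point deserving emphasis is the role of the two distinct index sets. The additive shift by $\alpha_i$ built into $\tilde{D}^{\alpha}_{\Io}$ is designed to undo the $-\alpha_i$ from the affine part of the layer, but this cancellation recovers the true coefficient $\fphi$ only when the pre-activation is nonnegative — a property guaranteed solely by $\Io\subseteq\I$. Without that containment some $i\in\Io$ could yield $c_i=0$ instead of $\fphi-\alpha_i$, and the reconstruction would fail. I would therefore stress that this condition is not cosmetic but the exact hypothesis permitting one fixed dual, constructed once from $x_0$, to reconstruct the entire family of inputs $x$ whose active set contains $\Io$.
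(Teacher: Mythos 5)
Your proof is correct and follows essentially the same route as the paper: both arguments observe that $\Io\subseteq\I$ forces $\max(0,\fphi-\alpha_i)=\fphi-\alpha_i$ for all $i\in\Io$, so the bias shift in $\tilde{D}^{\alpha}_{\Io}$ cancels and the composition collapses to the standard dual-frame reconstruction $\sum_{i\in\Io}\fphi\,\tilde{\phi}_i=x$. Your additional remarks on why the containment hypothesis is essential are accurate but not needed for the proof itself.
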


\begin{proof}
    For $x\in K$ with $\Io \subseteq \I$, the operator composition in \eqref{eq:dual} reduces to the usual frame decomposition with $\Phi_{\Io}$,
    \begin{align*}
         \tilde{D}_{\Io}^{\alpha}\Ta x &= \sum_{i\in \Io} 
        \left(\max(0,\langle x,\phi_i\rangle-\alpha_i)+\alpha_i \right)\cdot \tilde{\phi}_i\\
        &=\sum_{i\in \Io} \langle x,\phi_i\rangle\cdot \tilde{\phi}_i
        =x.
    \end{align*}
\end{proof}
The condition $\Io \subseteq \I$ for a reference vector $x_0\in K$ means that we may use the same left-inverse $\tilde{D}_{\Io}^{\alpha}$ for all input elements $x$ that share at least all active elements with $\Phi_{\Io}$.
By re-writing the condition on the level of index sets, we can alternatively choose a reference sub-space via an index set $J$ instead of fixing a reference vector $x_0$.

\begin{corollary}[ReLU-dual II]\label{thm:leftinv2}
    Let $\Phi$ be $\alpha$-rectifying on $K$ for $\alpha \in \RR^m$ and choose $J\subseteq I$ such that $\Phi_J$ is a frame. Let $\tilde{\Phi}_J$ be a dual frame for $\Phi_J$ and $\tilde{D}_{J}^{\alpha}$ the associated ReLU-synthesis operator. Then for all $x\in K$ with $J\subseteq I_x^{\alpha}$ it holds that
    $$
        \tilde{D}_{J}^\alpha\Ta  x = x.
    $$
\end{corollary}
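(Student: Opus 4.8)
The plan is to treat this corollary as a reformulation of Theorem \ref{thm:leftinv} (ReLU-dual I), in which the reference \emph{vector} $x_0$ is replaced by a reference \emph{index set} $J$; the identity $\tilde{D}_{J}^\alpha\Ta x = x$ then follows from the same short computation. The one point deserving care is that $J$ is now prescribed directly rather than arising as $\Io$ for some $x_0\in K$. Literally invoking Theorem \ref{thm:leftinv} would require exhibiting an $x_0\in K$ with $\Io=J$, which is not guaranteed (one may take $J$ to be a strictly smaller subset of some active set that still spans), so I would instead repeat its argument with $\Phi_J$ playing the role of $\Phi_{\Io}$.

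First I would fix $x\in K$ with $J\subseteq \I$ and extract the key consequence of this containment: every index in $J$ is active for $x$, i.e.\ $\fphi\geq\alpha_i$ for all $i\in J$. Hence on exactly these coordinates $\relu$ does not threshold, so $\relu(\fphi-\alpha_i)=\fphi-\alpha_i$ for $i\in J$.

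Next I would substitute the output $\Ta x$ into the definition \eqref{eq:usso} of the ReLU-synthesis operator. Because the operator adds back $\alpha_i$ before synthesizing, and because the sum runs only over the active indices $i\in J$, the bias cancels coordinate-wise:
\begin{equation*}
    \tilde{D}_{J}^\alpha\Ta x = \sum_{i\in J}\bigl(\relu(\fphi-\alpha_i)+\alpha_i\bigr)\cdot\tilde{\phi}_i = \sum_{i\in J}\fphi\cdot\tilde{\phi}_i.
\end{equation*}
Finally, since $\Phi_J$ is a frame by hypothesis and $\tilde{\Phi}_J$ is a dual frame for it, the right-hand side is exactly the dual-frame reconstruction of $x$, and therefore equals $x$ by \eqref{framedecomp}--\eqref{eq:leftinv} applied to the sub-frame $\Phi_J$.

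I do not expect a genuine obstacle here; the whole content is the observation that $J\subseteq \I$ makes $\relu$ act as the shifted identity on the $J$-coordinates, reducing the nonlinear map to ordinary linear synthesis against a dual frame. The only bookkeeping to get right is that $\tilde{D}_{J}^\alpha$ has domain $\RR^m$ but synthesizes only over $J$, so one must verify that the coefficients it actually uses, $\bigl(\relu(\fphi-\alpha_i)\bigr)_{i\in J}$, are precisely the un-thresholded ones --- which is exactly what $J\subseteq \I$ guarantees. Note that the $\alpha$-rectifying hypothesis on $\Phi$ is not needed for the identity itself (only $\Phi_J$ being a frame is); it is what makes the statement non-vacuous, ensuring that admissible index sets $J$ and points $x$ with $J\subseteq\I$ are plentiful across $K$.
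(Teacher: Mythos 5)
Your proposal is correct and follows essentially the same route as the paper: the paper gives no separate proof for this corollary, treating it as the same one-line computation as Theorem \ref{thm:leftinv} with the index set $J$ in place of $\Io$, which is exactly the calculation you carry out. Your additional observations --- that one cannot literally invoke Theorem \ref{thm:leftinv} without exhibiting $x_0$ with $\Io=J$, and that the $\alpha$-rectifying hypothesis is not needed for the identity itself --- are accurate refinements of the paper's implicit argument.
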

The approach in the above corollary is particularly useful in the context of the bias estimation procedures described in Section \ref{sec:maxa}: Given a decomposition of $\Phi$ into sub-frames, either by all different most correlated bases or via the facets of $P_\Phi$, we can compute all associated left-inverses in advance and use them for reconstruction on demand. More precisely, for every $x\in K$ there is a sub-frame associated with a most correlated bases $J^*(x)$ such that $\tilde{D}_{J^*(x)}^\alpha$ is a left-inverse of $\Ta$. Similarly, if $\Phi$ is omnidirectional, then for every $x\in K$ there is a facet $F$ such that $\tilde{D}_{I_F}^\alpha$ is a left-inverse of $\Ta$. In both cases, there are only finitely many such sub-frames.
Summarizing, we have the following.

\begin{corollary}
    Let $\Phi$ be $\alpha$-rectifying on $K$. For any $x\in K$ there is $J\subseteq I$ such that $\tilde{D}_{J}^\alpha$ is a left-inverse of $\Ta$.
\end{corollary}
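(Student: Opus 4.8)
The plan is to recognize that this summarizing corollary is an immediate consequence of Corollary \ref{thm:leftinv2} (ReLU-dual II) once the right index set $J$ is selected for each point; no new machinery is required, and the entire content is in exhibiting a witnessing $J$ for an arbitrary $x$.

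First I would fix an arbitrary $x\in K$. By the hypothesis that $\Phi$ is $\alpha$-rectifying on $K$ (Definition \ref{alpharect}), the sub-collection of active frame elements $\Phi_{\I}$ is a frame for $\RR^n$. This is precisely the structural input the ReLU-dual results need, so the task reduces to feeding it into Corollary \ref{thm:leftinv2} with an appropriate choice of $J$.

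Next I would set $J:=\I$. Then $\Phi_J$ is a frame by the previous step, hence it admits a dual frame $\tilde{\Phi}_J$, and we may form the associated ReLU-synthesis operator $\tilde{D}_J^\alpha$. The point of taking $J$ to be exactly the active set at $x$ is that the hypothesis $J\subseteq I_y^\alpha$ of Corollary \ref{thm:leftinv2} holds trivially at the chosen point, since $\I\subseteq\I$. Invoking that corollary therefore yields $\tilde{D}_J^\alpha \Ta x = x$, which is the claimed reconstruction at $x$. Equivalently, whenever the relevant decomposition applies, one could take $J=J^*(x)$ (the most correlated basis) or $J=I_F$ for a facet $F$ with $x\in F_j^K$, since each of these is a subset of $\I$; this is exactly how the statement links back to the bias-estimation decompositions of Section \ref{sec:maxa}.

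I do not anticipate a genuine obstacle, as this is a packaging corollary rather than a new result. The only subtlety worth flagging explicitly is the reading of \emph{left-inverse}: the operator $\tilde{D}_J^\alpha$ inverts $\Ta$ not on all of $K$ but on the subset $\{y\in K : J\subseteq I_y^\alpha\}$, which contains $x$. Thus the assertion is that a local left-inverse exists at every point of $K$, drawn from a finite family (since $J$ ranges over subsets of the finite index set $I$), rather than that a single global left-inverse exists.
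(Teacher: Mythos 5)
Your proof is correct and takes essentially the same route as the paper: the corollary is obtained by instantiating Corollary \ref{thm:leftinv2} with a sub-frame contained in the active set, and your choice $J=\I$ (trivially a frame by the $\alpha$-rectifying hypothesis, trivially satisfying $J\subseteq \I$) is exactly one of the choices the paper endorses, alongside $J^*(x)$ and the facet sets $I_F$. Your closing remark that $\tilde{D}_J^\alpha$ is only a \emph{local} left-inverse, valid on $\{y\in K: J\subseteq I_y^\alpha\}$ and drawn from a finite family of precomputable operators, is precisely the intended reading of the statement.
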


The numerical implementation of the ReLU-synthesis is straightforward when using canonical duals of the sub-frames $\Phi_J$. A detailed discussion and corresponding pseudo-code can be found in Appendix C.

\subsubsection*{Excursion: Reconstruction from PReLU layers} There are various modifications of the ReLU activation function, one of them being the parametrized ReLU, or PReLU, given by $\operatorname{PReLU}_\gamma = \max(\gamma s, s)$ with $0<\gamma\leq 1$  \cite{he2015prelu}. As this is an injective activation function, the associated PReLU layer with weights given by $\Phi$ and any bias $\alpha$ is injective if and only if $\Phi$ is a frame. In this case, for any $x\in K$ we obtain a left-inverse of the PReLU layer by
\begin{align}
\begin{split}
    \tilde{D}^{\alpha}_{\gamma}:\mathbb{R}^m &\rightarrow \mathbb{R}^n\\
    (c_i)_{i \in I}&\mapsto \sum_{i\in \I} \left(c_i + \alpha_i \right)\cdot \tilde{\phi}_i + \sum_{i\in I\setminus\I} \gamma^{-1}\left(c_i + \alpha_i \right)\cdot \tilde{\phi}_i,
\end{split}
\end{align}
where $\widetilde{\Phi}=(\tilde{\phi}_i)_{i\in I}$ is any dual frame for $\Phi$.

\subsection{The frame algorithm for ReLU layers}
The frame algorithm is an iterative scheme that constructs a sequence of vectors in $\RR^n$ from given frame coefficients $(\fphi)_{i\in I}$ that converges to the input $x$ exponentially fast \cite{frames}. This sequence $(y_k)_{k=0}^\infty$ is defined as $y_0=\mathbf{0}$ and
\begin{equation}\label{eq:framealg}
    y_{k+1} = y_{k} + \lambda \sum_{i\in I}\big(\langle x,\phi_i\rangle - \langle y_{k},\phi_i\rangle\big) \phi_i 
\end{equation}
for $k\geq 0$. Letting $A,B$ be the optimal frame bounds for $\Phi$ then for $0<\lambda<\tfrac{B}{2}$ we have for all $k\geq 0$ that $\Vert x-y_{k+1} \Vert \leq \kappa_\lambda \Vert x-y_{k} \Vert$, where $\kappa_\lambda = \max\{ |1-\lambda A|, |1-\lambda B| \}$. Note that the optimal value for the parameter $\lambda$ is $\tfrac{2}{A+B}$. In practice, the frame algorithm is a great tool to do reconstruction in situations, where computing the exact solution with the canonical dual (or any dual) frame becomes too expensive.

Clearly, the procedure can be directly applied for the reconstruction of $x$ from the output of a ReLU layer by reducing the sum in \eqref{eq:framealg} to run only over $\I$ (the active frame elements). To see that this is really what we want, note that for all $i\in \I$ we have $\langle x,\phi_i\rangle = \relu(\langle x,\phi_i\rangle-\alpha_i) + \alpha_i$. Therefore, the differences in the sum in \eqref{eq:framealg} over $\I$ are indeed taken between the values of the unbiased output of the ReLU layer and $\langle y_k,\phi_i\rangle$.
Hence, if our frame $\Phi$ is $\alpha$-rectifying on $K$ then for any $x\in K$ we
obtain a ReLU-reconstruction sequence $(y_k)_{k=0}^\infty$ that satisfies $\Vert x-y_{k+1} \Vert \leq \kappa_{x,\lambda} \Vert x-y_k \Vert$ for all $k\geq 0$, where $\kappa_{x,\lambda} = \max\{ |1-\lambda A_x|, |1-\lambda B_x| \}$, and $A_x,B_x$ are the optimal frame bounds for $\Phi_{\I}$.

However, we can do better than this. Following the idea in \cite{alharbi2024sat}, we can extend the frame algorithm by using the bias values $\alpha_i$ for all inactive frame elements as a proxy for the lost frame coefficients. This gives an algorithm that always outperforms the naive approach in the setting where we use the optimal parameter for the full frame.
\begin{proposition}[ReLU frame algorithm]\label{prop:framealg}
    Let $\Phi$ be $\alpha$-rectifying for $\alpha\in \RR^m$ on $K\subseteq \RR^n$. For any $x\in K$, let the ReLU-reconstruction sequence $(y_k)_{k=0}^\infty$ be given as $y_0=\mathbf{0}$ and
    \begin{equation}\label{eq:framealg2}
        y_{k+1} = y_{k} + \lambda \sum_{i\in \I}\big({\langle x,\phi_i\rangle} - \langle y_{k},\phi_i\rangle\big) \phi_i + \lambda_0\sum_{i\in I_{y_{k}}^\alpha\setminus\I}\big(\alpha_i- \langle y_{k},\phi_i\rangle\big) \phi_i
    \end{equation}
    for all $k\geq 0$.
    Let $\lambda = \tfrac{2}{A+B}$. If $\lambda_0=0$ then $\Vert x-y_{k+1} \Vert \leq \kappa_{\lambda} \Vert x-y_{k} \Vert$ for all $k\geq 0$, where $\kappa_\lambda = 1-A_x\tfrac{2}{A+B}$. If $\lambda_0 = \tfrac{2}{A+B}$ and for every $k\geq 0$ the optimal lower frame bound for $\Phi_{\I}$ is strictly less than the optimal frame bound for $\Phi_{\I \cup I_{y_k}^\alpha }$ then there is $0<\varepsilon_{x,y_k}<1$ such that
    \begin{equation}
        \Vert x-y_{k+1} \Vert \leq (1-\varepsilon_{x,y_k}) \kappa_{\lambda} \Vert x-y_{k} \Vert.
    \end{equation}
\end{proposition}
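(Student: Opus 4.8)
The plan is to track the error vector $e_k = x - y_k$ and show that it obeys a contractive linear recursion driven by the frame operator of the \emph{active} sub-frame. Write $S_{\I}$ for the frame operator of $\Phi_{\I} = (\phi_i)_{i\in\I}$, which is a genuine frame operator because $\Phi$ is $\alpha$-rectifying, with optimal bounds $A_x \le B_x$ as its extreme eigenvalues. Subtracting \eqref{eq:framealg2} from $x$ and using $\langle x,\phi_i\rangle - \langle y_k,\phi_i\rangle = \langle e_k,\phi_i\rangle$ on $\I$ gives, for general $\lambda_0$,
\begin{equation*}
    e_{k+1} = (I - \lambda S_{\I})e_k - \lambda_0\sum_{i\in I_{y_k}^\alpha\setminus\I}\big(\alpha_i - \langle y_k,\phi_i\rangle\big)\phi_i.
\end{equation*}

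For $\lambda_0 = 0$ this is the pure recursion $e_{k+1} = (I - \lambda S_{\I})e_k$. Since $\Phi_{\I}\subseteq\Phi$, the sub-frame bounds are dominated, $A_x\le A$ and $B_x\le B$, so in particular $A_x + B_x \le A + B$. With $\lambda = \tfrac{2}{A+B}$ the spectrum of $I - \lambda S_{\I}$ lies in $[1-\lambda B_x,\,1-\lambda A_x]$; comparing endpoints, $|1-\lambda B_x|\le 1-\lambda A_x$ precisely because $A_x\le B_x$ and $A_x+B_x\le A+B$, while $A_x>0$ yields $\kappa_\lambda<1$. Hence $\|I-\lambda S_{\I}\| = 1-\lambda A_x = \kappa_\lambda$, which is the first claim.

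For $\lambda_0 = \lambda$ I would first rewrite the correction in terms of the error. Put $J_k = I_{y_k}^\alpha\setminus\I$ and $T_k = \I\cup I_{y_k}^\alpha$, and for $i\in J_k$ set $\delta_i = \alpha_i - \langle x,\phi_i\rangle > 0$ (positive since $i\notin\I$) and $\beta_i = \langle y_k,\phi_i\rangle - \alpha_i \ge 0$ (nonnegative since $i\in I_{y_k}^\alpha$). The identity $\alpha_i - \langle y_k,\phi_i\rangle = \langle e_k,\phi_i\rangle + \delta_i$ turns the recursion into
\begin{equation*}
    e_{k+1} = (I - \lambda S_{T_k})e_k - \lambda d_k,\qquad d_k = \sum_{i\in J_k}\delta_i\,\phi_i,
\end{equation*}
where $S_{T_k}$ is the frame operator of the enlarged active sub-frame $\Phi_{T_k}$ (a frame, since it contains $\Phi_{\I}$, so $A_{T_k}\ge A_x$). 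Repeating the spectral estimate for $S_{T_k}$ (again $A_{T_k}\le A$, $B_{T_k}\le B$) gives $\|(I-\lambda S_{T_k})e_k\|\le (1-\lambda A_{T_k})\|e_k\|$, and the hypothesis $A_x < A_{T_k}$ makes this \emph{strictly} smaller than $\kappa_\lambda\|e_k\|$, with headroom proportional to $A_{T_k}-A_x$. The conceptual heart is a descent identity: writing $e_{k+1}=e_k-\lambda h_k$ with $h_k = S_{T_k}e_k + d_k$, a short computation using $\langle e_k,\phi_i\rangle = -(\delta_i+\beta_i)$ on $J_k$ collapses to
\begin{equation*}
    \langle e_k, h_k\rangle = \sum_{i\in\I}\langle e_k,\phi_i\rangle^2 + \sum_{i\in J_k}\beta_i(\delta_i+\beta_i) \ge A_x\|e_k\|^2 + \sum_{i\in J_k}\beta_i(\delta_i+\beta_i),
\end{equation*}
so the guaranteed descent is exactly the $\lambda_0=0$ descent \emph{plus} a nonnegative bonus that is strictly positive as soon as some $\beta_i>0$, i.e.\ as soon as the proxy correction is genuinely active.

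The hard part is the overshoot term $d_k$: because $\langle e_k, d_k\rangle < 0$, the proxy replacement partially \emph{cancels} the contraction gained from $S_{T_k}$, so a balancing argument is required to certify a net strict improvement rather than just recovering $\kappa_\lambda$. I would control $d_k$ through the sharp bound $\delta_i \le |\langle e_k,\phi_i\rangle|$ (with equality iff $\beta_i=0$), which ties $\|d_k\|$ and the step-size term $\lambda^2\|h_k\|^2$ to the error energy in the $J_k$-directions — precisely the directions in which $S_{T_k}$ has already produced extra contraction — and then choose $\varepsilon_{x,y_k}$ from the resulting gain-minus-harm inequality. I expect this balancing to be the only delicate step, and I would flag that strict improvement genuinely requires the correction to be active ($\langle y_k,\phi_i\rangle > \alpha_i$ strictly for some $i\in J_k$): on the boundary $\langle y_k,\phi_i\rangle=\alpha_i$ the bonus vanishes, $\delta_i = |\langle e_k,\phi_i\rangle|$, and one recovers only the non-strict factor $\kappa_\lambda$ of the $\lambda_0=0$ case.
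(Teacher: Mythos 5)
Your $\lambda_0=0$ case is correct and is essentially the paper's argument (the classical frame-algorithm contraction for $\Phi_{\I}$, noting that $\lambda=\tfrac{2}{A+B}\leq\tfrac{2}{A_x+B_x}$ forces the factor $\max\{|1-\lambda A_x|,|1-\lambda B_x|\}=1-\lambda A_x$), and your error recursion and descent identity for $\lambda_0=\lambda$ are algebraically right. But the second claim --- the strict improvement $(1-\varepsilon_{x,y_k})\kappa_\lambda$, which is the actual content of the proposition --- is not proven: you stop where ``a balancing argument is required'' and only describe what you would attempt. Since $\|e_{k+1}\|^2=\|e_k\|^2-2\lambda\langle e_k,h_k\rangle+\lambda^2\|h_k\|^2$, your inequality for $\langle e_k,h_k\rangle$ controls only the middle term; without a matching bound on $\lambda^2\|h_k\|^2$ and a quantitative conversion of the bonus $\sum_{i\in J_k}\beta_i(\delta_i+\beta_i)$ into a factor strictly below $\kappa_\lambda$, no contraction statement follows. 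That is a genuine gap, not a routine verification left to the reader.

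The missing idea --- the paper's key step --- makes your balancing problem disappear entirely. For $i\in J_k$ one has $\alpha_i-\langle y_k,\phi_i\rangle=\gamma_i\langle x-y_k,\phi_i\rangle$ with $\gamma_i=\tfrac{\beta_i}{\delta_i+\beta_i}\in[0,1)$, so the proxy term is not an additive perturbation: in your notation, $\langle e_k,\phi_i\rangle+\delta_i=-\beta_i=\gamma_i\langle e_k,\phi_i\rangle$, hence
\begin{equation*}
h_k=S_{T_k}e_k+d_k=\sum_{i\in\I}\langle e_k,\phi_i\rangle\phi_i+\sum_{i\in J_k}\langle e_k,\gamma_i^{1/2}\phi_i\rangle\,\gamma_i^{1/2}\phi_i=S_\gamma e_k,
\end{equation*}
where $S_\gamma$ is the frame operator of the rescaled system $\Phi_\gamma=(\phi_i)_{i\in\I}\cup(\gamma_i^{1/2}\phi_i)_{i\in J_k}$. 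Thus $e_{k+1}=(I-\lambda S_\gamma)e_k$ exactly: the whole step is a classical frame-algorithm step for $\Phi_\gamma$, there is no overshoot to balance, and your spectral argument from the first part applies verbatim. Writing $A''$ for the smallest eigenvalue of $S_\gamma$ and $A'$ for the optimal lower bound of $\Phi_{\I\cup I_{y_k}^\alpha}$, the ordering $S_{\I}\leq S_\gamma\leq S_{T_k}\leq S$ gives the factor $1-\lambda A''$, and $S_\gamma\geq(1-\gamma_{\min})S_{\I}+\gamma_{\min}S_{T_k}$ with $\gamma_{\min}=\min_{i\in J_k}\gamma_i$ yields $A''\geq(1-\gamma_{\min})A_x+\gamma_{\min}A'>A_x$ whenever $\gamma_{\min}>0$ and $A_x<A'$, i.e.\ $1-\lambda A''=(1-\varepsilon_{x,y_k})\kappa_\lambda$ with $\varepsilon_{x,y_k}>0$. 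Your closing caveat is well taken and survives this route: if $\langle y_k,\phi_i\rangle=\alpha_i$ for all $i\in J_k$, then all $\gamma_i=0$, $S_\gamma=S_{\I}$, and only the non-strict bound remains --- precisely the degenerate case the paper concedes in the discussion after the proposition, where $\varepsilon_{x,y_k}=0$.
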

Since the proof is analogous to the one of Theorem 5.2 in \cite{alharbi2024sat}, we omit it here and refer to the appendix.

We note that using $\lambda = \lambda_0 = \tfrac{2}{A+B}$ as parameters in \eqref{eq:framealg2} is very natural as we may not always want to compute the optimal frame bounds for each activated sub-frame, but instead use a reasonably universal parameter that we only have to compute once. However, the design of the algorithm leaves it open to also use other parameters. A comprehensive analysis of which parameters work well is left as an open problem. Note further that the assumption on the frame bounds for $\Phi_{\I}$ and $\Phi_{\I \cup I_{y_k}^\alpha }$ is very mild. In fact, it is always fulfilled as long as $y_k$ does not lie in the span of one of the eigenvectors of the associated frame operator. In the worst case where this happens for all $k\geq 0$ then $\varepsilon_{x,y_k}=0$ and the extended frame algorithm is as fast as the naive one.

\subsection{Stability of the reconstruction}\label{sec:stab}
In this section, we revisit a result by \cite{puth22} on the lower Lipschitz bound of a ReLU layer and translate it into the language of frame theory as done in \cite{bandeira2014savingphase} and \cite{alharbi2024sat}. As a small extension, we present a result on the local lower Lipschitz stability. A general revision of the Lipschitz stability analysis of ReLU layers is left for future work.
\begin{definition}
    A frame $\Phi$ allows $\kappa$-stable $\alpha$-rectification on $K$ if there is $\kappa>0$ s.t.
    \begin{equation}\label{eq:stabglob}
        \| y-z \|^2 \leq \kappa\cdot \| \Ta y - \Ta z \|^2
    \end{equation}
    holds for all $y,z\in K$.
\end{definition}
First of all, we note that if $\Phi$ is $\alpha$-rectifying on $K$ then a frame-type inequality as \eqref{eq:framedef} holds for $\Ta$. That is, there are constants $0< A_\alpha\leq B_\alpha<\infty$ such that
\begin{equation}\label{reluframe}
    A_\alpha\cdot \|x\|^2\leq \|\Ta x\|^2
    \leq B_\alpha\cdot\| x\|^2
\end{equation}
for all $x\in K$. It is easy to see that the largest possibility of choosing the lower bound $A_\alpha$ in \eqref{reluframe} is the smallest lower frame bound among all possible active sub-frames $\Phi_{\I}$ with $x\in K$. Analogously, the smallest possibility for the upper bound $B_\alpha$ coincides with the largest of all upper frame bounds.
In \cite{puth22} it was shown that any $\alpha$-rectifying frame allows $(2m A_\alpha\inv)$-stable $\mathbf{0}$-rectification on $\RR^n$ but not $(A_\alpha\inv)$-stable $\mathbf{0}$-rectification. It remains an open problem whether this statement extends to the case of non-zero biases and if the factor $m$ can be replaced by a constant.

In general,
active sub-frames are naturally prone to have a bad lower frame bound, such that $A_\alpha\inv$ may become very large, and the problem becomes globally ill-conditioned. To get a better understanding of how stable the reconstruction process is for smaller portions of the data, we shall investigate the lower Lipschitz property locally.

\begin{definition}\label{locstab}
    For $x_0\in K$, a frame $\Phi$ allows $\kappa_{x_0}$-stable $\alpha$-rectification near $x_0$ if there is $\varepsilon > 0$ and $\kappa_{x_0}=\kappa(x_0)>0$ such that
    \begin{equation}\label{eq:locstab}
        \| y-z \|^2 \leq \kappa_{x_0} \cdot \| \Ta y - \Ta z \|^2
    \end{equation}
    holds for all $y,z \in \mathring{B}_{\varepsilon}(x_0).$
\end{definition}
Since locally, a ReLU layer is a linear map we might hope to use $A_\alpha\inv$ as a lower bound. In general, however, we can only guarantee that an $\alpha$-rectifying frame allows $A_\alpha\inv$-stable $\beta$-rectification for $\beta<\alpha$.
\begin{proposition}
    Let $\Phi$ be $\alpha$-rectifying on $K$. For $x_0\in K$ let $J=J(x_0)\subseteq \Io$ be such that $\Phi_{J}$ is a frame with lower frame bound $A_{J}$. Then $\Phi$ allows $A_{J}\inv$-stable $\beta$-rectification near $x_0$ for $\beta<\alpha$.
\end{proposition}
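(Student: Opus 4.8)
The plan is to exploit that the active index set stabilizes on a small ball around $x_0$, so that on this ball the layer $C_\beta$ acts affinely on the coordinates indexed by $J$, reducing the claim to the ordinary lower frame bound of $\Phi_J$. First I would note that $J \subseteq \Io$ gives $\langle x_0, \phi_i\rangle \geq \alpha_i$ for every $i \in J$, and since $\beta < \alpha$ this upgrades to the strict inequality $\langle x_0, \phi_i\rangle \geq \alpha_i > \beta_i$ for all $i \in J$. Using continuity of each functional $x \mapsto \langle x, \phi_i\rangle$ together with the finiteness of $J$, I would then pick $\varepsilon > 0$ small enough that $\langle y, \phi_i\rangle > \beta_i$ holds for every $y \in \mathring{B}_\varepsilon(x_0)$ and every $i \in J$, i.e. $J \subseteq I_y^\beta$ throughout the ball.

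The second step is the affine reduction. For any $y, z \in \mathring{B}_\varepsilon(x_0)$ and $i \in J$ the ReLU threshold is inactive, so $(C_\beta y)_i = \langle y, \phi_i\rangle - \beta_i$ and likewise for $z$; subtracting, the biases cancel and $(C_\beta y)_i - (C_\beta z)_i = \langle y-z, \phi_i\rangle$. Discarding the coordinates outside $J$ only decreases the norm, so I expect the chain
\begin{equation*}
    \|C_\beta y - C_\beta z\|^2 \ \geq\ \sum_{i\in J} |\langle y-z,\phi_i\rangle|^2 \ \geq\ A_J \|y-z\|^2,
\end{equation*}
where the last inequality is just the lower frame bound of $\Phi_J$ applied to $y-z$. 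Rearranging yields $\|y-z\|^2 \leq A_J\inv \|C_\beta y - C_\beta z\|^2$, which is exactly \eqref{eq:locstab} with $\kappa_{x_0} = A_J\inv$.

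The one point that needs care, and the reason the hypothesis is $\beta < \alpha$ strictly rather than $\beta = \alpha$, is the existence of the neighborhood in the first step: strictness is precisely what converts the closed condition $\langle x_0, \phi_i\rangle \geq \alpha_i$ into an open one that persists on a full ball. Were the gap allowed to vanish, some $i \in J$ could have $\langle x_0, \phi_i\rangle = \beta_i$, and then every neighborhood of $x_0$ would contain points where $\phi_i$ is inactive, destroying the affine identity $(C_\beta y)_i - (C_\beta z)_i = \langle y-z, \phi_i\rangle$ and hence the estimate. Beyond securing $\varepsilon$, nothing further is needed — no compactness and no optimization — only continuity and the defining frame inequality for $\Phi_J$, so I do not anticipate any real obstacle here.
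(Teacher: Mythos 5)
Your proof is correct and follows essentially the same route as the paper's: use the strict gap $\beta<\alpha$ to secure a ball $\mathring{B}_\varepsilon(x_0)$ on which $J\subseteq I_y^\beta$ for all $y$, then exploit that $C_\beta$ acts affinely on the coordinates indexed by $J$ so that the lower frame bound $A_J$ yields $\|y-z\|^2\leq A_J\inv\|C_\beta y-C_\beta z\|^2$. The only cosmetic difference is that you write out the lower Lipschitz inequality explicitly, whereas the paper invokes it as the bi-Lipschitz condition holding ``analog to the global case.''
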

\begin{proof}
    Let $x_0\in K$. There is $J=J(x_0)\subseteq I_{x_0}^{\alpha}$ such that $\Phi_J$ is a frame with frame bounds $0 < A_{J} \leq B_{J} < \infty$.
    Analog to the global case, the bi-Lipschitz condition
    \begin{equation}\label{eq:tb}
        A_{J} \cdot \| y-z \|^2 \leq \| \Ta y - \Ta z \|^2 \leq B_{J} \cdot \| y-z \|^2
    \end{equation}
    holds for all $y,z\in K$ with $I_y^{\alpha}, I_z^{\alpha} \supseteq J $. Let $\beta \in \RR^m$ such that
    $\langle x_0, \phi_i \rangle \geq \alpha_i>\beta_i$ for all $i\in J$. Hence, there is $\varepsilon > 0$ sufficiently small such that for $y,z\in \mathring{B}_{\varepsilon}(x_0)$ we have $I_y^{\beta}, I_z^{\beta} \supseteq J $. Since \eqref{eq:tb} still holds for $C_\beta$, we get that $\Phi$ allows $A_{J}\inv$-stable $\beta$-rectification near $x_0$.
\end{proof}

\begin{corollary}
    Let $\Phi$ be $\alpha$-rectifying on $K$, and $x_0\in K$ such that $\langle x_0, \phi_i \rangle > \alpha_i$ for all $i\in J$. Then $\Phi$ allows $A_{J}\inv$-stable $\alpha$-rectification near $x_0$.
\end{corollary}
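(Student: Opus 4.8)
The plan is to recognize that this corollary is the strict-inequality counterpart of the preceding Proposition, and that the strict inequality is precisely what allows us to retain the bias $\alpha$ itself instead of retreating to some $\beta<\alpha$. In the Proposition the only reason one passes to $\beta<\alpha$ is to ensure that every index of $J$ stays active throughout a neighborhood of $x_0$: the hypothesis there permits $\langle x_0,\phi_i\rangle=\alpha_i$, so an arbitrarily small perturbation of $x_0$ could push such an $i$ out of the active set. Here the assumption $\langle x_0,\phi_i\rangle>\alpha_i$ for all $i\in J$ removes that difficulty outright, and the rest of the argument reuses the lower bound already derived in the Proposition.

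Concretely, I would first set $\eta=\min_{i\in J}\big(\langle x_0,\phi_i\rangle-\alpha_i\big)$, which is strictly positive since $J$ is finite and each term is positive by hypothesis. Choosing $\varepsilon>0$ small enough that $\varepsilon<\eta\cdot\big(\max_{i\in I}\|\phi_i\|\big)\inv$, the Cauchy--Schwarz estimate $|\langle y-x_0,\phi_i\rangle|\leq\|y-x_0\|\,\|\phi_i\|$ gives, for every $y\in\mathring{B}_{\varepsilon}(x_0)$ and every $i\in J$, that $\langle y,\phi_i\rangle>\alpha_i$. Hence $J\subseteq\Iy$ for all such $y$; in particular any pair $y,z\in\mathring{B}_{\varepsilon}(x_0)$ satisfies both $J\subseteq\Iy$ and $J\subseteq I_z^{\alpha}$.

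It then remains to invoke the lower bound recorded in \eqref{eq:tb}. For indices $i\in J$ both $y$ and $z$ lie in the active (linear) regime of ReLU, so the $i$-th coordinate of $\Ta y-\Ta z$ equals $\langle y-z,\phi_i\rangle$; discarding the remaining nonnegative coordinates and using that $\Phi_J$ is a frame with lower bound $A_J$ yields
\begin{equation*}
    \|\Ta y-\Ta z\|^2\;\geq\;\sum_{i\in J}|\langle y-z,\phi_i\rangle|^2\;\geq\;A_J\,\|y-z\|^2 .
\end{equation*}
Rearranging produces $\|y-z\|^2\leq A_J\inv\,\|\Ta y-\Ta z\|^2$ for all $y,z\in\mathring{B}_{\varepsilon}(x_0)$, which is exactly $A_J\inv$-stable $\alpha$-rectification near $x_0$ in the sense of Definition \ref{locstab}.

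I do not expect a genuine obstacle here: the whole content is the passage from the non-strict hypothesis of the Proposition to the strict one, and the only quantitative step is shrinking $\varepsilon$ enough to preserve activity on $J$. The single point I would emphasize is that only the \emph{lower} frame inequality is required, so the nonlinearity of ReLU on indices outside $J$ causes no trouble --- those coordinates are simply dropped and can only increase $\|\Ta y-\Ta z\|^2$.
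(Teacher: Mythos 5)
Your proposal is correct and follows essentially the same route as the paper: the paper states this corollary without a separate proof precisely because, under the strict inequality $\langle x_0,\phi_i\rangle>\alpha_i$ for $i\in J$, one can keep $J$ inside $I_y^{\alpha}$ and $I_z^{\alpha}$ throughout a small ball around $x_0$ and then invoke the bi-Lipschitz bound \eqref{eq:tb} with $\alpha$ itself, exactly as you do. Your explicit choice of $\varepsilon$ via Cauchy--Schwarz and your spelled-out derivation of the lower bound (dropping the coordinates outside $J$, where ReLU acts linearly on $J$) are just the details the paper leaves implicit.
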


\subsection{Image of ReLU layers}
In the context of our work, it is specifically interesting to know what the image of a ReLU layers looks like. The impact is two-fold.
\begin{itemize}
    \item[(1)] To study the injectivity of a ReLU layer that applies to the output of a previous one it is crucial to know how its image looks like.
    \item[(2)] Reconstructing samples from the image of a ReLU layer yields a valuable method to generate new consistent data and understand the effect of their ReLU layer.
\end{itemize}
We give a partial answer to this question in the following. For bounded $K$ we can use the upper bound in \eqref{reluframe} to find the smallest closed non-negative ball in $\RR^m$ that contains $C_\alpha\left(K\right)$.
\begin{lemma}\label{lem:image}
    Let $\Phi$ be $\alpha$-rectifying on $K$ bounded with $M=\sup_{x\in K}\Vert x \Vert$. Letting $B_\alpha$ denote the largest optimal upper frame bound among all sub-frames $\Phi_{\I}$ with $x\in K$, then
    \begin{equation}\label{eq:estim}
        C_\alpha\left(K\right) \subseteq \mathbb{B}_{\sqrt{B_\alpha}M}^+, 
    \end{equation}
    where $\sqrt{B_\alpha}M$ is the minimal radius.
\end{lemma}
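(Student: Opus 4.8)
The plan is to reduce the two-part claim---containment and minimality of the radius---to a single computation of $\sup_{x\in K}\Vert C_\alpha x\Vert$. First observe that every coordinate of the output is a value of $\relu$, hence non-negative, so $C_\alpha(K)\subseteq\RR^m_+$. Consequently, for any radius $R$ one has $\mathbb{B}_R^+=\mathbb{B}_R\cap\RR^m_+\supseteq C_\alpha(K)$ if and only if $\mathbb{B}_R\supseteq C_\alpha(K)$, i.e.\ if and only if $R\geq\sup_{x\in K}\Vert C_\alpha x\Vert$. Thus both assertions of the lemma follow once one shows $\sup_{x\in K}\Vert C_\alpha x\Vert=\sqrt{B_\alpha}\,M$.

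For the containment (the inequality $\leq$) I would invoke the upper frame-type bound \eqref{reluframe}. Concretely, for any $x\in K$, writing $\I$ for the active set, the active coordinates satisfy $\relu(\langle x,\phi_i\rangle-\alpha_i)=\langle x,\phi_i\rangle-\alpha_i$, so that
\begin{equation*}
\Vert C_\alpha x\Vert^2=\sum_{i\in\I}\big(\langle x,\phi_i\rangle-\alpha_i\big)^2\leq\sum_{i\in\I}\langle x,\phi_i\rangle^2\leq B_{\I}\Vert x\Vert^2\leq B_\alpha\Vert x\Vert^2,
\end{equation*}
where $B_{\I}$ is the optimal upper frame bound of the active sub-frame $\Phi_{\I}$ and $B_\alpha$ is the largest of these over all $x\in K$. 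Combining with $\Vert x\Vert\leq M$ gives $\Vert C_\alpha x\Vert\leq\sqrt{B_\alpha}\,M$, which is exactly $C_\alpha(K)\subseteq\mathbb{B}_{\sqrt{B_\alpha}M}^+$.

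For minimality (the inequality $\geq$, i.e.\ that no smaller radius works) I would exhibit points whose image norm approaches $\sqrt{B_\alpha}\,M$. Since there are only finitely many achievable active sets, the maximum defining $B_\alpha$ is attained: there is $x'\in K$ whose active sub-frame $\Phi_J$, $J=I_{x'}^\alpha$, has optimal upper frame bound $B_\alpha$. This optimal bound is the top eigenvalue of the frame operator $S_J=\sum_{i\in J}\phi_i\phi_i^\top$, attained at a unit eigenvector $v$ with $\sum_{i\in J}\langle v,\phi_i\rangle^2=B_\alpha$. I would then consider points of $K$ lying in the direction $v$ with norm approaching $M$, for which the active set contains $J$, and argue that $\sum_{i\in J}(\langle x,\phi_i\rangle-\alpha_i)^2\to B_\alpha M^2$, forcing $\sup_{x\in K}\Vert C_\alpha x\Vert\geq\sqrt{B_\alpha}\,M$ and pinning down the radius.

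\textbf{The main obstacle} is precisely this last step: realizing both extremes simultaneously. The bound $\Vert C_\alpha x\Vert^2\leq B_\alpha\Vert x\Vert^2$ is tight only when $x$ aligns with the top eigenvector $v$ of the maximizing active sub-frame \emph{and} lies on $\{\Vert x\Vert=M\}$ \emph{and} the bias contributions $\alpha_i$ are negligible against $\langle x,\phi_i\rangle$; one must additionally ensure the chosen points actually lie in $K$ and keep $J$ active. Controlling all of these at once is where the boundedness of $K$, the value $M=\sup_{x\in K}\Vert x\Vert$, and the standing structural assumptions on $\Phi$ and $K$ enter, and it is the only part that is not a routine estimate. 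In the cleanest cases---e.g.\ $K=\mathbb{B}_M$ with the extremal direction keeping $J$ active, or $\alpha=\mathbf{0}$ so that the bias terms vanish---the alignment can be arranged and the radius $\sqrt{B_\alpha}\,M$ is seen to be optimal.
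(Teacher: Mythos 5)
Your proposal takes the same route as the paper's proof, which in its entirety reads: by \eqref{reluframe}, $\|\Ta x\|^2\le B_\alpha\|x\|^2\le B_\alpha M^2$; ReLU makes all output coordinates non-negative; ``since all estimations are sharp, the claim follows.'' Your reduction to computing $\sup_{x\in K}\|\Ta x\|$ and your containment chain are exactly that argument written out in detail. But written out, the containment step does not survive: the inequality $(\fphi-\alpha_i)^2\le\fphi^2$ for $i\in\I$ requires $\alpha_i\ge 0$; if $\alpha_i<0$, the active coordinate equals $\fphi+|\alpha_i|$, which can exceed $|\fphi|$. Negative biases are not a corner case in this paper but the typical one (Lemma \ref{lem:Br}; the triangle frame of Example \ref{ex:mb}), and then the containment claim itself fails: in $\RR^1$ take $\phi_1=1$, $K=\mathbb{B}_1=[-1,1]$, $\alpha_1=-1$; this frame is $\alpha$-rectifying on $K$ with $B_\alpha=1$ and $M=1$, yet $C_\alpha(K)=[0,2]\not\subseteq\mathbb{B}_1^+$. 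So your estimate (and equally the paper's appeal to \eqref{reluframe} with this value of $B_\alpha$) tacitly assumes $\alpha\ge 0$, and without that assumption no proof of the stated containment can exist.

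The minimality half has the same character. You correctly located the crux---the maximizing active set, the top eigenvector of its frame operator, and the norm-$M$ points of $K$ would have to be realized simultaneously---but you leave it unproved, and it is in fact unprovable in this generality: for $\Phi=(1,-1)$ in $\RR^1$, $\alpha=\mathbf{0}$, $K=[-1,1]$, the active set $\{1,2\}$ occurs only at $x=0$, so $B_\alpha=2$ and $M=1$, while $\|\Ta x\|=|x|\le 1$ for all $x\in K$; the minimal radius is $1$, not $\sqrt{2}$. The paper disposes of this with ``since all estimations are sharp, the claim follows,'' which conflates individual sharpness of the two estimates with their joint attainment---precisely the obstacle you named. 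So your attempt mirrors the paper's proof and is honest about where it is soft; but as a standalone argument it establishes neither half of the lemma, and neither half can be established without extra hypotheses (e.g.\ $\alpha\ge 0$ for the containment, and some compatibility between $K$, $\Phi$, and $\alpha$ forcing the extremes to be attained at a common point for the minimal radius).
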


\begin{proof}
    By \eqref{reluframe}, for $x\in K$ we have that $\|\Ta x\|^2 \leq B_\alpha \|x\|^2 \leq B_\alpha M^2$. The application of the ReLU function then corresponds to the projection onto the non-negative part of $\mathbb{B}_{\sqrt{B_\alpha}M}$, i.e., $\mathbb{B}_{\sqrt{B_\alpha}M}^+$. Since all estimations are sharp, the claim follows.
\end{proof}
As already mentioned, understanding the images of specific sets under ReLU layers is crucial to understanding how data is processed and passed on to the next layer. The above lemma is just a small step towards revealing this knowledge which can be used for unraveling certain behaviors of neural networks, and further, enhancing their interpretability and transparency.

\section{Conclusion}
This manuscript studies the injectivity of ReLU layers and the exact recovery of input vectors from their output using frame theory as a tool. Among many basic properties and insights about ReLU layers on bounded domains, the main theoretical contribution is three different characterizations of the injectivity of ReLU layers that together provide a complete picture of its injectivity behavior as a non-linear deterministic operator. A significant portion of the research focuses on the computation of a maximal bias for a given frame $\Phi$ and a domain $K$, such that the associated ReLU layer is injective on $K$. This characterization is particularly interesting as it allows us to apply the theoretical results in practical applications. We discuss two different methods to approach this, both of which have distinct advantages and disadvantages, and provide algorithmic solutions to compute approximations of a maximal bias in practice.
The second part of this work is devoted to the derivation of reconstruction formulas for injective ReLU layers, based on the concept of duality in frame theory. A brief local stability analysis of the reconstruction operator completes the discussion.

In summary, this paper provides a methodology on how to study the channeling of information in ReLU layers with biases and given input data, made possible by using frame theory as a tool. The results are designed in a general, yet, accessible way such that they may stimulate further theoretical research, but are also directly applicable in practice. While we are pleased to contribute to advancing the understanding of these fundamental and ubiquitous building blocks of neural networks, many critical aspects remain to be explored. A central question in this context is how information propagates through ReLU \textit{networks}, so how can we rigorously characterize the injectivity of the composition of ReLU layers.

\section*{Acknowledgement}
D. Haider is recipient of a DOC Fellowship of the Austrian Academy of Sciences at the Acoustics Research Institute (A 26355). The work of P. Balazs was supported by the FWF projects LoFT (P 34624) and NoMASP (P 34922). 
The authors would particularly like to thank Daniel Freeman for his valuable input during very enjoyable discussions and Hannah Eckert for her work on the numerical experiments.


\bibliographystyle{siamplain}
\bibliography{references}

\clearpage
\section*{Appendix}

\subsection*{A - Remarks on Admissibility and Directed Spanning Sets}
With this short comment, we aim to complete the circle between three perspectives to characterize the injectivity of a ReLU layer on $\RR^n$.
In the work by Bruna et al. \cite{bruna14} the injectivity of a ReLU layer, or \textit{half-rectification operator} was linked to an admissibility condition of a $J\subseteq I$. There, $J$ is called admissible for $\Phi$ and $\alpha$ if
\begin{equation*}
    \bigcap_{i\in J}\{x\in\mathbb{R}^n:\langle x,\phi_i\rangle > \alpha_i\}\cap \bigcap_{i\notin J}\{x\in\mathbb{R}^n:\langle x,\phi_i\rangle < \alpha_i\}\neq \emptyset.
\end{equation*}
Puthawala et al. in \cite{puth22} already pointed out that Proposition $2.2$ in \cite{bruna14} is not exactly equivalent to the injectivity of a ReLU layer.
However, with a slight modification to
\begin{equation*}
    \bigcap_{i\in J}\{x\in\mathbb{R}^n:\langle x,\phi_i\rangle \geq \alpha_i\}\cap \bigcap_{i\notin J}\{x\in\mathbb{R}^n:\langle x,\phi_i\rangle < \alpha_i\}\neq \emptyset,
\end{equation*}
this stands in direct relation to the index sets $\I$, as introduced in Definition \ref{alpharect} in the present manuscript. Indeed, with this modified definition, $J$ is $\alpha$-admissible for $\Phi$ if and only if there is $x\in\mathbb{R}^n$ such that $J=\I$. 
As a consequence, we have that the equivalence of $(i)$ and $(ii)$ in Corollary \ref{thm:bigthm} here with $K=\RR^n$, Proposition $2.2$ in \cite{bruna14} with the modified admissibility condition, and Theorem $2$ in \cite{puth22} are equivalent.

\subsection*{B - Omnidirectionality}
We prove the statement about omnidirectionality mentioned in Approach B. of Section \ref{sec:maxa}: By adding a single vector, any non-omnidirectional frame can be made omnidirectional. Furthermore, we recall how omnidirectionality can be checked numerically as in \cite{behr18}.
\begin{lemma}
Let $\Phi$ be a non-omnidirectional frame, then $$\Phi'=\left( \Phi , -\frac{\sum_{i\in I} \phi_i}{\|\sum_{i\in I} \phi_i\|} \right)$$ is omnidirectional.
\end{lemma}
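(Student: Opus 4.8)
The plan is to work entirely through the equivalent characterization of omnidirectionality recorded in the excerpt: a spanning collection is omnidirectional exactly when $0$ lies in the interior of its inscribing polytope, equivalently when no closed half-space contains all of its elements. Translating this via the separating-hyperplane theorem, a frame $\Psi$ fails to be omnidirectional if and only if there exists a direction $v\neq 0$ with $\langle v,\psi\rangle\leq 0$ for every $\psi\in\Psi$. I would therefore argue by contraposition: assuming $\Phi'$ is not omnidirectional, I would take such a witnessing $v$ and derive a contradiction with the hypothesis that $\Phi$ is a frame (and hence spans $\RR^n$). Note that $\Phi'\supseteq\Phi$ is automatically a frame, so the characterization applies to it.

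First I would check that $\psi:=-s/\|s\|$ with $s:=\sum_{i\in I}\phi_i$ is well defined, i.e.\ $s\neq 0$. This already exploits non-omnidirectionality of $\Phi$: choosing the witness $v\neq 0$ with $\langle v,\phi_i\rangle\leq 0$ for all $i$, if $s=0$ then summing gives $0=\langle v,s\rangle=\sum_{i\in I}\langle v,\phi_i\rangle$, a vanishing sum of non-positive terms, so $\langle v,\phi_i\rangle=0$ for all $i$; since $\Phi$ spans $\RR^n$ this forces $v=0$, a contradiction. For the main step, suppose toward a contradiction that $\Phi'$ is not omnidirectional, so there is $v\neq 0$ with $\langle v,\phi_i\rangle\leq 0$ for all $i\in I$ and also $\langle v,\psi\rangle\leq 0$. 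Summing the first family of inequalities yields $\langle v,s\rangle\leq 0$, hence
\begin{equation*}
    \langle v,\psi\rangle=-\frac{\langle v,s\rangle}{\|s\|}\geq 0 .
\end{equation*}
Together with $\langle v,\psi\rangle\leq 0$ this forces $\langle v,s\rangle=0$; but $\langle v,s\rangle=\sum_{i\in I}\langle v,\phi_i\rangle$ is a sum of non-positive terms equal to zero, so $\langle v,\phi_i\rangle=0$ for every $i\in I$, and since $\Phi$ is a frame this gives $v=0$, the desired contradiction.

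I expect the one genuinely delicate point to be fixing the correct form of the omnidirectionality criterion at the outset, namely that $0\in\mathring{P}_\Phi$ corresponds precisely to the \emph{non}-existence of a nonzero $v$ with $\langle v,\phi_i\rangle\leq 0$ for all $i$ (weak inequalities, half-spaces through the origin). Once this is settled via the separation argument covering both the cases $0\notin P_\Phi$ and $0\in\partial P_\Phi$, the remainder is just the two-line sign computation above, so no further estimates or casework are needed. It is worth emphasizing that the same elementary observation — a vanishing sum of non-positive inner products forces each to vanish, which a frame cannot tolerate — is what drives both the well-definedness of $\psi$ and the final contradiction.
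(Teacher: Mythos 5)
Your proof is correct, but it takes a genuinely different route from the paper's. The paper argues constructively: after noting (as you do) that $s:=\sum_{i\in I}\phi_i=0$ would already make $\Phi$ omnidirectional, it exhibits $0$ explicitly as a strictly positive convex combination of \emph{all} elements of $\Phi'$, with weights proportional to $1$ on each $\phi_i$ and $\|s\|$ on the appended vector $-s/\|s\|$; since $\Phi'$ spans $\RR^n$, such a combination certifies $0\in\mathring{P}_{\Phi'}$ directly from the definition of omnidirectionality. You instead work with the dual characterization (no nonzero $v$ with $\langle v,\phi\rangle\le 0$ for all $\phi\in\Phi'$) and run a contradiction through a sign computation: summing the inequalities over $I$ forces $\langle v,s\rangle=0$, hence $\langle v,\phi_i\rangle=0$ for every $i$, hence $v=0$ because $\Phi$ spans. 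Both arguments use the hypothesis in the same way (only to secure $s\neq 0$), and your vanishing-sum-of-nonpositive-terms observation is sound. What the paper's route buys is self-containedness: it needs no separation theorem, only the standard fact that a strictly positive convex combination of points with full-dimensional hull lies in the interior. What your route buys is that it leans on the half-space reformulation the paper itself states as an equivalent definition, and it avoids producing explicit coefficients at all — indeed the coefficients displayed in the paper's proof do not quite sum to one (they sum to $(m+\|s\|)/(1+\|s\|)$) and need rescaling, a slip your argument sidesteps. Your one acknowledged delicate point — that non-omnidirectionality yields a nonzero $v$ with \emph{weak} inequalities, covering both $0\notin P_{\Phi'}$ and $0\in\partial P_{\Phi'}$ — is handled correctly by the supporting/separating hyperplane theorem, and that is the only direction of the equivalence your argument actually needs.
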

\begin{proof}
At first, note that if $\sum_{i\in I} \phi_i=0$, then $\Phi$ is already omnidirectional. Let $c_1,...,c_m=\frac{1}{1+\|\sum_{i\in I} \phi_i\|}$ and $c_{m+1} = \frac{\|\sum_{i\in I} \phi_i\|}{1+\|\sum_{i\in I} \phi_i\|}$ and $\phi_{m+1} = -\frac{\sum_{i\in I} \phi_i}{\|\sum_{i\in I} \phi_i\|}$, then
\begin{equation}\label{eq:omni2}
    \sum_{i=1}^{m+1} c_i\cdot \phi_i = 0 .
\end{equation}
Note that $c_i > 0$ for all $i=1,\dots ,m+1$ and $\sum_{i=1}^{m+1} c_i = 1$. Hence, in \eqref{eq:omni2} we wrote $0$ as a convex combination of \emph{all} elements of $\Phi'$. This implies that $0\in \mathring{P}_{\Phi'}$, hence $\Phi'$ is omnidirectional.
\end{proof}
Regarding the verification of omnidirectionality, let $D$ be the synthesis matrix associated with $\Phi$, i.e., it consists of the column vectors $\phi_i$ for $i\in I$. Then, verifying omnidirectionality is equivalent to the existence of a solution for the convex optimization problem $\min\ \Vert Dc \Vert \ \text{subject to}\  c > 0$.

\begin{figure}[t]
    \begin{subfigure}[t]{0.49\textwidth}
        \centering
        \includegraphics[width=0.5\textwidth]{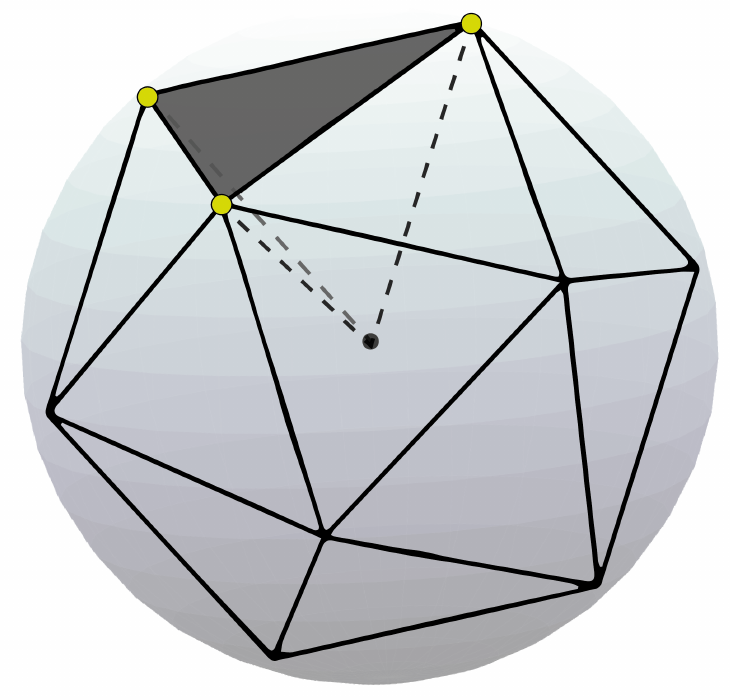}
    \end{subfigure}%
    \hfill
    \begin{subfigure}[t]{0.49\textwidth}
        \centering
        \includegraphics[width=0.5\textwidth]{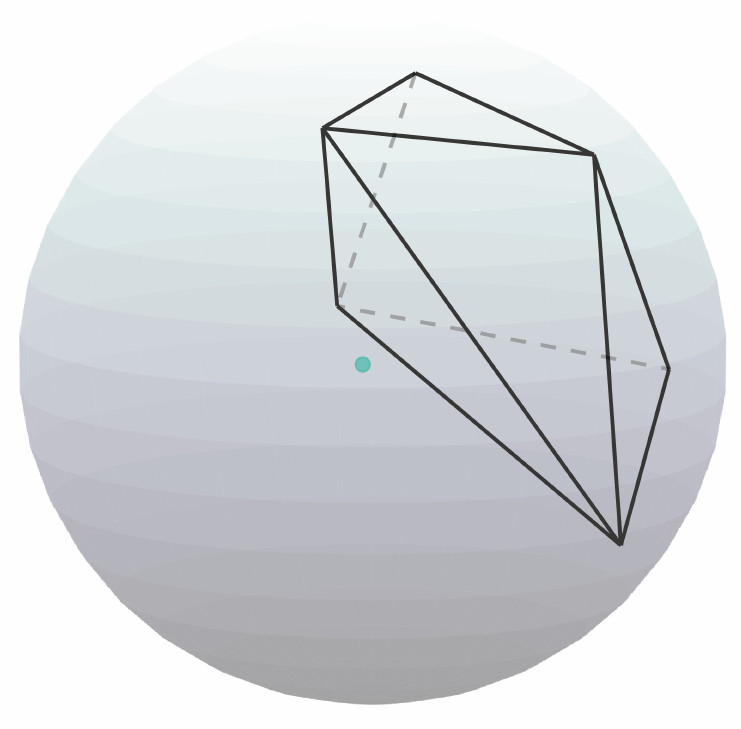}
    \end{subfigure}%

    \caption{Left: The frame is omnidirectional. Right: The frame is not omnidirectional.}
    \label{fig:omni}
\end{figure}

\subsection*{C - Algorithms}
We discuss the implementation of the presented algorithmic approaches and provide detailed pseudo-code. Our Python implementations can be found under \href{https://github.com/danedane-haider/Alpha-rectifying-frames}{https://github.com/danedane-haider/Alpha-rectifying-frames}.\\

\textbf{C1. Sampling-based bias estimation.}
Given a frame $\Phi$, a data domain $K$, and a sequence of samples $X_N\subset K$, Algorithm \ref{alg:mcbe} demonstrates the sampling-based bias estimation presented in Theorem \ref{thm:samp}. The samples $x_k\in X_N$ can be chosen to be random samples, e.g., $x_k \sim \mathcal{U}(K)$ for suitable $K$. Assuming the frame to be full-spark, then $J^*(x_k)$ consists of the indices for the largest $n$ frame coefficients $ (\langle x_k,\phi_i\rangle)_{i\in I}$. 

\begin{algorithm}[hbt!]
\begin{tcolorbox}[colback=lightgray!25!white, colframe=lightgray!25!white, width=\textwidth, sharp corners, boxsep=5pt, left=2mm, right=2mm, top=0mm, bottom=0mm]
\caption{Sampling-based approach for approximating $\alpha_K^{\flat}$}
\label{alg:mcbe}
\begin{algorithmic}
    \State \textbf{Input:} $\Phi, X_N, K$
    \State initialize $(\alpha^{(0)}) = \alpha_{\Phi}^\Delta$, $k=0$
    \For{$z$ in $X_N$}
        \State compute $(\langle z,\phi_i\rangle)_{i\in I}$
        \State get $J^*(z)$
        \State update $(\alpha^{(k+1)})_{i} \leftarrow \min \{ \langle z,\phi_i \rangle, (\alpha^{(k)})_{i} \}$ for all $i\in J^*(z)$
        \State $k=k+1$
    \EndFor
\end{algorithmic}
\end{tcolorbox}
\vspace{-0.7cm}
\end{algorithm}
\noindent
The for-loop can be replaced with a while-loop conditioned on $\Vert \alpha^{(k+steps)}- \alpha^{(k)}\Vert>\varepsilon>0$, where the parameter $steps$ determines how many updates should be done before checking the condition. This is very useful to avoid early stopping.\\

\textbf{C2. Polytope bias estimation.}
Given a frame $\Phi$ and a bounded domain $K$. The vertex-facet relations for $P_\Phi$ encoded in $I_{F_j}$ can be computed with convex hull algorithms, e.g., using the property \texttt{simplices} from \texttt{scipy.spatial.ConvexHull} in Python.
In the following, we demonstrate how to compute the biases from Proposition \ref{prop:pbeall}.\\

\noindent
$(i)$ $K=\partial P_\Phi:$ Recall that $\alpha_{\Phi}^\Delta$ is given by
\begin{equation}
    \left(\alpha_{\Phi}^\Delta\right)_i = \min_{
    \substack{\ell \in I_{F_j}\\j:\phi_i\in F_j}
    } \langle \phi_{\ell},\phi_i \rangle.
\end{equation}

\begin{algorithm}[hbt!]
\begin{tcolorbox}[colback=lightgray!25!white, colframe=lightgray!25!white, width=\textwidth, sharp corners, boxsep=5pt, left=2mm, right=2mm, top=0mm, bottom=0mm]
\caption{PBE for $\partial P_\Phi$}
\label{alg:pbe1}
\begin{algorithmic}
\State \textbf{Input:} $\Phi$
\State compute $I_{F_j}$ for all facets
\For{$j=1, \dots, \#$facets}
    \State $\beta_j = \min_{k<\ell\in I_{F_j} } \langle \phi_k,\phi_\ell\rangle$
    \EndFor
    \For{$i=1,\dots,m$}
        \State $\left(\alpha_{\Phi}^\Delta\right)_i = \min\limits_{j : i\in I_{F_j}} \beta_j$
    \EndFor
\end{algorithmic}
\end{tcolorbox}
\vspace{-0.7cm}
\end{algorithm}
\noindent
$(ii)$ $K=\mathbb{S}:$ Recall that $\alpha_{\mathbb{S}}^\Delta$ is given by
\begin{equation}\label{eq:as}
    \left(\alpha_{\mathbb{S}}^\Delta\right)_i = \min\{\min_{
    \substack{y \in F_j^\mathbb{S}\\j:\phi_i\in F_j}
    } \langle y , \phi_i \rangle, \left(\alpha_{\Phi}^\Delta\right)_i\}.
\end{equation} 
First, we show that for fixed $i\in I_{F_j}$,
\begin{equation}\label{eq:conv}
    \min_{
    \substack{y \in F_j^\mathbb{S}\\j:\phi_i\in F_j}
    } \langle y , \phi_i \rangle
\end{equation}
can be computed using convex linear programs.
Letting $C_{I_{F_j}}, D_{I_{F_j}}$ denote the analysis and synthesis operator associated with $\Phi_{I_{F_j}}$, respectively. We can write any $x\in F_j^{\mathbb{S}}$ as $x = \sum_{\ell\in I_{F_j}} c_\ell \phi_\ell=D_{I_{F_j}}c$ for some vector $c\geq0$. So for any $i\in I_{F_j}$ the solution of \eqref{eq:conv} is found by solving the linear program
\begin{align}\label{eq:opt}
\begin{split}
    \min_{j:\phi_i\in F_j}\ \left(C_{I_{F_j}} D_{I_{F_j}}c\right)_i \\
    \text{subject to}\  c &\geq 0\\
    \|D_{I_{F_j}} c\|_2 &= 1.
\end{split}
\end{align}
If $\left(\alpha_{\Phi}^\Delta\right)_i<0$, then the above minimum is negative since $\left(\alpha_{\mathbb{S}}^\Delta\right)_i\leq\left(\alpha_{\Phi}^\Delta\right)_i<0$. Therefore, we can replace $\|D_{I_{F_j}} c\|_2=1$ by $\|D_{I_{F_j}} c\|_2\leq 1$ making the problem convex.\\

\begin{algorithm}[hbt!]
\begin{tcolorbox}[colback=lightgray!25!white, colframe=lightgray!25!white, width=\textwidth, sharp corners, boxsep=5pt, left=2mm, right=2mm, top=0mm, bottom=0mm]
\caption{PBE for $\mathbb{S}$}
\label{alg:pbe2}
\begin{algorithmic}
\State \textbf{Input:} $\Phi$
\State compute $\alpha_\Phi^\Delta$
    \For{$i=1,\dots,m$}
        \If{$\left(\alpha_{\Phi}^\Delta\right)_i \geq 0$} 
                    \State $\left(\alpha_{\mathbb{S}}^\Delta\right)_i \gets \left(\alpha_{\Phi}^\Delta\right)_i$
                \Else
                    \State $\left(\alpha_{\mathbb{S}}^\Delta\right)_i \gets $ solution of \eqref{eq:opt}
        \EndIf
    \EndFor
\end{algorithmic}
\end{tcolorbox}
\vspace{-0.7cm}
\end{algorithm}

\noindent
$(iii)$ $K=\mathbb{D}_{r,s}:$ Let $0\leq s<r$  and recall that $\alpha_{\mathbb{B}}^\Delta$ is given by
\begin{equation}\label{eq:B}
    \left(\alpha_{\mathbb{B}}^\Delta\right)_i = \min\{s,\left(\alpha_{\mathbb{S}}^\Delta\right)_i\}.
\end{equation}
One gets the general case by scaling with $r^{-1}$. The case $s=0$ yields a bias estimation for $\mathbb{B}_r$. Since \eqref{eq:B} depends on $\alpha_{\mathbb{S}}^\Delta$ in a trivial way, we omit the algorithm.\\


\noindent
$(iv)$ $K=\mathbb{B}^+_r:$ Let $e\in \RR^m$ be arbitrary and recall that $\alpha^{\mathbb{B}^+}$ is given by
\begin{equation}
\left(\alpha^\Delta_{\mathbb{B}^+}\right)_i=
\begin{cases}
    \left(\alpha^\Delta_\mathbb{B}\right)_i &\text{ for } i \in I^+\\
    s &\text{ else,} 
\end{cases}
\end{equation}
where $s$ is arbitrary.
The crux here is to compute the index set $ I^+ = \bigcup_{j\in J^+} I_{F_j}$ defined via $J^+=\{j\in I : F_j\cap \RR^n_+\neq \emptyset\}$. One way to verify that $F_j\cap \RR^n_+\neq \emptyset$ is to check the feasibility of the convex optimization problem
\begin{align}\label{eq:opt++}
   \min\ \|D_{I_{F_j}}c\|_2 \nonumber \\
   \text{subject to}\ c&\geq 0\\
   \sum_i c_i &= 1.\nonumber
\end{align}
If \eqref{eq:opt++} has a solution for the facet $F_j$, then there is $c\in \RR^n_+$ that can be written as a convex linear combination of the vertices of $F_j$, hence, $F_j\cap \RR^n_+\neq \emptyset$.\\

\textbf{C3. ReLU-duals and reconstruction.}
For $J\subseteq I$ we denote by $C_J$ the analysis operator for the collection $\Phi_J$. This corresponds to the $\vert J \vert \times n$ matrix $C_J$ consisting of the row vectors $\phi_i$ for $i\in J$. Algorithm \ref{alg:reludual2} describes how to build the synthesis matrices for doing ReLU-synthesis for an index set $J$ from Corollary \ref{thm:leftinv2}.

\begin{algorithm}[hbt!]
\begin{tcolorbox}[colback=lightgray!25!white, colframe=lightgray!25!white, width=\textwidth, sharp corners, boxsep=5pt, left=2mm, right=2mm, top=0mm, bottom=0mm]
\caption{Construction of the matrix for ReLU-synthesis}
\label{alg:reludual2}
\begin{algorithmic}
    \State \textbf{Input:} $\Phi, J=\{i_1,\dots, i_{|J|}\}\subseteq I$ such that $\Psi = \Phi_J$ is a frame
    \State $S^{-1}_J\gets\left( (C_J)^\top C_J \right)^{-1}$
    \State $\Tilde{D}_J \gets
        \begin{pmatrix}
            \vert & \vert &  & \vert \\ \\
            S^{-1}_J \psi_{i_1} & S^{-1}_J \psi_{i_2} & \cdots & S^{-1}_J \psi_{i_{\vert J \vert}}\\ \\
            \vert & \vert &  & \vert
        \end{pmatrix}$
    \State 
\end{algorithmic}
\end{tcolorbox}
\vspace{-0.7cm}
\end{algorithm}
\noindent
Note that if we insert columns of zeros in $\Tilde{D}_J$ for all coordinates that have not been activated, then the resulting matrix would be the synthesis operator associated with a non-canonical dual frame for $\Phi$ in the classical sense. Applying it to an output vector that is restricted to only the coordinates in $J$, as presented here, computes the corresponding input by avoiding unnecessary multiplications with zeros. Finally, we want to perform the actual reconstruction: Given $z = \Ta x_0$ for some $x_0 \in K$, we can read off $\Io$ from $z$ directly (under the assumption that $\langle x_0, \phi_i \rangle \neq \alpha_i$ for all $i\in I$). Assuming that we have a suitable list of index sets of sub-frames (e.g., all most correlated bases of all facet sub-frames), finding those that are contained in $\Io$ is easy. Choose one, say $J$. The choice $J=\Io$ is valid too. Then the matrix $\Tilde{D}_J$ provides reconstruction as follows.

\begin{algorithm}[hbt!]
\begin{tcolorbox}[colback=lightgray!25!white, colframe=lightgray!25!white, width=\textwidth, sharp corners, boxsep=5pt, left=2mm, right=2mm, top=0mm, bottom=0mm]
\caption{Applying the ReLU-synthesis: Reconstruction of $x_0$ from $z = \Ta x_0$}
\label{alg:recon}
\begin{algorithmic}
    \State \textbf{Input:} $\Phi, \alpha, z$
    \State find $J\subseteq \Io$ such that $\Phi_J$ is a frame
    \State $ z \gets z+\alpha $ (unbias)
    \State restrict to $J$ via $\zeta \gets (z_i)_{i\in J} $
    \State reconstruct $\Tilde{D}_J\zeta=x_0$ (see Algorithm \ref{alg:reludual2})
\end{algorithmic}
\end{tcolorbox}
\vspace{-0.7cm}
\end{algorithm}

\textbf{C4. ReLU frame algorithm.}
For the sake of completeness, we give a proof of Proposition \ref{prop:framealg} that follows the lines of the one of Theorem 5.2 in \cite{alharbi2024sat}.


\begin{proof}[Proof of Proposition \ref{prop:framealg}]
    At first note that applying the (classical) frame algorithm \eqref{eq:framealg} with $\Phi_{\I}$ and $\lambda = \tfrac{2}{A+B}$ gives the constant $\kappa_\lambda = 1-A_x\tfrac{2}{A+B}$ since $\tfrac{2}{A+B}<\tfrac{2}{A_x+B_x}$. Now, for every $i\in I_{y_{k}}^\alpha\setminus\I$ we define $$\gamma_i = \frac{\alpha_i-\langle y_k,\phi_i\rangle}{\langle x-y_k,\phi_i\rangle}$$ and note that $0\leq\gamma_i<1$. It is easy to see that the extended frame algorithm \eqref{eq:framealg2} constructs the same sequence of vectors $(y_k)_{k=0}^\infty$ as applying the (classical) frame algorithm \eqref{eq:framealg} using the frame $\Phi_{\gamma}=(\phi_i)_{i\in \I}\cup (\gamma^{\nicefrac{1}{2}}\phi_i)_{i\in I_{y_{k}}^\alpha}$. Let $A'$ denote the optimal lower frame bound for $\Phi_{\I \cup I_{y_k}^\alpha }$ and $A''$ the one for $\Phi_{\gamma}$. Then, by assumption that $A_x<A'$, together with $\gamma_i<1$, we have $A_x<A'<A''\leq A$. Applying the convergence result for the (classical) frame algorithm for $\Phi_\gamma$ gives
    \begin{equation}
        \Vert x-y_{k+1} \Vert \leq \left(1-A''\tfrac{2}{A+B}\right)\Vert x-y_{k} \Vert \leq (1-\varepsilon_{x,y_k})\underbrace{\left(1-A_x\tfrac{2}{A+B}\right)}_{\kappa_{\lambda}} \Vert x-y_{k} \Vert,
    \end{equation}
    where $0<\varepsilon_{x,y_k}<1$.
\end{proof}

\end{document}